\providecommand{\tabularnewline}{\\}
\providecommand{\algorithmname}{Algorithm}
\setlist[itemize]{leftmargin=1em}
\setlist[enumerate]{leftmargin=1em}
\newcommand{\by}{\bm{y}}
\newcommand{\bP}{\bm{P}}
\newcommand{\cA}{\mathcal{A}}
\newcommand{\cE}{\mathcal{E}}
\newcommand{\cG}{\mathcal{G}}
\newcommand{\cL}{\mathcal{L}}
\newcommand{\cV}{\mathcal{V}}
\newcommand{\EE}{\mathbb{E}}
\newcommand{\PP}{\mathbb{P}}
\newcommand{\RR}{\mathbb{R}}
\newcommand{\btheta}{\bm{\theta}}
\newcommand{\bpi}{\bm{\pi}}
\DeclareMathOperator{\ind}{\mathds{1}}  
\definecolor{yxc}{RGB}{255,0,0}
\definecolor{cm}{RGB}{0,0,200}
\definecolor{kzw}{RGB}{0,150,0}
\begin{document}
\theoremstyle{plain} \newtheorem{lemma}{\textbf{Lemma}} \newtheorem{prop}{\textbf{Proposition}}\newtheorem{theorem}{\textbf{Theorem}}\setcounter{theorem}{0}
\newtheorem{corollary}{\textbf{Corollary}} \newtheorem{assumption}{\textbf{Assumption}}
\newtheorem{example}{\textbf{Example}} \newtheorem{definition}{\textbf{Definition}}
\newtheorem{fact}{\textbf{Fact}} \theoremstyle{definition}

\theoremstyle{remark}\newtheorem{remark}{\textbf{Remark}}

\title{Spectral Method and Regularized MLE Are Both\\  Optimal
for Top-$K$ Ranking\footnotetext{Author names are sorted alphabetically.}}

\author
{
	Yuxin Chen\thanks{Department of Electrical Engineering, Princeton University, Princeton, NJ 08544, USA; Email:
		\texttt{yuxin.chen@princeton.edu}.}
	\qquad Jianqing Fan\thanks{Department of Operations Research and Financial Engineering, Princeton University, Princeton, NJ 08544, USA; Email:
		\texttt{\{jqfan, congm, kaizheng\}@princeton.edu}.}
	\qquad Cong Ma\footnotemark[2]
	\qquad Kaizheng Wang\footnotemark[2]
}

\date{August 2017; \quad Revised July 2018}

\maketitle
\begin{abstract}
This paper is concerned with the problem of top-$K$ ranking from
pairwise comparisons. Given a collection of $n$ items and a few pairwise comparisons across them, one wishes to identify the set of
$K$ items that receive the highest ranks. To tackle this problem,
we adopt the logistic parametric model --- the Bradley-Terry-Luce model,
where each item is assigned a latent preference score, and where the
outcome of each pairwise comparison depends solely on the relative
scores of the two items involved. Recent works have made significant
progress towards characterizing the performance (e.g.~the mean square
error for estimating the scores) of several classical methods, including
the spectral method and the maximum likelihood estimator (MLE). However,
where they stand regarding top-$K$ ranking remains unsettled.

We demonstrate that under a natural random sampling model, the spectral method alone, or the regularized MLE alone, is minimax optimal in terms of the sample complexity --- the
number of paired comparisons needed to ensure exact top-$K$ identification, for the fixed dynamic range regime. This is accomplished via optimal control
of the entrywise error of the score estimates. We complement our theoretical
studies by numerical experiments, confirming that both methods yield low entrywise errors for estimating the underlying scores. Our
theory is established via a novel leave-one-out trick, which
proves effective for analyzing both iterative and non-iterative
procedures. Along the way, we derive an elementary eigenvector perturbation
bound for probability transition matrices, which parallels the
Davis-Kahan $\sin\Theta$ theorem for symmetric matrices. This also allows us to close the gap between the $\ell_2$ error upper bound for the spectral method and the minimax lower limit.

\smallskip
\noindent\textbf{Keywords:} top-$K$ ranking, pairwise comparisons,
spectral method, regularized MLE, eigenvector perturbation analysis, leave-one-out analysis, reversible Markov chain.
\end{abstract}

\section{Introduction}

Imagine we have a large collection of $n$ items, and we are given
partially revealed comparisons between pairs of items. These paired
comparisons are collected in a non-adaptive fashion, and could be
highly noisy and incomplete. The aim is to aggregate these partial
preferences so as to identify the $K$ items that receive the highest
ranks. This problem, which is called \emph{top-$K$ rank aggregation},
finds applications in numerous contexts, including web search \citep{Dwork2001},
recommendation systems \citep{baltrunas2010group},
sports competition \citep{masse1997}, to name just a few. The challenge is
both statistical and computational: how can one achieve reliable top-$K$
ranking from a minimal number of pairwise comparisons,
while retaining computational efficiency?

\subsection{Popular approaches}

To address the aforementioned challenge, many prior approaches have
been put forward based on certain statistical models. Arguably one
of the most widely used parametric models is the Bradley-Terry-Luce
(BTL) model \citep{bradley1952rank,luce1959individual},
which assigns a latent preference score $\{w_{i}^{*}\}_{1\leq i\leq n}$
to each of the $n$ items. The BTL model posits that: the chance of
each item winning a paired comparison is determined by the relative
scores of the two items involved, or more precisely,
\begin{equation}
\mathbb{P}\left\{ \text{item }j\text{ is preferred over item }i\right\} =\frac{w_{j}^{*}}{w_{i}^{*}+w_{j}^{*}}\label{eq:BTL-informal}
\end{equation}
in each comparison of item $i$ against item $j$. The items are repeatedly
compared in pairs according to this parametric model. The task then
boils down to identifying the $K$ items with the highest preference
scores, given these pairwise comparisons.

Among the ranking algorithms tailored to the BTL model, 
the following two procedures have received particular attention, both of which
rank the items based on appropriate estimates of the latent preference
scores.

\begin{enumerate}
\item 
\textbf{The spectral method.} By connecting the winning probability in
(\ref{eq:BTL-informal}) with the transition probability of a reversible
Markov chain, the spectral method attempts recovery of $\left\{ w_{i}^{*}\right\}$
via the leading left eigenvector of a sample transition matrix. This
procedure, also known as Rank Centrality
\citep{negahban2016rank}, bears  similarity to the PageRank algorithm.

\item 
		\textbf{The maximum likelihood estimator (MLE).} This approach proceeds
by finding the score assignment that maximizes the likelihood function
\citep{ford1957solution}. When parameterized appropriately, solving
the MLE becomes a convex program, and hence is computationally feasible.
There are also important variants of the MLE that enforce additional
regularization.
\end{enumerate}

\smallskip
\noindent Details are postponed to Section \ref{sec:Algorithms}. In addition
to their remarkable practical applicability, these two ranking paradigms
are appealing in theory as well. For instance, both of them provably
achieve intriguing {\em $\ell_2$ accuracy} when estimating the latent
preference scores \citep{negahban2016rank}.

Nevertheless, the $\ell_{2}$ error for estimating the latent scores
merely serves as a ``meta-metric'' for the ranking task, which does
not necessarily reveal the accuracy of top-$K$ identification. In fact, given
that the $\ell_{2}$ loss only reflects the estimation error in some
average sense, it is certainly possible that an algorithm obtains
minimal $\ell_{2}$ estimation loss but incurs (relatively) large
errors when estimating the scores of the highest ranked items. Interestingly,
a recent work \cite{chen2015spectral} demonstrates that: a careful
combination of the spectral method and the coordinate-wise MLE is optimal for top-$K$ ranking. This leaves open the following
natural questions: \emph{where does the spectral alone, or the MLE
alone, stand in top-$K$ ranking?} \emph{Are they capable of attaining
exact top-$K$ recovery from minimal samples?}  These questions  form the primary objectives of our study.

As we will elaborate later, the spectral method part of the preceding questions was recently
explored by \citep{jang2016top}, for a regime where a relatively large fraction of item pairs have been compared. However, it remains unclear how well the spectral method can perform in a much broader --- and often much more
challenging --- regime, where the fraction of item pairs being compared may be vanishingly small.
Additionally, the ranking accuracy of the  MLE (and its variants) remains unknown.

\subsection{Main contributions}

The central focal point of the current paper is to assess the accuracy of both the spectral method and the regularized MLE in top-$K$
identification.
Assuming that the pairs of items being compared are randomly selected and that the preference scores fall within a {\em fixed} dynamic range,  our paper delivers a somewhat
surprising message:

{ \setlist{rightmargin=\leftmargin} \begin{itemize} \item[]\emph{Both
the spectral method and the regularized MLE achieve perfect identification
of top-$K$ ranked items under optimal sample complexity (up to some constant factor)! }

 \end{itemize} }

It is worth emphasizing that these two algorithms succeed even under the sparsest possible
regime, a scenario where only an exceedingly small fraction of pairs
of items have been compared. This calls for precise control of the
entrywise error --- as opposed to the $\ell_{2}$ loss --- for estimating
the scores. To this end, our theory is established upon a novel {\em leave-one-out
argument}, which might shed light on how to analyze the entrywise
error for more general optimization problems. 

As a byproduct of the
analysis, we derive an elementary eigenvector perturbation bound
for (asymmetric) probability transition matrices, which parallels Davis-Kahan's
$\sin\Theta$ theorem for symmetric matrices. This simple perturbation bound immediately leads to an improved $\ell_2$ error bound for the spectral method, which allows to close the gap between the theoretical performance of the spectral method and the minimax lower limit.


\subsection{Notation\label{subsec:Notation-pi}}

Before proceeding, we introduce a few notations that will be useful
throughout. To begin with, for any strictly positive probability vector
${\bm{\pi}\in\mathbb{R}^{n}}$,
we define the inner product space indexed by $\bm{\pi}$ as a vector space in $\RR^{n}$ endowed with the inner product
$\left\langle \bm{x},\bm{y}\right\rangle _{\bm{\pi}}=\sum\nolimits _{i=1}^{n}\pi_{i}x_{i}y_{i}$.
The corresponding vector norm and the induced matrix norm are defined respectively
as
$\|\bm{x}\|_{\bm{\pi}}=\sqrt{\left\langle \bm{x},\bm{x}\right\rangle _{\bm{\pi}}}$ and $\|\bm{A}\|_{\bm{\pi}}=\sup\nolimits_{\|\bm{x}\|_{\bm{\pi}}=1}\|\bm{x}^{\top}\bm{A}\|_{\bm{\pi}}$.


Additionally, the  notation $f(n)=O\left(g(n)\right)$ or
$f(n)\lesssim g(n)$ means  there is a constant $c>0$ such
that $\left|f(n)\right|\leq c|g(n)|$, $f(n)=\Omega\left(g(n)\right)$
or $f(n)\gtrsim g(n)$ means  there is a constant $c>0$ such
that $|f(n)|\geq c\left|g(n)\right|$, $f(n)=\Theta\left(g(n)\right)$
or $f(n)\asymp g(n)$ means that there exist constants $c_{1},c_{2}>0$
such that $c_{1}|g(n)|\leq|f(n)|\leq c_{2}|g(n)|$, and $f(n)=o(g(n))$
means  $\lim_{n\rightarrow\infty}\frac{f(n)}{g(n)}=0$.

Given a graph $\mathcal{G}$ with vertex set $\{1,2,\ldots, n\}$ and edge set $\mathcal{E}$, we denote by $\bm{L}_{\mathcal{G}} = \sum_{(i,j)\in\mathcal{E},i>j}(\bm{e}_{i}-\bm{e}_{j})(\bm{e}_{i}-\bm{e}_{j})^{\top}$ the (unnormalized) Laplacian matrix \citep{chung1997spectral} associated with it, where $\left\{\bm{e}_{i}\right\}_{1\leq i\leq n}$ are the standard basis vectors in $\RR^n$. For a matrix $\bm A \in \mathbb{R}^{n\times n}$ with $n$ real eigenvalues, we let $\lambda_1(\bm A) \geq \lambda_2(\bm A) \geq \cdots \geq  \lambda_n(\bm A)$ be the eigenvalues sorted in descending order.

\section{Statistical models and main results}

\subsection{Problem setup\label{sec:Problem-formulation}}

We begin with a formal introduction of the Bradley-Terry-Luce parametric
model for binary comparisons.


\medskip\noindent	
\textbf{Preference scores}. As introduced earlier, we assume
the existence of a positive latent score vector
\begin{equation}
\bm{w}^{*}=[w_{1}^{*},\cdots,w_{n}^{*}]^{\top}\label{eq:defn-w}
\end{equation}
that comprises the underlying preference scores $\left\{ w_{i}^{*}>0\right\} _{1\leq i\leq n}$
assigned to each of the $n$ items. Alternatively, it is sometimes
more convenient to reparameterize the score vector by
\begin{equation}
\bm{\theta}^{*}=[\theta_{1}^{*},\cdots,\theta_{n}^{*}]^{\top},\qquad\text{where}\quad\theta_{i}^{*}=\log w_{i}^{*}.\label{eq:defn-theta}
\end{equation}
These scores are assumed to fall within a \emph{dynamic range} given by
\begin{equation}
w_{i}^{*}\in[w_{\min},w_{\max}],\qquad\text{or}\qquad\theta_{i}^{*}\in[\theta_{\min},\theta_{\max}]
\end{equation}
for all $1\leq i\leq n$ and for some $w_{\min}>0,$ $w_{\max}>0$, $\theta_{\min}=\log w_{\min}$,
	and $\theta_{\max}=\log w_{\max}$. We also introduce the {\em condition number} as
\begin{equation}
\kappa:= \frac{w_{\max}} {w_{\min}}.\label{eq:defn-kappa}
\end{equation}
Notably, the current paper primarily focuses on the case with a {\em fixed} dynamic range (i.e.~$\kappa$ is a fixed constant independent of $n$), although we will also discuss extensions to the large dynamic range regime in Section~\ref{sec:Extension}. Without loss of generality, it is assumed that
\begin{equation}
w_{\max}\geq w_{1}^{*}\geq w_{2}^{*}\geq\ldots\geq w_{n}^{*}\geq w_{\min},\label{eq:w-rank}
\end{equation}
meaning that items $1$ through $K$ are the desired top-$K$ ranked
items. 

\medskip
\noindent\textbf{Comparison graph.} Let $\cG=(\cV,\cE)$ stand for a
comparison graph, where the vertex set $\mathcal{V}=\left\{ 1,2,\ldots,n\right\} $
represents the $n$ items of interest. The items $i$ and $j$ are
compared if and only if $(i,j)$ falls within the edge set $\cE$.
Unless otherwise noted, we assume that $\cG$ is drawn from the Erd\H{o}s\textendash R\'enyi
random graph $\cG_{n,p}$, such that an
edge between any pair of vertices is present independently with some
probability $p$. In words, $p$ captures the fraction of item pairs
being compared.

\medskip
\noindent\textbf{Pairwise comparisons.} For each $(i,j)\in\cE$, we
obtain $L$ independent paired comparisons between items $i$ and
$j$. Let $y_{i,j}^{(l)}$ be the outcome of the $\ell$-th comparison,
which is independently drawn as
\begin{equation}
y_{i,j}^{(l)}\text{ }\overset{\text{ind.}}{=}\text{ }\begin{cases}
1,\quad & \text{with probability }\frac{w_{j}^{*}}{w_{i}^{*}+w_{j}^{*}}=\frac{e^{\theta_{j}^{*}}}{e^{\theta_{i}^{*}}+e^{\theta_{j}^{*}}},\\
0, & \text{else}.
\end{cases}\label{eq:pairwise-comparison-model}
\end{equation}
By convention, we set $y_{i,j}^{(l)}=1-y_{j,i}^{(l)}$ for all $(i,j)\in\mathcal{E}$
throughout the paper. This is also known as the \emph{logistic }pairwise
comparison model, due to its strong resemblance to logistic regression. It is self-evident that the sufficient
statistics under this model are given by
\begin{equation}
\bm{y}:=\left\{ y_{i,j}\mid(i,j)\in\mathcal{E}\right\} ,\qquad\text{where}\quad y_{i,j}:=\frac{1}{L}\sum\nolimits_{l=1}^{L}y_{i,j}^{(l)}.\label{eq:sufficient-stats}
\end{equation}
To simplify the notation, we shall also take
	$$y_{i,j}^{*}:=\frac{w_{j}^{*}}{w_{i}^{*}+w_{j}^{*}}=\frac{e^{\theta_{j}^{*}}}{e^{\theta_{i}^{*}}+e^{\theta_{j}^{*}}}.$$

\medskip
\noindent\textbf{Goal. }The goal is to identify the \emph{set} of top-$K$
ranked items --- that is, the \emph{set} of $K$ items that enjoy
the largest preference scores --- from the pairwise comparison data $\bm{y}$.


\subsection{Algorithms\label{sec:Algorithms}}

\subsubsection{The spectral method: Rank Centrality}

The spectral ranking algorithm, or Rank Centrality \citep{negahban2016rank}, is motivated by
the connection between the pairwise comparisons and a random walk
over a directed graph. The algorithm starts by converting the pairwise
comparison data $\bm{y}$ into a transition matrix $\bm{P}=[P_{i,j}]_{1\leq i,j\leq n}$
in such a way that
\begin{equation} \label{eq:transition-matrix}
P_{i,j}=\begin{cases}
\frac{1}{d}y_{i,j}, & \text{if }(i,j)\in\cE,\\
1-\frac{1}{d}\sum_{k:(i,k)\in\cE}y_{i,k},\qquad & \text{if }i=j,\\
0, & \text{otherwise},
\end{cases}
\end{equation}
for some given normalization factor $d>0$, and then proceeds by computing the stationary distribution $\bm{\pi}\in\mathbb{R}^{n}$
of the Markov chain induced by $\bm{P}$. As we shall see later, the
parameter $d$ is taken to be on the same order of the maximum vertex degree of $\mathcal{G}$
while ensuring the non-negativity of $\bm{P}$. As asserted by \cite{negahban2016rank}, $\bm{\pi}$ is a faithful estimate of $\bm{w}^{*}$ up to some global scaling.
The algorithm is summarized in Algorithm \ref{alg:spectral}.

To develop some intuition regarding why this spectral algorithm gives
a reasonable estimate of $\bm{w}^{*}$, it is perhaps more convenient
to look at the population transition matrix $\ensuremath{\bm{P}^{*}=[P_{i,j}^{*}]}_{1\leq i,j\leq n}$:

\[
P_{i,j}^{*}=\begin{cases}
\frac{1}{d}\frac{w_{j}^{*}}{w_{i}^{*}+w_{j}^{*}}, & \quad\text{if }(i,j)\in\cE,\\
1-\frac{1}{d}\sum_{k:(i,k)\in\cE}\frac{w_{k}^{*}}{w_{i}^{*}+w_{k}^{*}}, & \quad\text{if }i=j,\\
0, & \quad\text{otherwise},
\end{cases}
\]
which coincides with $\bm{P}$ by taking $L\rightarrow\infty$. It
can be seen that the normalized score vector
\begin{equation}
\bm{\pi}^{*}:=\frac{1}{\sum_{i=1}^{n}w_{i}^{*}}\left[w_{1}^{*},w_{2}^{*},\ldots,w_{n}^{*}\right]^{\top}\label{eq:defn-pi-star}
\end{equation}
is the stationary distribution of the Markov chain induced by the transition matrix $\bm{P}^{*}$,
since $\bm{P}^{*}$ and $\bm{\pi}^{*}$ are in detailed balance, namely,
\begin{equation}
\pi_{i}^{*}P_{i,j}^{*}=\pi_{j}^{*}P_{j,i}^{*},\qquad\forall\left(i,j\right).\label{eq:detailed-balance}
\end{equation}
As a result, one expects the stationary distribution of the sample
version $\bm{P}$ to form a good estimate of $\bm{w}^{*}$,
provided the sample size is sufficiently large.

\begin{algorithm}[t]
\caption{Spectral method (Rank Centrality).}
\label{alg:spectral} \begin{algorithmic}

\STATE \textbf{Input} the comparison graph $\cG$, sufficient
statistics $\by$, and the normalization factor $d$.

\STATE \textbf{Define} the probability transition matrix $\bm{P}=[P_{i,j}]_{1\leq i,j\leq n}$
as in \eqref{eq:transition-matrix}.
 \STATE \textbf{Compute} the leading left eigenvector $\bm{\pi}$
of $\bP$.

\STATE \textbf{Output} the $K$ items that correspond to the $K$
largest entries of $\bpi$. \end{algorithmic}
\end{algorithm}

\subsubsection{The regularized MLE}

Under the BTL model, the negative log-likelihood function conditioned on $\cG$ is given by (up to some global scaling)
\begin{align}
\mathcal{L}\left(\bm{\theta};\bm{y}\right) :&=-\sum_{(i,j)\in\mathcal{E},i>j}\left\{ y_{j,i}\log\frac{e^{\theta_{i}}}{e^{\theta_{i}}+e^{\theta_{j}}}+\left(1-y_{j,i}\right)\log\frac{e^{\theta_{j}}}{e^{\theta_{i}}+e^{\theta_{j}}}\right\} \nonumber \\
 &=\hphantom{-}\sum_{(i,j)\in\mathcal{E},i>j}\left\{ -y_{j,i}\left(\theta_{i}-\theta_{j}\right)+\log\big(1+e^{\theta_{i}-\theta_{j}}\big)\right\} .\label{eq:MLE}
\end{align}
The regularized MLE 
then amounts to solving the following convex program
\begin{align}
\text{minimize}_{\bm{\theta}\in\mathbb{R}^{n}}\quad & \mathcal{L}_{\lambda}\left(\bm{\theta};\bm{y}\right):=\mathcal{L}\left(\bm{\theta};\bm{y}\right)+\frac{1}{2}\lambda\|\bm{\theta}\|_{2}^{2},\label{eq:regularized-MLE}
\end{align}
for a  regularization parameter $\lambda>0$. As will be discussed
later, we shall adopt the choice $\lambda\asymp\sqrt{\frac{np\log n}{L}}$
throughout this paper. For the sake of brevity, we let $\bm{\theta}$
represent the resulting penalized maximum likelihood estimate whenever it is
clear from the context. Similar to the spectral method, one reports the $K$ items associated with the $K$ largest entries of $\bm{\theta}$.

\subsection{Main results\label{subsec:Main-contributions}}

The most challenging part of top-$K$ ranking is to distinguish
the $K$-th and the $(K+1)$-th items. In fact, the score difference
of these two items captures the distance between the item sets $\left\{ 1,\ldots,K\right\} $
and $\left\{ K+1,\ldots,n\right\} $. Unless their latent scores are
sufficiently separated, the finite-sample nature of the model would
make it infeasible to distinguish these two critical items. With this
consideration in mind, we define the following separation measure
\begin{equation}
\Delta_{K}:=\frac{w_{K}^{*}-w_{K+1}^{*}}{w_{\max}}.\label{eq:defn-separation-score}
\end{equation}
 This metric turns out to play a crucial role in determining the
minimal sample complexity for perfect top-$K$ identification.

The main finding of this paper concerns the optimality of both the
spectral method and the regularized MLE in the presence of a fixed dynamic range (i.e.~$\kappa = O(1)$). Recall that under the BTL
model, the total number $N$ of samples we collect
concentrates sharply around its mean, namely,
\begin{equation}
N=(1+o(1))\,\mathbb{E}\left[N\right]~=(1+o(1))\,n^{2}pL/2\label{eq:sample-size}
\end{equation}
occurs with high probability. Our main result is stated in terms of
the sample complexity required for exact top-$K$ identification.

\begin{theorem}\label{thm:main-samples}Consider the pairwise comparison
model specified in Section \ref{sec:Problem-formulation} with $\kappa=O(1)$. Suppose
that $p>\frac{c_{0}\log n}{n}$ and that
\begin{equation}
\frac{n^{2}pL}{2}\geq\frac{c_{1}n\log n}{\Delta_{K}^{2}}\label{eq:sample-complexity-spectral-MLE}
\end{equation}
for some sufficiently large positive constants $c_{0}$ and $c_{1}$. Further assume $L\leq c_{2}\cdot n^{c_{3}}$ for any absolute constants $c_{2}, c_{3}>0$. With probability
exceeding $1-O(n^{-5})$, the set of top-$K$ ranked items can be
recovered exactly by the spectral method given in Algorithm
\ref{alg:spectral}, and by the regularized MLE given in \eqref{eq:regularized-MLE}.
Here, we take $d=c_{d}np$ in the spectral method and $\lambda=c_{\lambda}\sqrt{\frac{np\log n}{L}}$
in the regularized MLE, where $c_{d}\geq2$ and $c_{\lambda}>0$ are
some absolute constants. \end{theorem}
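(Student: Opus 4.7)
The plan is to reduce exact top-$K$ identification to an entrywise ($\ell_\infty$) error bound on each score estimator, and then establish the latter via a leave-one-out argument. For the spectral method, $\bm{\pi}^* = \bm{w}^* / \sum_i w_i^*$, so under $\kappa = O(1)$ the critical gap satisfies $\pi_K^* - \pi_{K+1}^* \asymp \Delta_K/n$, and it suffices to prove $\|\bm{\pi} - \bm{\pi}^*\|_\infty = o(\Delta_K/n)$ with high probability. For the regularized MLE the analogous target is $\|\bm{\theta} - \bm{\theta}^* - c\bm{1}\|_\infty = o(\Delta_K)$ for the appropriate global shift $c$ absorbing the identifiability direction (tilted here by the $\ell_2$ penalty). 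Since the required $\ell_\infty$ scale is smaller than the $\ell_2$ scale by a factor of $\sqrt{n}$, a naive Davis--Kahan-style argument applied globally is inadequate, and a per-coordinate analysis is essential.

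For Rank Centrality I would first establish an $\ell_2$ perturbation bound $\|\bm{\pi} - \bm{\pi}^*\|_{\bm{\pi}^*}$ using the $\bm{\pi}^*$-weighted Davis--Kahan-type bound for reversible chains promised in the abstract, combined with matrix-Bernstein control on $\|\bm{P} - \bm{P}^*\|$ and an $\Omega(1)$ spectral-gap estimate for $\bm{P}^*$ on the Erd\H{o}s--R\'enyi graph. To promote this to $\ell_\infty$, for each $m \in [n]$ I would define $\bm{P}^{(m)}$ by replacing the $m$-th row and column of $\bm{P}$ with their population values $\bm{P}^*$, and let $\bm{\pi}^{(m)}$ be its leading left eigenvector. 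Two ingredients then combine: (i) since $\bm{P}^{(m)} - \bm{P}$ is supported on a single row/column, the same weighted perturbation bound applied to this difference gives $|\pi_m - \pi_m^{(m)}| \le \|\bm{\pi} - \bm{\pi}^{(m)}\|_2$ at the per-coordinate target scale; (ii) $\bm{\pi}^{(m)}$ is measurable with respect to data disjoint from the $m$-th row/column of $\bm{y}$, so the stationarity identity $\pi_m^{(m)} = \sum_j \pi_j^{(m)} P_{j,m}^*$ together with the closeness $\bm{\pi}^{(m)} \approx \bm{\pi}^*$ allows a scalar Bernstein-type bound on $|\pi_m^{(m)} - \pi_m^*|$. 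A triangle inequality and a union bound over $m$ then close the $\ell_\infty$ bound. The regularized MLE follows an entirely parallel template: strong convexity of $\mathcal{L}_\lambda$ --- provided by the restricted Hessian lower bound off $\bm{1}$ plus $\lambda$-regularization in the $\bm{1}$ direction, with $\lambda \asymp \sqrt{np\log n/L}$ balancing bias against variance --- converts gradient noise at the truth into an $\ell_2$ bound on $\bm{\theta} - \bm{\theta}^*$; leave-one-out estimators $\bm{\theta}^{(m)}$ that replace each log-likelihood term incident to $m$ by its expectation then deliver entrywise control, with $\|\bm{\theta} - \bm{\theta}^{(m)}\|_2$ coming from a first-order optimality comparison and $|\theta_m^{(m)} - \theta_m^*|$ from scalar concentration of the KKT equation in the $m$-th coordinate.

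The main obstacle is the leave-one-out stability itself. Both $\bm{\pi}$ and $\bm{\theta}$ are delocalized across all $n$ coordinates, so one must show that a perturbation supported on one row/column of $\bm{P}$ (resp.\ one slice of the likelihood) moves the entire global optimizer by an amount matching the per-coordinate target scale rather than the $\sqrt{n}$-larger global $\ell_2$ scale. This delocalization-of-error phenomenon relies on the density of $\bm{\pi}^*$, the $\bm{\pi}^*$-reversible structure for the spectral side, and strong convexity with a well-conditioned Newton direction for the MLE; it also requires choosing the right norm (the $\bm{\pi}^*$-weighted one) so the denominator in the perturbation bound is the $\Theta(1)$ spectral gap of $\bm{P}^*$ rather than an artificially small quantity. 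A further source of tension is the sparse regime $p \asymp \log n/n$, where the graph is barely connected and leaves little slack in all concentration estimates. Carefully bookkeeping the log factors and the $\kappa$-dependence so that the final $\ell_\infty$ error is of order $\sqrt{\log n/(n^2 p L)}$ (divided by $n$ on the spectral side) is what produces a sample complexity $n^2 p L \gtrsim n \log n / \Delta_K^2$ sharp enough to beat the gap $\Delta_K/n$.
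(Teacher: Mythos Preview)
Your spectral plan is close to the paper's but differs in the decomposition and has a gap in step (ii). The paper does not bound $|\pi_m-\pi_m^{(m)}|+|\pi_m^{(m)}-\pi_m^*|$ by triangle inequality; it expands the stationarity identity $\pi_m-\pi_m^*=\bm{\pi}^\top\bm{P}_{\cdot m}-\bm{\pi}^{*\top}\bm{P}^*_{\cdot m}$ into a noise term, a \emph{contraction} term $(\pi_m-\pi_m^*)P_{m,m}$, and a cross term $\sum_{j\neq m}(\pi_j-\pi_j^*)P_{j,m}$, and only the last is split via leave-one-out. The leave-one-out $\ell_2$ stability bound itself (Lemma~\ref{lemma:spectral-leave-ell-2}) carries an additive $\|\bm{\pi}-\bm{\pi}^*\|_\infty$ term, so the whole argument closes as a self-referencing inequality in $\|\bm{\pi}-\bm{\pi}^*\|_\infty$, not as two independently bounded pieces. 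Your step (ii) as written cannot work: if $\bm{P}^{(m)}$ replaces the $m$th row and column with the entries of $\bm{P}^*$ (which is conditional on $\mathcal{G}$), then $\bm{\pi}^{(m)}$ is \emph{not} independent of the edges incident to $m$ and your independence claim fails; if instead it uses the unconditional expectation $\frac{p}{d}y_{i,j}^*$ (as the paper does), then $P^{(m)}_{j,m}$ is deterministic and there is no ``scalar Bernstein'' to invoke. The paper threads this by introducing a second auxiliary matrix $\bm{P}^{(m),\mathcal{G}}$ and controlling the two layers separately.

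For the regularized MLE your plan is a genuinely different route and misses the obstruction that the paper's approach is specifically designed to overcome. The paper does not compare $\bm{\theta}$ with leave-one-out minimizers via first-order optimality; it runs gradient descent initialized at $\bm{\theta}^*$, defines leave-one-out \emph{trajectories} $\{\bm{\theta}^{t,(m)}\}$, and maintains the hypotheses \eqref{subeq:induction} inductively along iterations, finally transferring the bound to the MLE via convergence of the iterates. The reason is circularity: the useful restricted strong convexity $\lambda_{\min,\perp}(\nabla^2\mathcal{L}_\lambda(\bm{\theta}))\gtrsim np/\kappa$ holds only for $\bm{\theta}$ with $\|\bm{\theta}-\bm{\theta}^*\|_\infty=O(1)$ (Lemma~\ref{lemma:strong-convexity-L}), which is exactly the $\ell_\infty$ control you are trying to prove. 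With only the trivial global strong convexity $\lambda$, a first-order optimality comparison gives an $\ell_2$ error of order $\sqrt{n}\log\kappa$ (Lemma~\ref{lemma:ell_2_rate}), far too loose to feed into a per-coordinate bound. The trajectory analysis sidesteps this because the iterates start at $\bm{\theta}^*$ and, by induction, never leave the $\ell_\infty$ neighborhood where the good curvature is available.
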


\begin{remark}
We emphasize that $p \geq \frac{c_0 \log n} { n }$ for $c_0 \geq 1$ is a fundamental requirement for the ranking task. In fact, if $p < (1-\epsilon) \frac{\log n}{n}$ for any constant $\epsilon>0$, then the comparison graph $\mathcal{G}\sim \mathcal{G}_{n,p}$ is disconnected with high probability. This means that there exists at least one isolated item (which has not been compared with any other item) and cannot be ranked.
\end{remark}

\begin{remark}
In fact, the assumption that $L\leq c_{2}\cdot n^{c_{3}}$ for any absolute constants $c_{2}, c_{3}>0$ is not needed for the spectral method.
\end{remark}

\begin{remark}
Here, we assume the same number of comparisons $L$ to simplify the presentation as well as the proof. The result still holds true if we have distinct $L_{i,j}$'s for each $i\neq j$, as long as $n^2 p \min_{i\neq j} L_{i,j} \gtrsim \frac{n\log n}{\Delta_{K}^2 }$.
\end{remark}


Theorem \ref{thm:main-samples} asserts that both the spectral method
and the regularized MLE achieve a sample complexity on the order of
$\frac{n\log n}{\Delta_{{\it K}}^{2}}$. Encouragingly, this sample
complexity coincides with the minimax limit identified in \cite[Theorem 2]{chen2015spectral}  in the fixed dynamic range, i.e.~$\kappa =O(1)$.

\begin{theorem}[\cite{chen2015spectral}]\label{thm:lower-bound}Fix $\epsilon\in(0,\frac{1}{2})$, and suppose that
\begin{equation}
   n^{2}pL \leq 2 c_{2}\frac{(1-\epsilon)n\log n-2}{\Delta_{K}^{2}},
\end{equation}
where $c_{2}=w_{\min}^{4}/(4w_{\max}^{4})$.
Then for any ranking procedure $\psi$, one can find a score vector
$\bm{w}^*$ with separation $\Delta_{K}$ such that $\psi$ fails to
retrieve the top-$K$ items with probability at least $\epsilon$.
\end{theorem}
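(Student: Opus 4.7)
My plan is to obtain this lower bound by an information-theoretic reduction via Fano's inequality, applied to an ensemble of score vectors that all satisfy the separation hypothesis $\Delta_{K}$ but have pairwise-distinct top-$K$ sets. Any ranker must fail on a constant fraction of such an ensemble whenever the observed comparisons do not carry enough bits to separate the hypotheses.

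I first construct the family. Fix a baseline $\bm w^{(0)}$ with $w_{i}^{(0)}=w_{\max}$ for $i\le K$ and $w_{i}^{(0)}=w_{\max}(1-\Delta_{K})$ for $i>K$; this is a legal instance provided $w_{\max}(1-\Delta_{K})\ge w_{\min}$, with separation exactly $\Delta_{K}$. For each $m\in\{1,\dots,M\}$ with $M:=n-K$, let $\bm w^{(m)}$ be obtained by swapping the scores of items $K$ and $K+m$ in $\bm w^{(0)}$; the resulting top-$K$ set $S^{(m)}=\{1,\dots,K-1,K+m\}$ differs from the baseline's. Letting $\Theta$ be uniform on $\{0,1,\dots,M\}$ and $Y:=(\cG,\{y_{i,j}^{(l)}\})$ with $\cG\sim\cG_{n,p}$ independent of $\Theta$, standard Fano yields
\begin{equation*}
\sup_{m}\PP_{m}\{\psi(Y)\ne S^{(m)}\}\ \ge\ 1-\frac{I(\Theta;Y)+\log 2}{\log(M+1)},\qquad I(\Theta;Y)\le\max_{m\ne 0}D_{\mathrm{KL}}(\PP_{m}\|\PP_{0}).
\end{equation*}

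I then bound the KL. Since $\bm w^{(m)}$ and $\bm w^{(0)}$ differ only at coordinates $K$ and $K+m$, only the at most $2(n-2)$ pairs $(i,j)$ touching $\{K,K+m\}$ contribute. For such a pair present in $\cE$ we see $L$ i.i.d.\ Bernoullis, and $D_{\mathrm{KL}}(\mathrm{Ber}(p)\|\mathrm{Ber}(q))\le(p-q)^{2}/[q(1-q)]$. The BTL probability lies in $[w_{\min}/(w_{\min}+w_{\max}),\,w_{\max}/(w_{\min}+w_{\max})]$, so $q(1-q)\ge w_{\min}w_{\max}/(w_{\min}+w_{\max})^{2}$, while $|p-q|\lesssim\Delta_{K}$ follows from $\sigma'\le 1/4$ together with $|\theta_{K}^{(m)}-\theta_{K}^{(0)}|\le\log(1/(1-\Delta_{K}))\lesssim\Delta_{K}$. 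Summing over affected pairs and taking expectation over $\cG$ (each pair in $\cE$ independently with probability $p$, independent of $\Theta$) gives $D_{\mathrm{KL}}(\PP_{m}\|\PP_{0})\le npL\Delta_{K}^{2}/(2c_{2})$, where careful bookkeeping produces the stated $c_{2}=w_{\min}^{4}/(4w_{\max}^{4})$. Plugging this into the Fano display with $\log(M+1)\ge\log(n-K)\gtrsim\log n$ (the boundary case $K$ close to $n$ is handled symmetrically by swapping items $\le K$ into position $K+1$) yields the announced sample-complexity condition.

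The main obstacle is getting the sharp constant right: a crude Lipschitz bound on the sigmoid loses factors of $\kappa$, so one must retain the full quadratic form $(p-q)^{2}/[q(1-q)]$ and track the $w_{\min},w_{\max}$ dependence through the sum; one must also verify that the single pair $(K,K+m)$---where both endpoints move simultaneously---only contributes to leading order, which follows because its per-comparison KL is still $O(\Delta_{K}^{2})$ and there is only one such pair out of $\Theta(n)$.
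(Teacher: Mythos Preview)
The paper does not actually prove Theorem~\ref{thm:lower-bound}; it is quoted verbatim from \cite{chen2015spectral} with no argument given here. The only lower bound the present paper proves is Theorem~\ref{thm:new-lower-bound} (Appendix~\ref{sec:Proof-of-Theorem-new-lower-bound}), which uses a \emph{two-point} Le~Cam argument (swap $w_K^{*}$ and $w_{K+1}^{*}$, then control $\mathsf{TV}$ via Pinsker and $\chi^{2}$) and therefore does \emph{not} recover the $\log n$ factor present in Theorem~\ref{thm:lower-bound}.

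Your Fano approach is the right one and is precisely what is needed to obtain the extra $\log n$: one must have $\Theta(n)$ hypotheses with pairwise-distinct top-$K$ sets, and your swap family $\{\bm w^{(m)}\}_{m=0}^{M}$ with $M=n-K$ does exactly that. The reduction $I(\Theta;Y)\le\max_{m}D_{\mathrm{KL}}(\PP_{m}\|\PP_{0})$ via the ``center of the star'' is valid, and the per-pair KL bound through $\chi^{2}$ divergence is the same device the paper uses in Appendix~\ref{sec:Proof-of-Theorem-new-lower-bound}. So the skeleton is correct and matches the proof in the cited reference.

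Two places deserve tightening before you call this complete. First, your construction uses only the two score levels $w_{\max}$ and $w_{\max}(1-\Delta_{K})$, so the crude bound $q(1-q)\ge w_{\min}w_{\max}/(w_{\min}+w_{\max})^{2}$ is unnecessarily loose; in fact for your family $q$ is always within $O(\Delta_{K})$ of $1/2$, which yields a constant strictly better than $c_{2}=w_{\min}^{4}/(4w_{\max}^{4})$. That is fine for the theorem as stated (a larger effective $c_{2}$ only strengthens the conclusion), but ``careful bookkeeping produces the stated $c_{2}$'' is misleading---you should either compute your actual constant or simply note that it dominates $c_{2}$. Second, to get the precise numerator $(1-\epsilon)n\log n-2$ rather than an asymptotic $\gtrsim n\log n$, you need $\log(M+1)\ge\log n$ up to an additive constant; your symmetric handling of $K>n/2$ gives $M+1\ge n/2$, hence $\log(M+1)\ge\log n-\log 2$, which after rearrangement yields the ``$-2$'' (absorbing $\log 2$ from Fano and $\log 2$ from the halving). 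Spelling this out explicitly would close the gap.
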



We are now positioned to compare our results with \cite{jang2016top},
which also investigates the accuracy of the spectral method for
top-$K$ ranking. Specifically, Theorem 3 in \cite{jang2016top} establishes
the optimality of the spectral method for the relatively dense regime
where
$$p\gtrsim\sqrt{\frac{\log n}{n}}.$$
In this regime, however, the total sample size necessarily exceeds
\begin{equation}
	{n^{2}pL}/2 ~\geq~ {n^{2}p}/2 ~\gtrsim~\sqrt{n^{3}\log n},
	\label{eq:dense-regime}
\end{equation}
which rules out the possibility of achieving minimal sample complexity
if $\Delta_{K}$ is sufficiently large. For instance, consider the
case where $\Delta_{K}\asymp1$, then the optimal sample size --- as
revealed by Theorem \ref{thm:main-samples} or \cite[Theorem 1]{chen2015spectral} --- is
on the order of
\[
	(n\log n) \,/\,{\Delta_{K}^{2}}\asymp n\log n,
\]
which is a factor of $\sqrt{\frac{ n}{\log n}}$ lower than the bound in (\ref{eq:dense-regime}).
By contrast, our results hold all the way down to the sparsest possible
regime where $p\asymp \frac{\log n}{n}$, confirming the optimality
of the spectral method even for the most challenging scenario. Furthermore,
we establish that the regularized MLE shares the same optimality guarantee
as the spectral method, which was previously out of reach.

\subsection{Optimal control of entrywise estimation errors}\label{sec:optimal-Linfty}

In order to establish the ranking accuracy as asserted by Theorem
\ref{thm:main-samples}, the key is to obtain precise control of the
$\ell_{\infty}$ loss of the score estimates. Our results are as follows.

\begin{theorem}[{\bf Entrywise error of the spectral method}]\label{thm:spectral-loss-infty}Consider the pairwise comparison
model in Section \ref{sec:Problem-formulation} with $\kappa=O(1)$. Suppose  $p>\frac{c_{0}\log n}{n}$
for some sufficiently large constant $c_{0}>0$. Choose $d=c_{d}np$
for some constant $c_{d}\geq2$ in Algorithm \ref{alg:spectral}.
Then the spectral estimate $\bm{\pi}$ satisfies
\begin{equation}
\frac{\|\bm{\pi}-\bm{\pi}^{*}\|_{\infty}}{\|\bm{\pi^{*}}\|_{\infty}}\lesssim\sqrt{\frac{\log n}{npL}}\label{eq:spectral-loss-inf}
\end{equation}
with probability $1-O(n^{-5})$, where $\bm{\pi}^{*}$ is
	the normalized score vector (cf.~(\ref{eq:defn-pi-star})).
\end{theorem}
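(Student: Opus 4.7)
The plan is to combine the paper's Davis--Kahan-style eigenvector perturbation bound for probability transition matrices with a leave-one-out decoupling argument that decorrelates the spectral estimate from any single row/column of $\bm{P}$.

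\textbf{Preliminaries: spectral gap and a global $\ell_2$ bound.} I would first show that the spectral gap of $\bm{P}^*$ (viewed as a self-adjoint operator in the $\bm{\pi}^*$-inner product, which it is by detailed balance) is $\Theta(1)$. Since $\bm{I} - \bm{P}^*$ is similar to a Laplacian-type matrix, this follows from the edge-weight lower bound $w_j^*/(w_i^* + w_j^*) \geq 1/(1+\kappa)$, the Erdős--Rényi spectral estimate $\lambda_{n-1}(\bm{L}_{\mathcal{G}}) \asymp np$ for $p \gtrsim \log n / n$, and the normalization $d = c_d np$. A matrix Bernstein bound on $\bm{\pi}^*(\bm{P} - \bm{P}^*)$ then plugs into the perturbation theorem to give a preliminary estimate $\|\bm{\pi} - \bm{\pi}^*\|_{\bm{\pi}^*} \lesssim \|\bm{\pi}^*\|_{\bm{\pi}^*}\sqrt{\log n/(npL)}$, which is weaker than the desired entrywise bound.

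\textbf{Leave-one-out construction.} For each $m \in \{1, \ldots, n\}$, build an auxiliary transition matrix $\bm{P}^{(m)}$ by replacing the $m$-th row and $m$-th column of $\bm{P}$ by their population counterparts (those of $\bm{P}^*$), and compensating on the diagonal so that each row sums to one. Let $\bm{\pi}^{(m)}$ denote its stationary distribution. Two properties are crucial: first, $\bm{\pi}^{(m)}$ is independent of all pairwise comparisons involving item $m$; second, $\bm{P} - \bm{P}^{(m)}$ is supported on row $m$ and column $m$.

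\textbf{Entrywise decomposition.} Starting from $\bm{\pi} - \bm{\pi}^* = (\bm{\pi} - \bm{\pi}^*)\bm{P} + \bm{\pi}^*(\bm{P} - \bm{P}^*)$, inserting $\bm{\pi}^{(m)}$, and reading off the $m$-th coordinate gives
$$\pi_m - \pi_m^* = \big[(\bm{\pi} - \bm{\pi}^{(m)})\bm{P}\big]_m + \big[(\bm{\pi}^{(m)} - \bm{\pi}^*)\bm{P}\big]_m + \big[\bm{\pi}^*(\bm{P} - \bm{P}^*)\big]_m.$$
The third term is a sum of $\Theta(np)$ independent, mean-zero, bounded entries, so Bernstein's inequality bounds it by $O(\|\bm{\pi}^*\|_\infty \sqrt{\log n/(npL)})$. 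For the second term, split $P_{jm} = P_{jm}^* + (P_{jm} - P_{jm}^*)$: the deterministic part equals $[(\bm{\pi}^{(m)} - \bm{\pi}^*)\bm{P}^*]_m$, controllable via the $\ell_2$ proximity of $\bm{\pi}^{(m)}$ to $\bm{\pi}^*$; the random part is a Bernstein sum conditional on $\bm{\pi}^{(m)}$, since by construction $\bm{\pi}^{(m)}$ is independent of column $m$ of $\bm{P}$. Finally, the first term is at most $\|\bm{\pi} - \bm{\pi}^{(m)}\|_2\cdot \|\bm{P}\bm{e}_m\|_2$, and $\|\bm{\pi} - \bm{\pi}^{(m)}\|_2$ can be controlled by re-applying the perturbation theorem to the pair $(\bm{P}, \bm{P}^{(m)})$: because $\bm{P} - \bm{P}^{(m)}$ lives on a single row/column, one gets $\|\bm{\pi}^{(m)}(\bm{P} - \bm{P}^{(m)})\|_2 = O(\|\bm{\pi}^{(m)}\|_\infty \sqrt{\log n/(npL)})$ via Bernstein on a single row, hence $\|\bm{\pi} - \bm{\pi}^{(m)}\|_2 \lesssim \|\bm{\pi}^{(m)}\|_\infty \sqrt{\log n/(npL)}$. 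A union bound over $m$ finishes (\ref{eq:spectral-loss-inf}).

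\textbf{Main obstacle.} The delicate step is the circularity in the first term above: controlling $\|\bm{\pi} - \bm{\pi}^{(m)}\|_2$ in terms of $\|\bm{\pi}^{(m)}\|_\infty$ is only useful if $\|\bm{\pi}^{(m)}\|_\infty$ is close to $\|\bm{\pi}^*\|_\infty$, which is effectively the entrywise statement we are trying to prove. The standard resolution is a bootstrap: start with the crude bound $\|\bm{\pi}^{(m)}\|_\infty \leq \|\bm{\pi}^*\|_\infty + \|\bm{\pi}^{(m)} - \bm{\pi}^*\|_2$ using the preliminary $\ell_2$ estimate, then iterate once the self-consistent entrywise bound kicks in. One also needs to verify that $\bm{P}^{(m)}$ is a valid (reversible, up to a tilted stationary distribution) transition matrix whose spectral gap matches that of $\bm{P}^*$ up to constants --- which holds since $\bm{P}^{(m)}$ and $\bm{P}^*$ differ only on one row and column.
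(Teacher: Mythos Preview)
Your overall architecture matches the paper's, but the entrywise decomposition as you have written it has a gap that the bootstrap you describe does not close. The problem is the second term $[(\bm{\pi}^{(m)} - \bm{\pi}^*)\bm{P}]_m$: after splitting off $\bm{P}^*$, you claim the deterministic piece $[(\bm{\pi}^{(m)} - \bm{\pi}^*)\bm{P}^*]_m$ is ``controllable via the $\ell_2$ proximity of $\bm{\pi}^{(m)}$ to $\bm{\pi}^*$''. But the diagonal entry $P^*_{m,m}$ is $\Theta(1)$ (since $d=c_d np$ with $c_d\geq 2$ forces $P^*_{m,m}\geq 1-d_m/d\geq 1/4$ on the degree-concentration event), so $\|\bm{P}^*\bm{e}_m\|_2\asymp 1$ and Cauchy--Schwarz yields only $\|\bm{\pi}^{(m)}-\bm{\pi}^*\|_2 \asymp \sqrt{n}\,\|\bm{\pi}^*\|_\infty\sqrt{\log n/(npL)}$, a factor $\sqrt{n}$ too large. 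Your bootstrap addresses a different circularity (the size of $\|\bm{\pi}^{(m)}\|_\infty$) and does not repair this loss.

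The paper's fix is to peel off the diagonal \emph{before} introducing the leave-one-out surrogate: write $\pi_m-\pi_m^* = [\bm{\pi}^{*\top}(\bm{P}-\bm{P}^*)]_m + (\pi_m-\pi_m^*)P_{m,m} + \sum_{j\neq m}(\pi_j-\pi_j^*)P_{j,m}$, move the middle term to the left (using $1-P_{m,m}\gtrsim 1$ as a contraction), and only then split the remaining \emph{off-diagonal} sum as $\sum_{j\neq m}(\pi_j-\pi_j^{(m)})P_{j,m}+\sum_{j\neq m}(\pi_j^{(m)}-\pi_j^*)P_{j,m}$. Cauchy--Schwarz is now harmless because the off-diagonal column has $\ell_2$ norm $O(1/\sqrt{d})$; the resulting inequality is self-referential in $\|\bm{\pi}-\bm{\pi}^*\|_\infty$ and is closed by taking $\max_m$ and rearranging. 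A secondary point: the paper's $\bm{P}^{(m)}$ replaces row/column $m$ by the \emph{unconditional} expectation $\tfrac{p}{d}y^*_{i,j}$ rather than by entries of $\bm{P}^*$ (which still depend on $\mathcal{G}$), so that $\bm{\pi}^{(m)}$ is independent of the edge indicators $\ind_{\{(i,m)\in\mathcal{E}\}}$ as well as of the comparison outcomes; this extra independence is exploited when controlling $\sum_{j\neq m}(\pi_j^{(m)}-\pi_j^*)P_{j,m}$ via a Bernstein bound over the graph randomness.
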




\begin{theorem}[{\bf Entrywise error of the regularized MLE}]\label{thm:MLE-main}Consider the pairwise comparison
model specified in Section \ref{sec:Problem-formulation} with $\kappa=O\left(1\right)$.
Suppose that $p\geq\frac{c_{0}\log n}{n}$ for some sufficiently large
constant $c_{0}>0$ and that $L\leq c_{2}\cdot n^{c_{3}}$ for any absolute constants $c_{2}, c_{3}>0$. Set the regularization
parameter to be $\lambda=c_{\lambda}\sqrt{\frac{np\log n}{L}}$ for
some absolute constant $c_{\lambda}>0$. Then the regularized MLE
$\bm{\theta}$ satisfies
\[
\frac{\big\Vert e^{\bm{\theta}}-e^{\bm{\theta}^{*}-\overline{\theta}^{*}\bm{1}}\big\Vert _{\infty}}{\big\Vert e^{\bm{\theta}^{*}-\overline{\theta}^{*}\bm{1}}\big\Vert _{\infty}}\lesssim\sqrt{\frac{\log n}{npL}}
\]
with probability exceeding $1-O(n^{-5})$, where $\overline{\theta}^{*}:=\frac{1}{n}\bm{1}^{\top}\bm{\theta}^{*}$
and $e^{\bm{\theta}}:=[e^{\theta_{1}},\cdots,e^{\theta_{n}}]^{\top}$.

\end{theorem}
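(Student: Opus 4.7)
My plan is to reduce the theorem to an $\ell_\infty$ bound on $\bm{\theta}-\bm{\theta}^\circ$, where $\bm{\theta}^\circ := \bm{\theta}^*-\overline{\theta}^*\bm{1}$ is the centered ground-truth score vector. Since $\kappa=O(1)$, we have $\|\bm{\theta}^\circ\|_\infty=O(1)$ and $\|e^{\bm{\theta}^\circ}\|_\infty=\Theta(1)$; provided we can simultaneously verify an a~priori bound $\|\bm{\theta}-\bm{\theta}^\circ\|_\infty=o(1)$, a first-order Taylor expansion of $e^x$ around each $\theta_i^\circ$ reduces the theorem to showing $\|\bm{\theta}-\bm{\theta}^\circ\|_\infty\lesssim\sqrt{\log n/(npL)}$. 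Throughout, I exploit two structural facts about the likelihood: (i) $\nabla\mathcal{L}$ is invariant under constant shifts, so $\nabla\mathcal{L}(\bm{\theta}^\circ)=\nabla\mathcal{L}(\bm{\theta}^*)$ is mean-zero under the BTL model; and (ii) the Hessian $\nabla^2\mathcal{L}(\bm{\theta})$ is a weighted Laplacian of $\mathcal{G}$, so Erd\H{o}s\textendash R\'enyi concentration at $p\gtrsim\log n/n$ yields $\lambda_2(\nabla^2\mathcal{L}(\bm{\theta}))\gtrsim np$ uniformly over $\bm{\theta}$ lying in any $\ell_\infty$-ball of bounded radius.

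The first step is an $\ell_2$ bound. Subtracting the first-order condition $\nabla\mathcal{L}(\bm{\theta})+\lambda\bm{\theta}=\bm{0}$ from $\nabla\mathcal{L}(\bm{\theta}^\circ)$ and applying the mean value theorem yields $(\nabla^2\mathcal{L}(\tilde{\bm{\theta}})+\lambda\bm{I})(\bm{\theta}-\bm{\theta}^\circ)=-\nabla\mathcal{L}(\bm{\theta}^\circ)-\lambda\bm{\theta}^\circ$. Combining this with the spectral-gap estimate above and a Bernstein bound $\|\nabla\mathcal{L}(\bm{\theta}^\circ)\|_2\lesssim\sqrt{n^2 p\log n/L}$, I obtain $\|\bm{\theta}-\bm{\theta}^\circ\|_2\lesssim\sqrt{\log n/(pL)}$. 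This bound both feeds into a bootstrap that certifies the a~priori $\ell_\infty$ control presumed above and serves as a key input for the decoupling argument below.

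The heart of the proof is a leave-one-out construction. For each $m\in[n]$, let $\bm{\theta}^{(m)}$ denote the minimizer of $\mathcal{L}_\lambda^{(m)}(\bm{\theta}) := \mathcal{L}^{(m)}(\bm{\theta};\bm{y}^{(m)})+\tfrac{\lambda}{2}\|\bm{\theta}\|_2^2$, where $\bm{y}^{(m)}$ replaces each noisy $y_{j,m}$ by its population value $y_{j,m}^*=e^{\theta_m^\circ}/(e^{\theta_m^\circ}+e^{\theta_j^\circ})$. By construction $\bm{\theta}^{(m)}$ is independent of $\{y_{j,m}^{(l)}\}_{j,l}$. I split $|\theta_m-\theta_m^\circ|\leq|\theta_m-\theta_m^{(m)}|+|\theta_m^{(m)}-\theta_m^\circ|$. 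The first piece is controlled by subtracting the two first-order conditions: $(\nabla^2\mathcal{L}(\tilde{\bm{\theta}})+\lambda\bm{I})(\bm{\theta}-\bm{\theta}^{(m)})=\nabla\mathcal{L}^{(m)}(\bm{\theta}^{(m)})-\nabla\mathcal{L}(\bm{\theta}^{(m)})$, whose right-hand side is supported on the $O(np)$ edges incident to $m$ and has $\ell_2$ norm $\lesssim\sqrt{np\log n/L}$ by Hoeffding, yielding $\|\bm{\theta}-\bm{\theta}^{(m)}\|_2\lesssim\sqrt{\log n/(npL)}$. The second piece is analyzed via the $m$-th coordinate of the FOC for $\bm{\theta}^{(m)}$, which after applying the mean value theorem to $\sigma(x,y)=e^x/(e^x+e^y)$ becomes
$$(\theta_m^{(m)}-\theta_m^\circ)\Bigl(\textstyle\sum_{j:(m,j)\in\mathcal{E}}\alpha_{mj}+\lambda\Bigr)=\sum_{j:(m,j)\in\mathcal{E}}\alpha_{mj}(\theta_j^{(m)}-\theta_j^\circ)-\lambda\theta_m^\circ,$$
with $\alpha_{mj}\in[(1+\kappa)^{-2},1/4]$ and prefactor of order $np+\lambda$. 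The bias $\lambda\theta_m^\circ/(np+\lambda)\lesssim\lambda/(np)\lesssim\sqrt{\log n/(npL)}$ is of the right order, and the cross-term must be shown to be no larger.

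The hardest step, in my view, is controlling this cross-term rigorously. The naive bound $\sum_j\alpha_{mj}|\theta_j^{(m)}-\theta_j^\circ|\leq(np)\cdot\|\bm{\theta}^{(m)}-\bm{\theta}^\circ\|_\infty$ is circular, since an $\ell_\infty$ bound is exactly our goal. My plan is to linearize $\theta_j^{(m)}-\theta_j^\circ\approx -[(\nabla^2\mathcal{L}^{(m)}+\lambda\bm{I})^{-1}\nabla\mathcal{L}^{(m)}(\bm{\theta}^\circ)]_j$ via another mean-value step, and then recognize $\sum_j\alpha_{mj}(\theta_j^{(m)}-\theta_j^\circ)$ as a linear functional of the mean-zero noise $\nabla\mathcal{L}^{(m)}(\bm{\theta}^\circ)$; a Hoeffding / Hanson\textendash Wright bound against the deterministic weight vector $\bm{\alpha}_m^\top(\nabla^2\mathcal{L}^{(m)}+\lambda\bm{I})^{-1}$ then yields a tail of order $\sqrt{\log n/L}$, which after division by $np$ is dominated by the target $\sqrt{\log n/(npL)}$. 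The higher-order remainder from the linearization is absorbed using the $\ell_2$ and preliminary $\ell_\infty$ estimates already in hand. A union bound over $m\in[n]$, combined with the hypothesis $L\leq c_2 n^{c_3}$ to guarantee uniform concentration over a polynomial-sized net in parameter space, delivers $\|\bm{\theta}-\bm{\theta}^\circ\|_\infty\lesssim\sqrt{\log n/(npL)}$ with probability $1-O(n^{-5})$, completing the proof via the Taylor reduction.
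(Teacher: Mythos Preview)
Your overall architecture---reduce to an $\ell_\infty$ bound on $\bm{\theta}-\bm{\theta}^\circ$, obtain a sharp $\ell_2$ bound, then decouple via a leave-one-out surrogate---is the right one, but it differs from the paper's route and, as written, has two genuine gaps.

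\textbf{The cross-term is not controlled.} Your leave-one-out replaces $y_{j,m}$ by $y_{j,m}^*$ but keeps the random edge indicators $\ind_{(j,m)\in\mathcal{E}}$. As a result, the ``second piece'' identity reads
\[
(\theta_m^{(m)}-\theta_m^\circ)\Bigl(\sum_{j:(m,j)\in\mathcal{E}}\alpha_{mj}+\lambda\Bigr)=\sum_{j:(m,j)\in\mathcal{E}}\alpha_{mj}(\theta_j^{(m)}-\theta_j^\circ)-\lambda\theta_m^\circ,
\]
and the simple route---Cauchy--Schwarz on the right---gives $\sqrt{d_m}\,\|\bm{\theta}^{(m)}-\bm{\theta}^\circ\|_2\lesssim\sqrt{np}\cdot\sqrt{\log n/(pL)}$, which after division by the prefactor $\asymp np$ yields $\sqrt{\log n/(np^2L)}$, a factor $1/\sqrt{p}$ too large in the sparse regime $p\asymp\log n/n$. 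Your proposed fix via linearization asserts a ``deterministic weight vector'' $\bm{\alpha}_m^\top(\nabla^2\mathcal{L}^{(m)}+\lambda\bm{I})^{-1}$, but $\bm{\alpha}_m$ comes from a mean-value point between $\bm{\theta}^\circ$ and $\bm{\theta}^{(m)}$, and the Hessian likewise; both depend on $\bm{\theta}^{(m)}$ and hence on the very noise $\nabla\mathcal{L}^{(m)}(\bm{\theta}^\circ)$ you are conditioning on. Replacing them by their values at $\bm{\theta}^\circ$ makes the weights deterministic given $\mathcal{G}$, but then the remainder (and the bias term $\lambda\bm{\theta}^\circ$ pushed through the inverse Hessian) must be controlled, and your sketch offers no mechanism beyond ``absorbed using the $\ell_2$ and preliminary $\ell_\infty$ estimates,'' which is exactly the circularity you are trying to break. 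The paper avoids this altogether by defining the leave-one-out loss with $p\sum_{i\neq m}$ in place of $\sum_{i:(i,m)\in\mathcal{E}}$; then the cross-term is $p\sum_{i\neq m}\alpha_{mi}(\theta_i^{(m)}-\theta_i^\circ)$, and Cauchy--Schwarz against $\|\bm{\theta}^{(m)}-\bm{\theta}^\circ\|_2$ already gives the correct order $p\sqrt{n}\cdot\sqrt{\log n/(pL)}$, with no linearization needed.

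\textbf{The strong-convexity bootstrap is circular.} Your $\ell_2$ step and your bound on $\|\bm{\theta}-\bm{\theta}^{(m)}\|_2$ both invoke $\lambda_{\min,\perp}(\nabla^2\mathcal{L}_\lambda(\tilde{\bm{\theta}}))\gtrsim np$ at a mean-value point $\tilde{\bm{\theta}}$; this requires $\|\tilde{\bm{\theta}}-\bm{\theta}^\circ\|_\infty=O(1)$, which in turn presupposes the $\ell_\infty$ control you are proving. The paper breaks this loop by analyzing the gradient-descent iterates $\bm{\theta}^t$ (initialized at $\bm{\theta}^*$) and the leave-one-out iterates $\bm{\theta}^{t,(m)}$ \emph{inductively in $t$}: the hypotheses $\|\bm{\theta}^t-\bm{\theta}^*\|_\infty\lesssim\sqrt{\log n/(npL)}$ and $\|\bm{\theta}^t-\bm{\theta}^{t,(m)}\|_2\lesssim\sqrt{\log n/(npL)}$ are maintained step by step, so strong convexity is always available at the current iterate. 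The assumption $L\leq c_2 n^{c_3}$ enters only here, to ensure that a polynomial number of iterations $T=n^5$ drives $\|\bm{\theta}^T-\bm{\theta}\|_2$ below the target; it has nothing to do with $\epsilon$-nets.

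In short, your direct first-order-condition analysis is a reasonable alternative in spirit, but to make it go through at the optimal rate you would need to (i) modify the leave-one-out to also integrate out the graph at vertex $m$, and (ii) supply a genuine mechanism---either a continuity argument or the paper's inductive trajectory analysis---to certify restricted strong convexity at the intermediate points.
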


Theorems \ref{thm:spectral-loss-infty}--\ref{thm:MLE-main} indicate that if the number of comparisons associated with each item --- which concentrates around $npL$ --- exceeds the order of $\log n$, then both methods are able to achieve a small $\ell_\infty$ error when estimating the scores.

Recall that the $\ell_{2}$ estimation error of the spectral method
has been characterized by \cite{negahban2016rank} (or Theorem \ref{thm:L2-pi}
of this paper that improves it by removing the logarithmic factor), which obeys
\begin{equation}
\frac{\|\bm{\pi}-\bm{\pi}^{*}\|_{2}}{\|\bm{\pi^{*}}\|_{2}}\lesssim\sqrt{\frac{\log n}{npL}}\label{eq:spectral-loss-L2}
\end{equation}
with high probability. Similar theoretical guarantees have been derived
for another variant of the MLE (the constrained version) under a uniform sampling model as well \citep{negahban2016rank}.
In comparison, our results indicate that the estimation errors for
both algorithms are almost evenly spread out across all coordinates
rather than being localized or clustered. Notably, the pointwise errors
revealed by Theorems \ref{thm:spectral-loss-infty}-\ref{thm:MLE-main}
immediately lead to exact top-$K$ identification as claimed by Theorem
\ref{thm:main-samples}.

\begin{proof}[\textbf{Proof of Theorem \ref{thm:main-samples}}]In
what follows, we prove the theorem for the spectral method part. The
regularized MLE part follows from an almost identical argument and hence is omitted.

Since the spectral algorithm ranks the items in accordance with the
score estimate $\bm{\pi}$, it suffices to demonstrate that
\[
\pi_{i}-\pi_{j}>0,\qquad\forall1\leq i\leq K,\text{ }K+1\leq j\leq n.
\]
To this end, we first apply the triangle inequality to get
\begin{align}
\frac{\pi_{i}-\pi_{j}}{\|\bm{\pi}^{*}\|_{\infty}} & \geq\frac{\pi_{i}^{*}-\pi_{j}^{*}}{\|\bm{\pi}^{*}\|_{\infty}}-\frac{|\pi_{i}-\pi_{i}^{*}|}{\|\bm{\pi}^{*}\|_{\infty}}-\frac{|\pi_{j}-\pi_{j}^{*} |}{\|\bm{\pi}^{*}\|_{\infty}}\geq\Delta_{K}-\frac{2\|\bm{\pi}-\bm{\pi}^{*}\|_{\infty}}{\|\bm{\pi}^{*}\|_{\infty}}.\label{eq:pi-ij-sep}
\end{align}
In addition, it follows from Theorem \ref{thm:spectral-loss-infty}
as well as our sample complexity assumption that
\[
\frac{\|\bm{\pi}-\bm{\pi}^{*}\|_{\infty}}{\|\bm{\pi^{*}}\|_{\infty}}\lesssim\sqrt{\frac{\log n}{npL}}\qquad\text{and}\qquad n^{2}pL\gtrsim\frac{n\log n}{\Delta_{K}^{2}}.
\]
These conditions taken collectively imply that
$\frac{\|\bm{\pi}-\bm{\pi}^{*}\|_{\infty}}{\|\bm{\pi}^{*}\|_{\infty}}<\frac{1}{2}\Delta_{K}$ 
as long as $\frac{npL\Delta_{K}^{2}}{\log n}$ exceeds some sufficiently large constant.
Substitution into (\ref{eq:pi-ij-sep}) reveals that $\pi_{i}-\pi_{j}>0$,
as claimed. \end{proof}

\subsection{Heuristic arguments}

We pause to develop some heuristic explanation as to why the
 estimation errors are expected to be spread out across all entries.
For simplicity, we focus on the case where $p=1$ and $L$ is sufficiently
large, so that $\bm{y}$ and $\bm{P}$ sharply concentrate around
$\bm{y}^{*}$ and $\bm{P}^{*}$, respectively.

We begin with the spectral algorithm. Since $\bm{\pi}$ and $\bm{\pi}^{*}$
are respectively the invariant distributions of the Markov chains
induced by $\bm{P}$ and $\bm{P}^{*}$, we can decompose
\begin{align}
\left(\bm{\pi}-\bm{\pi}^{*}\right)^{\top} & =\bm{\pi}^{\top}\bm{P}-\bm{\pi}^{*\top}\bm{P}^{*}=\left(\bm{\pi}-\bm{\pi}^{*}\right)^{\top}\bm{P}+\underset{:=\bm{\xi}}{\underbrace{\bm{\pi}^{*\top}\left(\bm{P}-\bm{P}^{*}\right)}}.\label{eq:pi-1}
\end{align}
When $p=1$ and $\frac{w_{\max}}{w_{\min}}\asymp1$, the entries of
$\bm{\pi}^{*}$ (resp.~the off-diagonal entries of $\bm{P}^{*}$
and $\bm{P}-\bm{P}^{*}$) are all of the same order and, as a result,
the energy of the uncertainty term $\bm{\xi}$ is spread out (using standard concentration inequalities). In fact,
we will demonstrate in Section \ref{subsec:Proof-outline-of-Theorem-Spectral} that
\begin{align}
\frac{\left\Vert \bm{\xi}\right\Vert _{\infty}}{\|\bm{\pi}^{*}\|_{\infty}}\lesssim\sqrt{\frac{\log n}{npL}}\asymp
	\frac{\|\bm{\pi}-\bm{\pi}^{*}\|_{2}\sqrt{\log n}}{\|\bm{\pi}^{*}\|_{2}}\label{eq:spread_out},
\end{align}
which coincides with the optimal rate. Further, if we look at each
entry of (\ref{eq:pi-1}), then for all $1\leq m\leq n$,
\begin{align}
&\pi_{m}-\pi_{m}^{*}  =\big[\left(\bm{\pi}-\bm{\pi}^{*}\right)^{\top}\bm{P}\big]_{m}+\xi_{m},\nonumber \\
 & =\underset{\text{contraction}}{\underbrace{\vphantom{\left[\begin{array}{c}
\pi_{1}-\pi_{1}^{*}\\
\vdots\\
\pi_{n}-\pi_{n}^{*}
\end{array}\right]}\left(\pi_{m}-\pi_{m}^{*}\right)P_{m,m}}}+\underset{\text{error averaging}}{\underbrace{\left[P_{1,m},\cdots,P_{m-1,m},0,P_{m+1,m},\cdots,P_{n,m}\right]\left[\begin{array}{c}
\pi_{1}-\pi_{1}^{*}\\
\vdots\\
\pi_{n}-\pi_{n}^{*}
\end{array}\right]}}+\xi_{m}.\label{eq:pi-2}
\end{align}
By construction of the transition matrix, one can easily verify that
$P_{m,m}$ is bounded away from $1$ and $P_{j,m}\asymp \frac{1}{n}$ for all $j\neq m$. As a consequence, the
identity $\bm{\pi}^\top\bm{P}=\bm{\pi}^\top$ allows one to treat each $\pi_{m}-\pi_{m}^{*}$
as a mixture of three effects: (i) the first term of (\ref{eq:pi-2}) behaves as an entrywise
contraction of the error; (ii) the second term of (\ref{eq:pi-2})
is a (nearly uniformly weighted) average of the errors over all coordinates,
which can essentially be treated as a \emph{smoothing} operator applied
to the error components; and (iii) the uncertainty term
$\xi_{m}$. Rearranging terms in (\ref{eq:pi-2}), we are
left with
\begin{align}
\left(1-P_{m,m}\right)\left|\pi_{m}-\pi_{m}^{*}\right| & \lesssim\frac{1}{n}\sum\nolimits _{i=1}^{n}\left|\pi_{i}-\pi_{i}^{*}\right|+\xi_{m},\quad \forall m
	\label{eq:pi-2-1}
\end{align}
which further gives,
\begin{align}
\left\Vert \bm{\pi}-\bm{\pi}^{*}\right\Vert _{\infty} & \lesssim\frac{1}{n}\sum\nolimits _{i=1}^{n}\left|\pi_{i}-\pi_{i}^{*}\right|+\Vert\bm{\xi}\Vert_{\infty}.\label{eq:pi-2-1-1}
\end{align}
There are two possibilities compatible with this bound \eqref{eq:pi-2-1-1}: (1) $\left\Vert \bm{\pi}-\bm{\pi}^{*}\right\Vert _{\infty}\lesssim\frac{1}{n}\sum_{i=1}^{n}\left|\pi_{i}-\pi_{i}^{*}\right|$,
and (2) $\left\Vert \bm{\pi}-\bm{\pi}^{*}\right\Vert _{\infty}\lesssim\|\bm{\xi}\|_{\infty}\lesssim\frac{\|\bm{\pi}-\bm{\pi}^{*}\|_{2}}{\|\bm{\pi}^{*}\|_{2}}\left\Vert \bm{\pi}^{*}\right\Vert _{\infty}$ by \eqref{eq:spread_out}.
In either case, the errors are fairly delocalized, revealing that
\[
	\frac{\left\Vert \bm{\pi}-\bm{\pi}^{*}\right\Vert _{\infty}}{\left\Vert \bm{\pi}^{*}\right\Vert _{\infty}}\lesssim \left\{ \frac{1}{n} \frac{ \|\bm{\pi} - \bm{\pi}^* \|_1 }{ \left\Vert \bm{\pi}^{*}\right\Vert _{\infty} } ,~  \frac{\|\bm{\pi}-\bm{\pi}^{*}\|_{2}\sqrt{\log n}}{\|\bm{\pi}^{*}\|_{2}} \right\}.
\]

We now move on to the regularized MLE, following a very similar
argument. By the optimality condition that $\nabla\cL_{\lambda}\left(\bm{\theta}\right)=\bm{0}$,
one can derive (for some $\eta$ to be specified later)
\begin{align*}
\bm{\theta}-\bm{\theta}^{*} & =\bm{\theta}-\eta\nabla\cL_{\lambda}\left(\bm{\theta}\right)-\bm{\theta}^{*}\\
 & =\bm{\theta}-\eta\nabla\cL_{\lambda}\left(\bm{\theta}\right)-\left(\bm{\theta}^{*}-\eta\nabla\cL_{\lambda}\left(\bm{\theta}^{*}\right)\right)-\underset{:=\bm{\zeta}}{\underbrace{\eta\nabla\cL_{\lambda}\left(\bm{\theta}^{*}\right)}}\\
 &   \approx \left(\bm{I}-\eta\nabla^{2}\cL_{\lambda}\big(\bm{\theta}^*\big)\right)\left(\bm{\theta}-\bm{\theta}^{*}\right)-\bm{\zeta}.
\end{align*}
Write $\nabla^{2}\cL_{\lambda}\left(\bm{\theta}^* \right)=\bm{D}-\bm{A}$,
where $\bm{D}$ and $\bm{A}$ denote respectively the diagonal and
off-diagonal parts of $\nabla^{2}\cL_{\lambda}\left(\bm{\theta}^* \right)$. Under our assumptions, one can check that $D_{m,m}\asymp n$
for all $1\leq m\leq n$ and $A_{j,m}\asymp1$ for any $j\neq m$.
With these notations in place, one can write the entrywise error as
follows
\[
\theta_{m}-\theta_{m}^{*}=\left(1-\eta D_{m,m}\right)\left(\theta_{m}-\theta_{m}^{*}\right)+\sum_{j:j\neq m}\eta A_{j,m}\left(\theta_{j}-\theta_{j}^{*}\right)-\zeta_{m}.
\]
By choosing $\eta=c_{2}/n$ for some sufficiently small constant $c_{2}>0$
, we get $1-\eta D_{m,m}<1$ and $\eta A_{j,m}\asymp1/n$. Therefore,
the right-hand side of the above relation also comprises a contraction
term as well as an error smoothing term, similar to (\ref{eq:pi-2}).
Carrying out the same argument as for the spectral method, we see
that the estimation errors of the regularized MLE are expected to be spread out.

\subsection{Numerical experiments}

It is worth noting that extensive numerical experiments on both synthetic and real data have already
been conducted in \cite{negahban2016rank} to confirm the practicability
of both the spectral method and the regularized MLE.
See also \cite{chen2015spectral} for the experiments on the Spectral-MLE algorithm. This section provides some additional simulations to complement their experimental
results as well as our theory. Throughout the experiments, we set the number of items $n$
to be $200$, while the number of repeated comparisons $L$ and the
edge probability $p$ can vary with the experiments. Regarding the tuning
parameters, we choose $d=2d_{\max}$ in the spectral method where $d_{\max}$ is the maximum degree of the graph and $\lambda=2\sqrt{\frac{np\log n}{L}}$
in the regularized MLE, which are consistent with the configurations
considered in the main theorems. Additionally, we also display the experimental results for the unregularized MLE, i.e.~$\lambda = 0$. All of the results are averaged over
100 Monte Carlo simulations.

\begin{figure}
\centering
\begin{tabular}{ccc}
\includegraphics[width=0.3\textwidth]{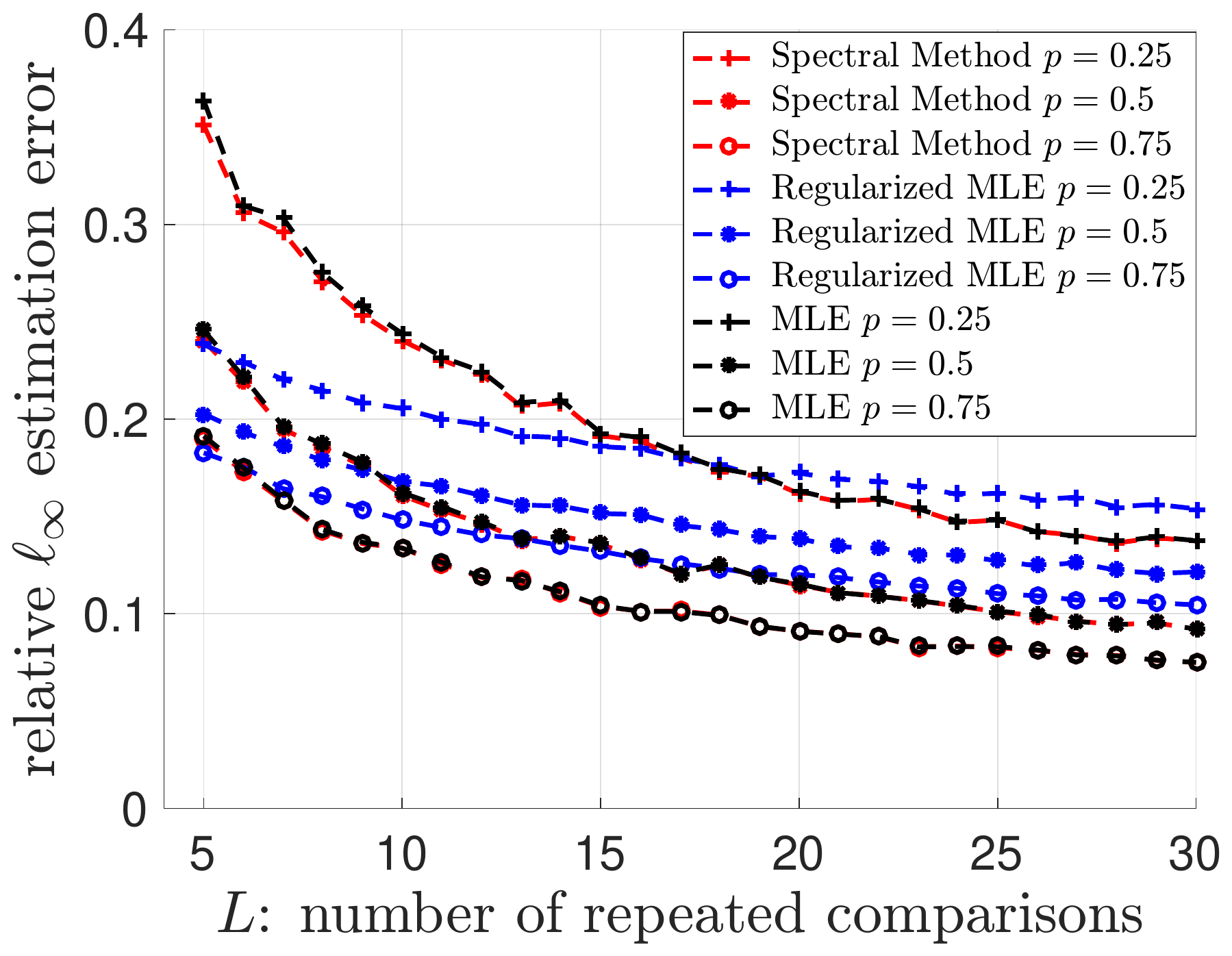} &
 \includegraphics[width=0.3\textwidth]{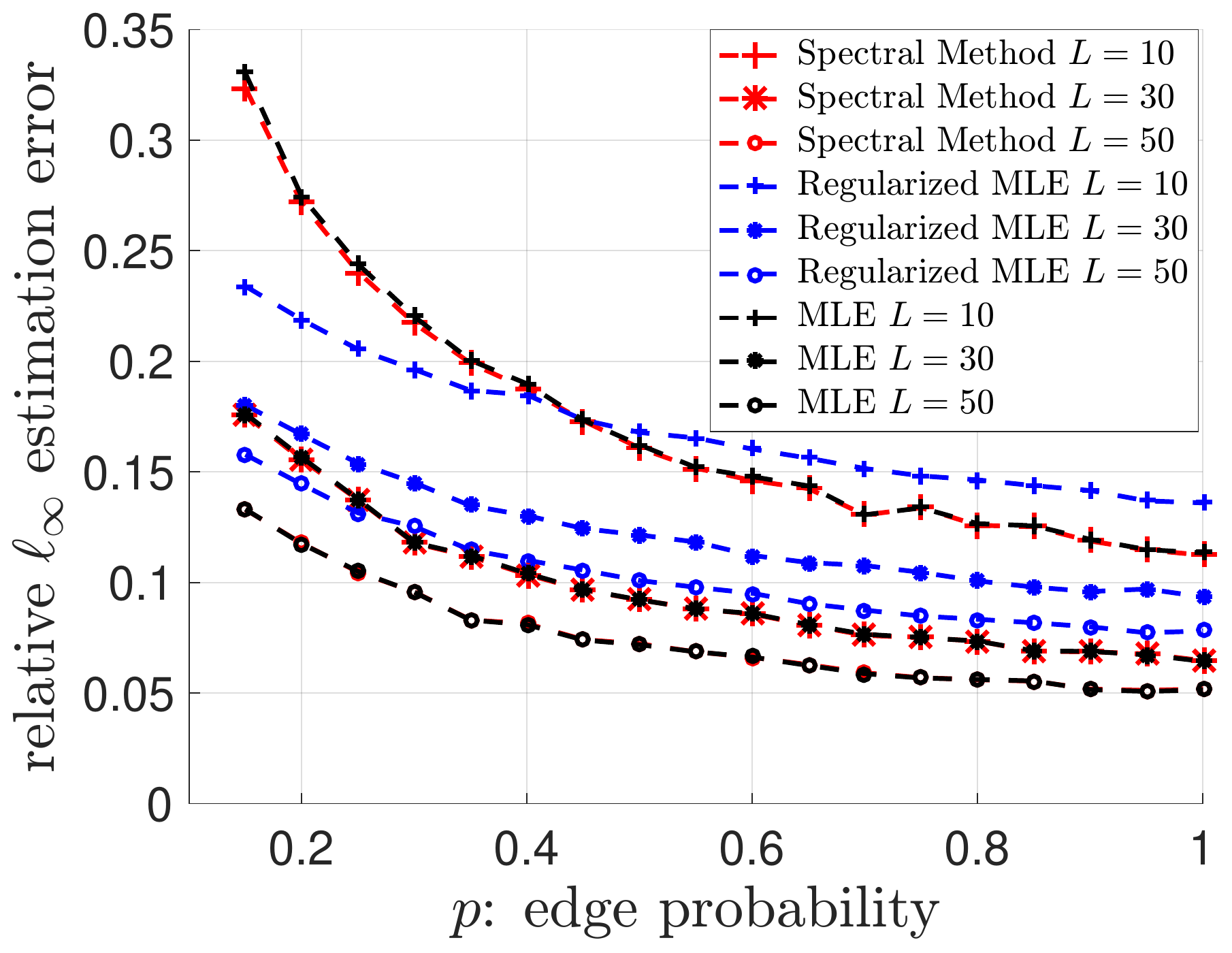} &
 \includegraphics[width=0.3\textwidth]{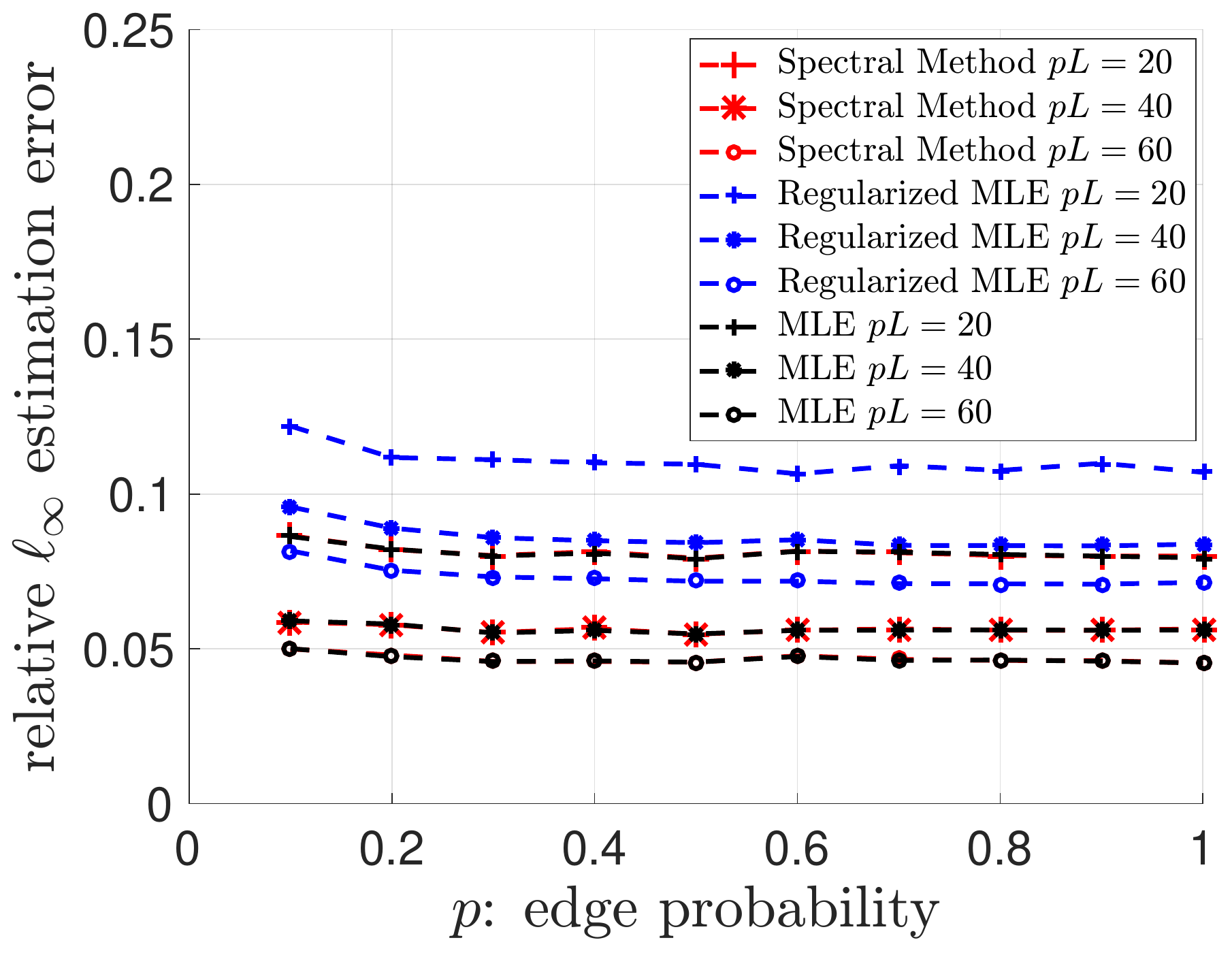}
 \tabularnewline
(a) & (b) & (c)\tabularnewline
\end{tabular}
\caption{Empirical performance of the spectral method and the (regularized)
MLE: (a) $\ell_{\infty}$ error vs.~$L$, (b) $\ell_{\infty}$ error
vs.~$p$ and (c) $\ell_{\infty}$ error vs.~$n^2 p L$ \label{fig:infty_vs_L-p} }
\end{figure}

\begin{figure}
\centering

\begin{tabular}{ccc}
\includegraphics[width=0.3\textwidth]{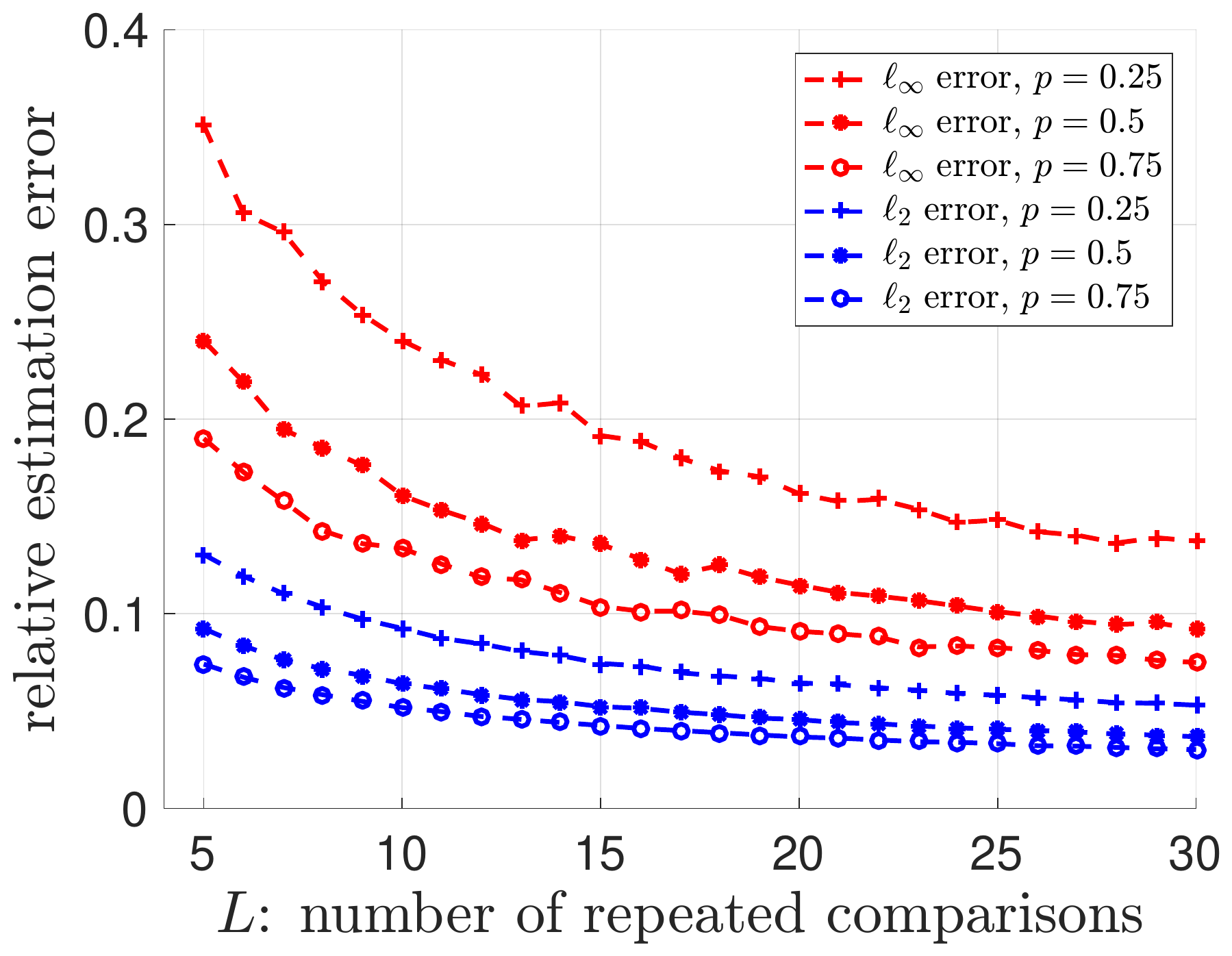}  & \includegraphics[width=0.3\textwidth]{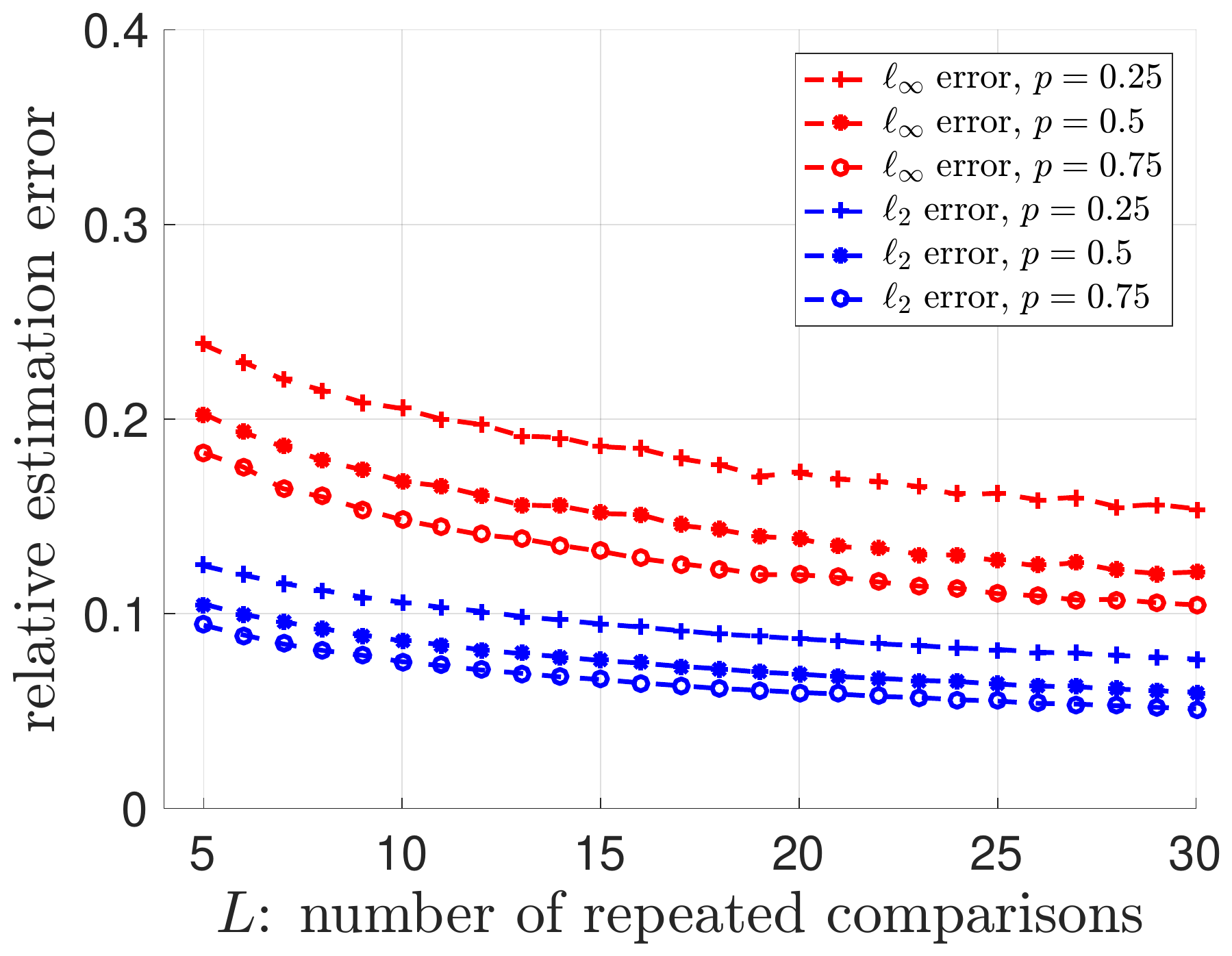} & \includegraphics[width=0.3\textwidth]{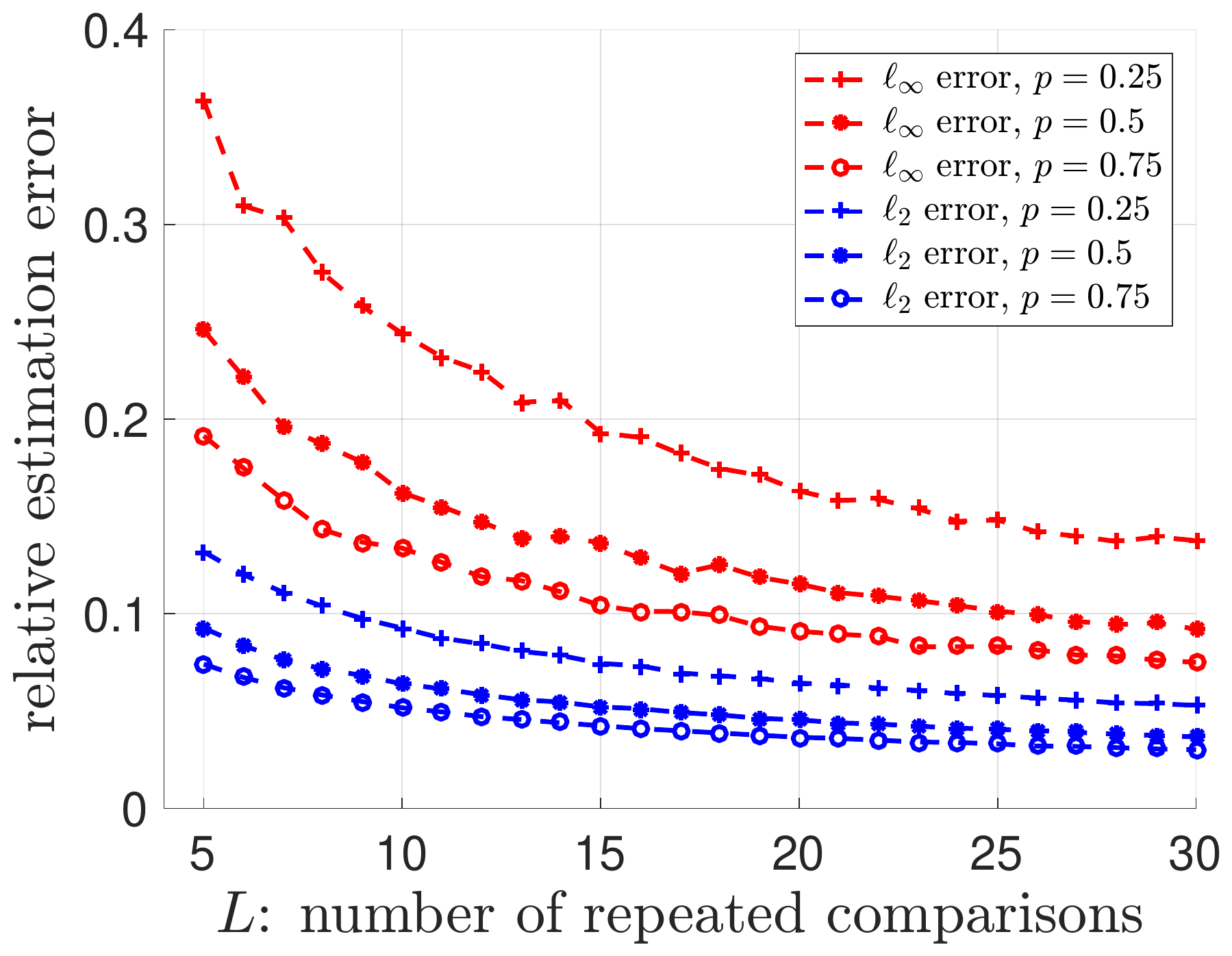}\tabularnewline
(a) spectral method & (b) regularized MLE & (c) MLE\tabularnewline
\end{tabular}

\caption{Comparisons between the relative $\ell_{\infty}$ error and the relative
$\ell_{2}$ error for (a) the spectral method, (b) the regularized
MLE and (c) the MLE.\label{fig:infty_vs_2}}
\end{figure}

\begin{figure}
\centering

\includegraphics[width=0.4\textwidth]{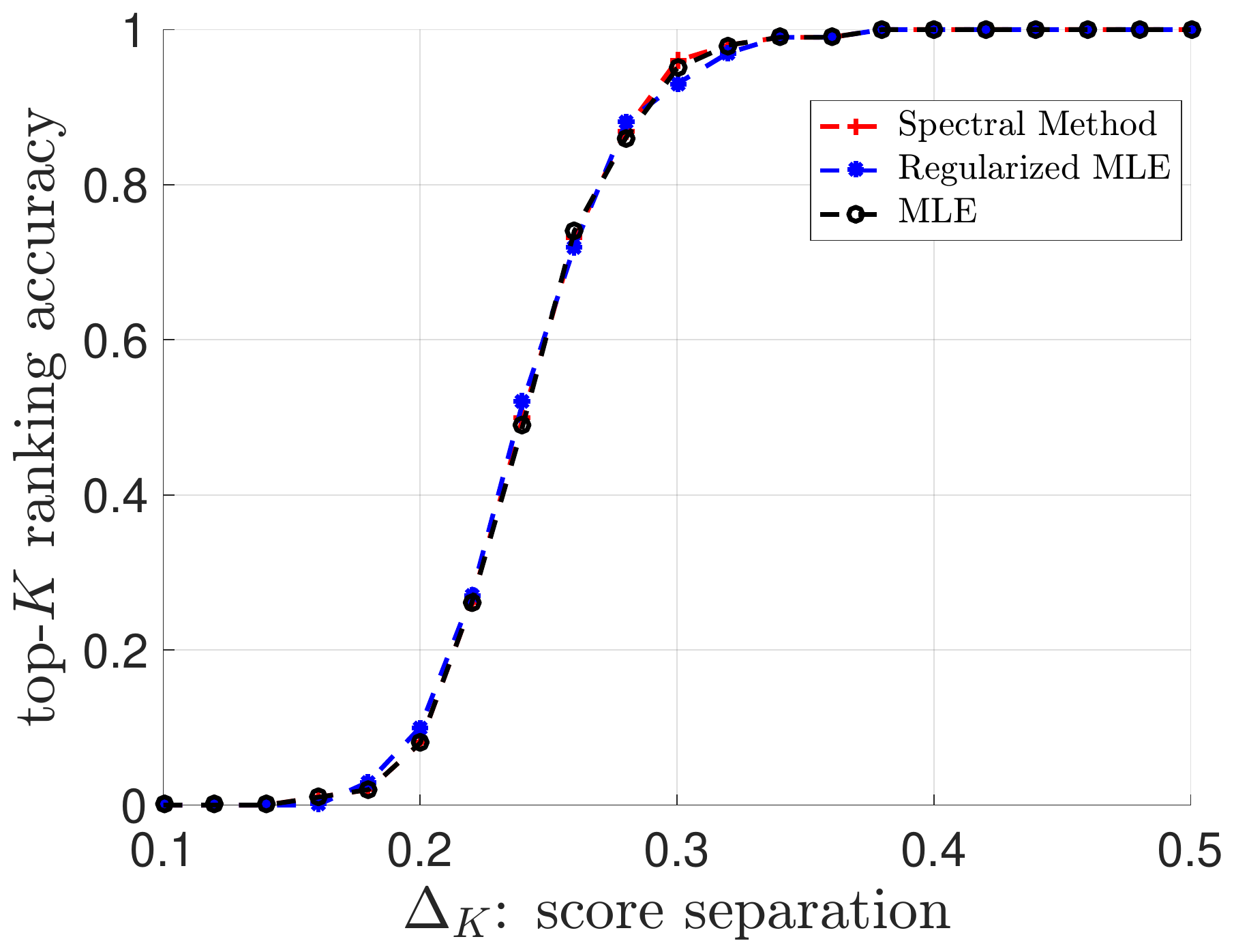}

\caption{The top-$K$ ranking accuracy of both the spectral method and the regularized
MLE. \label{fig:accuracy}}
\end{figure}

We first investigate the  $\ell_{\infty}$ error of the spectral method
and the (regularized) MLE when estimating the preference scores. To
this end, we generate the latent scores $w_{i}^{*}$ ($1\le i\leq n$)
independently and uniformly at random over the interval $\left[0.5,1\right]$.
Figure \ref{fig:infty_vs_L-p}(a) (resp.~Figure \ref{fig:infty_vs_L-p}(b))
displays the entrywise error in the spectral score estimation as the number
of repeated comparisons $L$ (resp.~the edge probability $p$) varies.
As is seen from the plots, the $\ell_{\infty}$ error of all
methods gets smaller as $p$ and $L$ increase, confirming our results
in Theorems \ref{thm:spectral-loss-infty}-\ref{thm:MLE-main}. Next, we show in Figure \ref{fig:infty_vs_L-p}(c) the relative $\ell_\infty$ error while fixing the total number of samples (i.e.~$n^2pL$).  It can be seen that the performance almost does not change if the sample complexity $n^2pL$ remains the same. It is also interesting to see that the $\ell_{\infty}$ error of the spectral method and the MLE are very similar.
In addition,  Figure \ref{fig:infty_vs_2} illustrates the relative
$\ell_{\infty}$ error and the relative $\ell_{2}$ error in score estimation for all three methods. As we can see, the relative
$\ell_{\infty}$ errors are not much larger than the relative $\ell_{2}$
errors (recall that $n=200$), thus offering empirical evidence that
the errors in the score estimates are spread out across all entries.

Further, we examine the top-$K$ ranking accuracy of all three methods.
Here, we fix $p=0.25$ and $L=20$, set ${\it K}=10$, and let $w^*_{i}=1$
for all $1\leq i\leq K$ and $w^*_{j}=1-\Delta$ for all $K+1\leq j\leq n$.
By construction, the score separation satisfies $\Delta_{K}=\Delta$.
Figure \ref{fig:accuracy} illustrates the accuracy
in identifying the top-$K$ ranked items. The performance of them improves when the score separation becomes larger, which matches our theory in Theorem \ref{thm:main-samples}.

\subsection{Other related works}

The problem of ranking based on partial preferences has received much
attention during the past decade. Two types of observation models
have been considered: $(1)$ \emph{the cardinal-based model}, where
users provide explicit numerical ratings of the items,
$(2)$ \emph{the ordinal-based model}, where users are asked to make
comparative measurements. See \cite{ammar2011ranking}
for detailed  comparisons between them.

In terms of the ordinal-based model --- and in particular, ranking
from pairwise comparisons --- both parametric and nonparametric models
have been extensively studied. For example, \cite{hunter2004mm} examined
variants of the parametric BTL model, and established
the convergence properties of the minorization-maximization
algorithm for computing the MLE. 
Moreover, the BTL model falls under the category of {\em low-rank parametric models}, since the preference matrix is generated by passing a rank-2 matrix through the logistic link function
\citep{rajkumar2016can}. Additionally, the work  \cite{jiang2011statistical} proposed a least-squares type method to estimate the full ranking, which generalizes the simple Borda count algorithm \citep{ammar2011ranking}.
For many of these algorithms, the sample complexities needed for perfect total ranking were determined by
\cite{rajkumar2014statistical}, although the top-$K$ ranking accuracy was not considered there.

Going beyond the parametric
models, a recent line of works \cite{shah2016stochastically,shah2015simple,chen2017competitive, martin2017worstranking}
considered the nonparametric \emph{stochastically transitive model},
where the only model assumption is that the comparison probability
matrix follows certain transitivity rules. This type of models subsumes
the BTL model as a special case. For instance, \cite{shah2015simple}
suggested a simple counting-based algorithm which can
reliably recover the top-$K$ ranked items for various models.
However, the sampling paradigm considered therein is quite different
from ours in the sparse regime; for instance, their model does not
come close to the setting where $p$ is small but $L$ is large, which
is the most challenging regime of the model adopted in our paper and
\cite{negahban2016rank,chen2015spectral}.

All of the aforementioned papers concentrate on the case where there
is a single ground-truth ordering. It would also be interesting to
investigate the scenarios where different users might have different
preference scores.
To this end, \cite{negahban2017learning,lu2014individualized} imposed
the low-rank structure on the underlying preference matrix and adopted
the nuclear-norm relaxation approach to recover the users' preferences.
Additionally, several papers explored the ranking problem for
the more general Plackett-Luce model \citep{hajek2014minimax,SoufianiChenParkesXioa2013},
in the presence of adaptive sampling \citep{jamieson2011active,Hullermeier2013topKranking,heckel2016active,agarwal2017learning},
for the crowdsourcing scenario \citep{chen2013pairwise},
and in the adversarial setting \citep{suh2017adversarial}. These are
beyond the scope of the present paper.

Speaking of the error metric, the $\ell_{\infty}$ norm is appropriate for top-$K$ ranking problem and other learning problems as well. In particular, $\ell_{\infty}$   perturbation bounds for eigenvectors of symmetric matrices \citep{KLo16,fan2016ell_,EBW17,WZ2017unfinished} and singular vectors of general matrices \citep{KXi16} have been studied.
In stark contrast, we study the $\ell_{\infty}$ norm errors of the leading eigenvector of a class of asymmetric matrices (probability transition matrix) and the regularized MLE. Furthermore, most existing results require the expectations of data matrices to have low rank, at least approximately. We do not impose such assumptions.

When it comes to the technical tools, it is worth noting that the
leave-one-out idea has been invoked to analyze random designs for
other high-dimensional problems, e.g.~robust M-estimators
\citep{el2015impact}, confidence intervals for Lasso
\citep{javanmard2015biasing}, likelihood ratio test \citep{sur2017likelihood}, and nonconvex statistical learning \citep{ma2017implicit,chen2018gradient}.
In particular, \cite{zhong2017near} and \cite{WZ2017unfinished} use it to precisely characterize entrywise behavior of eigenvectors of a large class of symmetric random matrices, which improves upon  prior  $\ell_{\infty}$ eigenvector analysis. Consequently, they are able to show the sharpness of spectral methods in many popular models. Our introduction of leave-one-out auxiliary quantities is similar in spirit to these papers.


Finally,  the family of spectral methods has
been successfully applied in numerous applications, e.g.~matrix completion
\citep{keshavan2009matrix}, phase retrieval \citep{chen2015TWF},
graph clustering \citep{rohe2011spectral,WZ2017unfinished},  joint alignment \citep{chen2016projected}. All
of them are designed based on the eigenvectors of some symmetric matrix,
or the singular vectors if the matrix of interest is asymmetric. Our
paper contributes to this growing literature by establishing a sharp {\it{eigenvector}}
perturbation analysis framework for an important class of \emph{asymmetric}
matrices --- the probability transition matrices.

\section{Extension: general dynamic range\label{sec:Extension}}

All of the preceding results concern the regime with a fixed dynamic
range (i.e.~$\kappa=O(1)$). This section moves on to discussing the case with large $\kappa$.

To start with, by going through the same proof technique, we can readily
obtain \textemdash{} in the general $\kappa$ setting \textemdash{}
the following performance guarantees for both the spectral estimate
$\bm{\pi}$ and the regularized MLE $\bm{\theta}$.

\begin{theorem}\label{thm:spectral-main-general-kappa}Consider the
pairwise comparison model in Section \ref{sec:Problem-formulation}.
Suppose that $p>\frac{c_{0}\kappa^{5}\log n}{n}$ for some sufficiently
large constant $c_{0}>0$, and choose $d=c_{d}np$ for some constant
$c_{d}\geq2$ in Algorithm \ref{alg:spectral}. Then with probability exceeding $1-O\big(n^{-5}\big)$,
\begin{enumerate}
\item the spectral estimate $\bm{\pi}$ satisfies
\[
\frac{\left\Vert \bm{\pi}-\bm{\pi}^{*}\right\Vert _{\infty}}{\left\Vert \bm{\pi}^{*}\right\Vert _{\infty}}\lesssim\kappa\sqrt{\frac{\log n}{npL}},
\]
 where $\bm{\pi}^{*}$
is the normalized score vector as defined in \eqref{eq:defn-pi-star}.
\item the set of top-$K$ ranked items can be
recovered exactly by the spectral method given in Algorithm \ref{alg:spectral}, as long as
\[
	\frac{n^{2}pL}{2}\geq
		 c_{1} \frac{\kappa^{2}n\log n}{\Delta_{K}^{2}} 
\]
for some sufficiently large constant $c_{1}>0$.
\end{enumerate}
\end{theorem}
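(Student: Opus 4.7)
The plan is to extend the proof of Theorem \ref{thm:spectral-loss-infty} to general $\kappa$ by tracking how the condition number propagates through the leave-one-out analysis, and then to deduce Part 2 via the same triangle-inequality argument used in the proof of Theorem \ref{thm:main-samples}.

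First I would revisit the key quantities whose dependence on $\kappa$ has so far been absorbed into constants. The normalized score vector satisfies $\|\bm{\pi}^*\|_\infty / \min_i \pi_i^* = w_{\max}/w_{\min} = \kappa$, so converting between the $\bm{\pi}^*$-weighted norm $\|\cdot\|_{\bm{\pi}^*}$ and the unweighted $\ell_2$ norm costs factors of $\sqrt{\kappa}$. The off-diagonal entries of $\bm{P}^*$ satisfy $P_{j,m}^* \asymp 1/(\kappa d)$ whenever $(j,m)\in\mathcal{E}$, so $1 - P_{m,m}^* \asymp 1/\kappa$. Using the reversibility relation $\pi_i^* P_{i,j}^* = \pi_j^* P_{j,i}^*$ together with the spectral structure of the Laplacian $\bm{L}_\mathcal{G}$ (whose algebraic connectivity remains $\Theta(np)$ w.h.p.~when $p \gtrsim \log n / n$), the spectral gap of $\bm{P}^*$ degrades by only a polynomial power of $\kappa$. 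These ingredients feed into a $\kappa$-inflated version of Theorem \ref{thm:L2-pi}; the assumption $p \gtrsim \kappa^5 \log n / n$ is precisely what keeps the associated Bernstein/concentration bounds uniformly valid throughout the argument.

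Next, to prove Part 1, I would repeat the entrywise fixed-point derivation in (\ref{eq:pi-1})--(\ref{eq:pi-2-1-1}) with sharper $\kappa$-bookkeeping: the left-hand side $(1-P_{m,m})|\pi_m - \pi_m^*|$ now contributes only a $\Theta(1/\kappa)$ contraction factor, while the error-smoothing term and the noise $\|\bm{\xi}\|_\infty$ each absorb an additional $\kappa$ factor coming from the imbalance of $\bm{\pi}^*$ and from the generalized $\ell_2$ bound. The leave-one-out auxiliary chains $\bm{P}^{(m)}$, in which the $m$-th row and column of $\bm{P}$ are replaced by their population counterparts, decouple $\bm{\pi}^{(m)}$ from the randomness in the $m$-th row and hence let me control the cross term $(\bm{\pi}^{(m)})^{\!\top}(\bm{P}-\bm{P}^*)$ entry by entry via Bernstein's inequality. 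Combining the contraction, the smoothing, and this concentration yields $\|\bm{\pi}-\bm{\pi}^*\|_\infty/\|\bm{\pi}^*\|_\infty \lesssim \kappa\sqrt{\log n/(npL)}$, as claimed.

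Part 2 then follows from Part 1 by the same argument as in the proof of Theorem \ref{thm:main-samples}. For any $i\leq K < j$, the triangle inequality yields
\begin{align*}
\frac{\pi_i - \pi_j}{\|\bm{\pi}^*\|_\infty} \;\geq\; \frac{\pi_i^* - \pi_j^*}{\|\bm{\pi}^*\|_\infty} - \frac{2\|\bm{\pi}-\bm{\pi}^*\|_\infty}{\|\bm{\pi}^*\|_\infty} \;\geq\; \Delta_K - C\,\kappa\sqrt{\frac{\log n}{npL}},
\end{align*}
where I have used $(\pi_K^* - \pi_{K+1}^*)/\|\bm{\pi}^*\|_\infty = (w_K^*-w_{K+1}^*)/w_1^* \geq \Delta_K$. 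Requiring the right-hand side to be strictly positive translates directly into the sample complexity $n^2pL/2 \gtrsim \kappa^2 n\log n/\Delta_K^2$. The main obstacle is the careful bookkeeping in the second paragraph: I must ensure that every leave-one-out estimate and every spectral-gap bound degrades by only a polynomial power of $\kappa$, and that the combined powers are consistent with the $\kappa^5$ exponent imposed in the hypothesis on $p$.
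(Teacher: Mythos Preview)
Your proposal is correct and follows essentially the same route as the paper: the paper's Section~\ref{sec:Analysis-for-spectral} carries out exactly the entrywise fixed-point decomposition you describe, splitting $\pi_m-\pi_m^*$ into a noise term $I_1^m$, a contraction term $I_2^m$ (with contraction factor $\asymp 1/\kappa$ as you note), and a leave-one-out controlled remainder $I_3^m+I_4^m$, then combines everything using the spectral-gap bound $\gamma\gtrsim 1/\kappa^2$ from Lemma~\ref{lemma:spectral-gap}; Part~2 is deduced from Part~1 via precisely the triangle-inequality argument you give.
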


\begin{theorem}\label{thm:MLE-main-general-kappa}Consider the pairwise
comparison model in Section \ref{sec:Problem-formulation}. Suppose
that $p\geq\frac{c_{0}\kappa^{4}\log n}{n}$ for some sufficiently
large constant $c_{0}>0$ and that $L\leq c_{2}\cdot n^{c_{3}}$ for any absolute constants $c_{2}, c_{3}>0$. Set the regularization
parameter to be $\lambda=c_{\lambda}\frac{1}{\log\kappa}\sqrt{\frac{np\log n}{L}}$
for some absolute constant $c_{\lambda}>0$. Then with probability exceeding $1-O\big(n^{-5}\big)$,
\begin{enumerate}
\item the regularized MLE $\bm{\theta}$ satisfies
\[
\frac{\big\| e^{\bm{\theta}}-e^{\bm{\theta}^{*}-\overline{\theta}^{*}\bm{1}}\big\|_{\infty}}{\big\| e^{\bm{\theta}^{*}-\overline{\theta}^{*}\bm{1}}\big\|_{\infty}}\lesssim\kappa^{2}\sqrt{\frac{\log n}{npL}}
\]
 where $\overline{\theta}^{*}:=\frac{1}{n}\bm{1}^{\top}\bm{\theta}^{*}$
and $e^{\bm{\theta}}:=[e^{\theta_{1}},\cdots,e^{\theta_{n}}]^{\top}$.
\item the set of top-$K$ ranked items can be
recovered exactly by the regularized MLE given in (\ref{eq:regularized-MLE}), as long as
\[
\frac{n^{2}pL}{2}\geq c_{1}\frac{\kappa^{4}n\log n}{\Delta_{K}^{2}}
\]
for some sufficiently large constant $c_{1}>0$.
\end{enumerate}
\end{theorem}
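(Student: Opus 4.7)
The plan is to mirror the proof architecture used for Theorem \ref{thm:MLE-main} but to track the $\kappa$-dependence through every bound, and then to deduce the ranking statement from the resulting entrywise guarantee exactly as in the proof of Theorem \ref{thm:main-samples}. Concretely, I will establish part (1) by a leave-one-out analysis of the regularized MLE $\bm{\theta}$, producing an $\ell_\infty$ bound on $\bm{\theta}-\bm{\theta}^*$; converting this into the multiplicative error on $e^{\bm{\theta}}$ (modulo the global shift $\overline{\theta}^*$) is a one-line application of the mean-value inequality $|e^a-e^b|\le e^{\max(a,b)}|a-b|$, and the exponential scaling is what produces the final factor of $\kappa^2$ rather than $\kappa$. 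Part (2) will then follow from part (1) by a triangle-inequality argument identical to the one used to derive Theorem \ref{thm:main-samples} from Theorem \ref{thm:spectral-loss-infty}.

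For the entrywise bound, the starting point is the optimality condition $\nabla\mathcal{L}_\lambda(\bm{\theta})=\bm{0}$, which I will rewrite as a fixed-point identity
\[
	\bm{\theta}-\bm{\theta}^*=\bigl(\bm{I}-\eta\nabla^2\mathcal{L}_\lambda(\bm{\theta}^*)\bigr)(\bm{\theta}-\bm{\theta}^*)-\eta\nabla\mathcal{L}_\lambda(\bm{\theta}^*)+\text{Hessian remainder},
\]
for a small step size $\eta\asymp 1/(np)$. Under the fixed-$\kappa$ proof, the diagonal entries of $\nabla^2\mathcal{L}_\lambda(\bm{\theta}^*)$ are of order $np$ and the off-diagonals of order $1$; for general $\kappa$ the logistic curvature $\sigma'(\theta_i^*-\theta_j^*)=e^{\theta_i^*-\theta_j^*}/(1+e^{\theta_i^*-\theta_j^*})^2$ is squeezed into the range $[\kappa^{-2}/4,\,1/4]$, so $D_{m,m}\asymp np/\kappa^{2}$ and $|A_{j,m}|\lesssim 1$. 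This degrades the effective contraction factor $1-\eta D_{m,m}$ to $1-\Theta(1/\kappa^2)$, which multiplies the eventual entrywise bound by $\kappa^2$. The stochastic term $\eta\nabla\mathcal{L}_\lambda(\bm{\theta}^*)$ still concentrates at the rate $\sqrt{np\log n/L}$ (by Hoeffding/Bernstein on each coordinate together with the graph degree bound from $p\gtrsim\kappa^4\log n/n$), while the regularization bias contributes $\lambda\|\bm{\theta}^*\|_\infty\lesssim\lambda\log\kappa$; this is precisely why the choice $\lambda=c_\lambda\frac{1}{\log\kappa}\sqrt{np\log n/L}$ is dictated, so that the bias is of the same order as the noise. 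The leave-one-out construction itself is unchanged from the fixed-$\kappa$ proof: for each $m$, I introduce $\bm{\theta}^{(m)}$ minimizing a version of $\mathcal{L}_\lambda$ in which every comparison touching item $m$ is replaced by its population value, show that $\bm{\theta}^{(m)}_m$ is nearly independent of the data incident to $m$, and pair this with a local Lipschitz bound $\|\bm{\theta}-\bm{\theta}^{(m)}\|_2$ obtained from strong convexity of $\mathcal{L}_\lambda$ in directions orthogonal to $\bm{1}$ (which again carries a factor of $\kappa^{2}$). Combining the fluctuation bound on $\theta^{(m)}_m-\theta^*_m$ with the perturbation $|\theta_m-\theta^{(m)}_m|$ produces $\|\bm{\theta}-\bm{\theta}^*\|_\infty\lesssim\kappa^2\sqrt{\log n/(npL)}$, and passing to the exponential scale yields the stated bound on $\|e^{\bm{\theta}}-e^{\bm{\theta}^*-\overline{\theta}^*\bm{1}}\|_\infty/\|e^{\bm{\theta}^*-\overline{\theta}^*\bm{1}}\|_\infty$.

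For part (2), once the entrywise bound in part (1) is in hand, the argument is short: since $e^{\bm{\theta}}$ and $e^{\bm{\theta}^*-\overline{\theta}^*\bm{1}}$ are both scalar multiples of the preference vector (the latter exactly, the former up to estimation error), and since $(w_K^*-w_{K+1}^*)/\|e^{\bm{\theta}^*-\overline{\theta}^*\bm{1}}\|_\infty=\Delta_K$, the triangle inequality argument from the proof of Theorem \ref{thm:main-samples} gives that perfect top-$K$ recovery occurs whenever
\[
	\kappa^{2}\sqrt{\frac{\log n}{npL}}\;\lesssim\;\Delta_{K},
\]
which rearranges exactly to the stated sample complexity $n^{2}pL\gtrsim \kappa^{4}n\log n/\Delta_K^{2}$.

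The main obstacle, and where the $\kappa$-dependence has to be handled most carefully, is the Hessian analysis: on the one hand I need a sharp lower bound on the restricted smallest eigenvalue of $\nabla^{2}\mathcal{L}_\lambda$ (a spectral-gap-type statement for a weighted graph Laplacian whose weights $\sigma'(\theta_i^*-\theta_j^*)$ are at most $\kappa^{-2}/4$), and on the other hand I must ensure that the local strong convexity persists on a neighbourhood of $\bm{\theta}^*$ containing both $\bm{\theta}$ and each leave-one-out surrogate $\bm{\theta}^{(m)}$. This requires a bootstrapping step: one first produces a crude $\ell_2$ bound to guarantee the iterates stay in a $O(\log\kappa)$-neighbourhood of $\bm{\theta}^*$, on which the curvature is still of order $np/\kappa^2$, and only then plugs this back into the leave-one-out contraction. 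The condition $p\gtrsim \kappa^4\log n/n$ and the $\lambda\asymp\frac{1}{\log\kappa}\sqrt{np\log n/L}$ choice are exactly what is needed to close this loop.
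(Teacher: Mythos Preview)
Your high-level plan is sound and your treatment of part~(2) is exactly what the paper does. However, your approach to part~(1) differs from the paper's in a structural way, and your heuristic for the $\kappa$-dependence contains a quantitative slip that obscures where the factor $\kappa^{2}$ really enters.

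On the approach: you propose to work directly with the optimality condition $\nabla\mathcal{L}_{\lambda}(\bm{\theta})=\bm{0}$ and define leave-one-out surrogates $\bm{\theta}^{(m)}$ as minimizers of modified losses. The paper instead adopts an \emph{algorithmic} leave-one-out: it runs gradient descent $\bm{\theta}^{t+1}=\bm{\theta}^{t}-\eta\nabla\mathcal{L}_{\lambda}(\bm{\theta}^{t})$ initialized at the truth $\bm{\theta}^{0}=\bm{\theta}^{*}$, introduces leave-one-out \emph{iterates} $\bm{\theta}^{t,(m)}$, and proves by induction on $t$ that the four quantities $\|\bm{\theta}^{t}-\bm{\theta}^{*}\|_{2}$, $|\theta^{t,(m)}_{m}-\theta^{*}_{m}|$, $\|\bm{\theta}^{t}-\bm{\theta}^{t,(m)}\|_{2}$, and $\|\bm{\theta}^{t}-\bm{\theta}^{*}\|_{\infty}$ stay uniformly bounded. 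The payoff is precisely the bootstrapping issue you flag at the end: the curvature lower bound (Lemma~\ref{lemma:strong-convexity-L}) requires an $\ell_{\infty}$ neighbourhood of $\bm{\theta}^{*}$, not an $\ell_{2}$ one, so your proposed ``crude $\ell_{2}$ bound $\Rightarrow$ curvature'' step does not close as stated. The paper's inductive scheme avoids this circularity because every iterate is kept inside the required $\ell_{\infty}$ ball by hypothesis, starting from $\|\bm{\theta}^{0}-\bm{\theta}^{*}\|_{\infty}=0$.

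On the $\kappa$-tracking: the logistic curvature satisfies $\sigma'(\theta_{i}^{*}-\theta_{j}^{*})\in[1/(4\kappa),1/4]$, not $[\kappa^{-2}/4,1/4]$, since $|\theta_{i}^{*}-\theta_{j}^{*}|\le\log\kappa$ gives $e^{-|\theta_{i}^{*}-\theta_{j}^{*}|}/4\ge 1/(4\kappa)$. Consequently $D_{m,m}\asymp np/\kappa$ and the contraction factor in the paper's induction is $1-\Theta(\eta np/\kappa)$, not $1-\Theta(1/\kappa^{2})$. The factor $\kappa^{2}$ in the final $\ell_{\infty}$ bound arises in two stages: the $\ell_{2}$ error $\|\bm{\theta}^{t}-\bm{\theta}^{*}\|_{2}$ first acquires a single $\kappa$ from the $1/\kappa$ contraction (Lemma~\ref{lemma:ell_2_contraction}); then the $m$th-coordinate leave-one-out bound (Lemma~\ref{lemma:loo-m-entry-contraction}) feeds $\sqrt{n}\,\|\bm{\theta}^{t,(m)}-\bm{\theta}^{*}\|_{2}$ through Cauchy--Schwarz and divides again by the $1/\kappa$ contraction gap, picking up the second $\kappa$. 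Your attribution of both factors to a single $\kappa^{-2}$ curvature is therefore off, even though your final rate is correct. The exponential step (Step~III) does not introduce any additional $\kappa$-factor.
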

\begin{remark}The guarantees on exact top-$K$ recovery
for both the spectral method and the regularized MLE are immediate
consequences of their $\ell_{\infty}$ error bound, as we have argued
in Section \ref{sec:optimal-Linfty}. Hence we will focus on proving the $\ell_{\infty}$
error bound in Sections \ref{sec:Analysis-for-spectral}--\ref{sec:Analysis-for-MLE}. \end{remark}

Notably, the achievability bounds for top-$K$ ranking in Theorems
\ref{thm:spectral-main-general-kappa}--\ref{thm:MLE-main-general-kappa}
do not match the lower bound asserted in Theorem \ref{thm:lower-bound}
in terms of $\kappa$. This is partly because the separation measure
$\Delta_{K}$ fails to capture the information bottleneck for the
general $\kappa$ setting. In light of this, we introduce the following
new measure that seems to be a more suitable metric to reflect the
hardness of the top-$K$ ranking problem:
\begin{equation}
\Delta_{K}^{*}:=\frac{w_{K}^{*}-w_{K+1}^{*}}{w_{K+1}^{*}}\cdot\sqrt{\frac{1}{n}\sum\nolimits_{i=1}^{n}\frac{w_{K+1}^{*}w_{i}^{*}}{\left(w_{K}^{*}+w_{i}^{*}\right)^{2}}},\label{eq:new-hardness-measure}
\end{equation}
which will be termed the \emph{generalized separation measure}. Informally,
$(\Delta_{K}^{*})^{2}$ is a reasonably tight upper bound on certain
normalized KL divergence metric. With this metric in place, we derive another
lower bound as follows.

\begin{theorem}\label{thm:new-lower-bound}Fix $\epsilon\in(0,\frac{1}{2})$,
and let $\mathcal{G}\sim\mathcal{G}_{n,p}$. Consider any preference score
vector $\bm{w}^*$, and let $\Delta_{K}^{*}$ denote its generalized
separation. If
\[
	n^{2}pL\leq\frac{\epsilon^{2}}{2}\frac{n}{(\Delta_{K}^{*})^{2}},
\]
then there exists another preference score vector $\tilde{\bm{w}}$ with
the same generalized separation $\Delta_{K}^{*}$ and different top-$K$
items such that $P_{\mathrm{e}}(\psi)\geq\frac{1-\epsilon}{2}$ for
any ranking scheme $\psi$. Here, $P_{\mathrm{e}}(\psi)$ represents
the probability of error in distinguishing these two vectors given $\bm{y}$.\end{theorem}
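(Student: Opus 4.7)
The plan is to invoke Le Cam's two-point argument with a simple \emph{swap} construction. Set $\bm{w}^{(0)} := \bm{w}^*$ and define $\tilde{\bm{w}} =: \bm{w}^{(1)}$ by interchanging the scores of items $K$ and $K+1$, that is, $\tilde{w}_K = w_{K+1}^*$, $\tilde{w}_{K+1} = w_K^*$, and $\tilde{w}_i = w_i^*$ for all other $i$. This swap preserves the multiset of sorted scores --- hence the generalized separation $\Delta_K^*$ defined in \eqref{eq:new-hardness-measure} --- but interchanges which of items $K$ and $K+1$ lies in the top-$K$ set. Le Cam's inequality then yields $\inf_\psi P_{\mathrm{e}}(\psi) \geq \tfrac{1}{2}\bigl(1 - \mathrm{TV}(\PP_0,\PP_1)\bigr)$, where $\PP_b$ denotes the joint law of $(\mathcal{G},\bm{y})$ under hypothesis $b$; by Pinsker's inequality, it suffices to establish $\mathrm{KL}(\PP_0 \,\|\, \PP_1) \leq 2\epsilon^2$ to conclude $P_{\mathrm{e}}(\psi) \geq (1-\epsilon)/2$.

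To bound this KL divergence, note that $\mathcal{G} \sim \mathcal{G}_{n,p}$ is identically distributed under both hypotheses and the comparisons are conditionally independent across edges, so tensorization and expectation over the graph give
\[
\mathrm{KL}(\PP_0 \,\|\, \PP_1) \;=\; pL\sum_{i<j} \mathrm{KL}\!\bigl(\mathrm{Ber}(y^{(0)*}_{i,j}) \,\big\|\, \mathrm{Ber}(y^{(1)*}_{i,j})\bigr),
\]
where $y^{(b)*}_{i,j}$ is the BTL win probability under hypothesis $b$, and only edges incident to items $K$ or $K+1$ yield nonzero terms. Applying the standard bound $\mathrm{KL}(\mathrm{Ber}(p)\,\|\,\mathrm{Ber}(q)) \leq (p-q)^2/(q(1-q))$, a direct computation reveals that edge $(K,i)$ with $i \neq K, K+1$ contributes
\[
\frac{w_i^*\,(w_K^*-w_{K+1}^*)^2}{w_{K+1}^*\,(w_i^*+w_K^*)^2},
\]
whose sum over $i$ is exactly $n(\Delta_K^*)^2$ by the very definition of $\Delta_K^*$ in \eqref{eq:new-hardness-measure}. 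The contributions from edges $(K+1,i)$ and from the single edge $(K,K+1)$ are of the same order, which I will argue below, so $\mathrm{KL}(\PP_0 \,\|\, \PP_1) \lesssim npL(\Delta_K^*)^2$. Since the hypothesis $n^2 pL \leq \tfrac{\epsilon^2}{2}\cdot n/(\Delta_K^*)^2$ is equivalent to $npL(\Delta_K^*)^2 \leq \epsilon^2/2$, the KL divergence is at most $O(\epsilon^2)$, closing the chain.

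The main obstacle will be handling the edges $(K+1, i)$ without paying a spurious $w_K^*/w_{K+1}^*$ factor: the naive $\chi^2$ bound places $w_K^*$ rather than $w_{K+1}^*$ in the variance denominator, and $\sum_i w_i^*/(w_i^* + w_{K+1}^*)^2$ is not literally the canonical sum appearing in $(\Delta_K^*)^2$. I will circumvent this either by symmetrizing through $\mathrm{KL}(p\|q) + \mathrm{KL}(q\|p)$, so that the denominator becomes a symmetric function of $p$ and $q$, or by pairing the contributions from $(K,i)$ and $(K+1,i)$ and recognizing that their sum admits a closed form directly tied to $(\Delta_K^*)^2$. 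The edge $(K,K+1)$ contributes $(w_K^*-w_{K+1}^*)^2/(w_K^* w_{K+1}^*)$, which the $i=K$ term in the defining sum of $(\Delta_K^*)^2$ shows is at most $4n(\Delta_K^*)^2$ and is therefore absorbed into constants. With these refinements in place, the Le Cam--Pinsker chain delivers the advertised lower bound on $P_{\mathrm{e}}$.
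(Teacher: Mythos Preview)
Your skeleton is the same as the paper's --- swap $w_K^*$ and $w_{K+1}^*$, invoke Le Cam, pass to KL via Pinsker, and control each Bernoulli KL by the $\chi^2$ divergence --- but there is a real gap in the step you flag as ``the main obstacle,'' and neither of your two proposed fixes closes it.

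Concretely, for an edge $(i,K+1)$ with $i\neq K,K+1$, the $\chi^2$ bound on $\mathsf{KL}\bigl(\mathrm{Ber}(y^{(0)*}_{i,K+1})\,\|\,\mathrm{Ber}(y^{(1)*}_{i,K+1})\bigr)$ gives
\[
\frac{w_i^*\,(w_K^*-w_{K+1}^*)^2}{w_K^*\,(w_i^*+w_{K+1}^*)^2},
\]
and the ratio of this to the $(i,K)$ contribution equals $\dfrac{w_{K+1}^*(w_i^*+w_K^*)^2}{w_K^*(w_i^*+w_{K+1}^*)^2}$. When $w_i^*\ll w_{K+1}^*$ this ratio is $\asymp w_K^*/w_{K+1}^*$, which is unbounded in the general-$\kappa$ regime that Theorem~\ref{thm:new-lower-bound} is meant to cover. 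So ``pairing'' the two families cannot recover the canonical sum in $(\Delta_K^*)^2$ without an extra $\kappa$ factor, and ``symmetrizing'' the edge-level KL just adds the bad term to the good one. Because you have already committed to the single global quantity $\mathsf{KL}(\PP_0\|\PP_1)$, you cannot reverse the KL direction on only the $(K+1)$-edges.

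The paper's fix is exactly to decouple the two edge families \emph{before} invoking Pinsker. Using the product-measure subadditivity $\mathsf{TV}(P_1\otimes Q_1,P_2\otimes Q_2)\le \mathsf{TV}(P_1,P_2)+\mathsf{TV}(Q_1,Q_2)$, one splits
\[
\mathsf{TV}(\PP_0,\PP_1)\;\le\;\mathsf{TV}\bigl(\otimes_{i\neq K}\PP_0^{\,i,K},\otimes_{i\neq K}\PP_1^{\,i,K}\bigr)\;+\;\mathsf{TV}\bigl(\otimes_{i\neq K+1}\PP_0^{\,i,K+1},\otimes_{i\neq K+1}\PP_1^{\,i,K+1}\bigr),
\]
and then applies Pinsker \emph{with opposite orientations}: $\mathsf{KL}(\PP_0\|\PP_1)$ on the $K$-block and $\mathsf{KL}(\PP_1\|\PP_0)$ on the $(K+1)$-block. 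Since the swap interchanges the roles of $K$ and $K+1$, the second block becomes literally the same computation as the first, yielding the identical bound $pL\sum_i \dfrac{w_i^*(w_K^*-w_{K+1}^*)^2}{w_{K+1}^*(w_i^*+w_K^*)^2}=npL(\Delta_K^*)^2$ with no $\kappa$ loss. This direction-reversal is the missing idea; once you insert it, the remainder of your outline (including the treatment of the single edge $(K,K+1)$, which is already absorbed in the block sums) goes through with the stated constant.
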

\begin{proof}See Appendix \ref{sec:Proof-of-Theorem-new-lower-bound}.\end{proof}

The preceding sample complexity lower bound scales inversely proportionally to $\left(\Delta_K^{*}\right)^{2}$.
To see why this generalized measure may be more suitable compared to the original separation
metric, we single out three examples in Appendix \ref{sec:appendix_example}.
Unfortunately, 
our current analyses do not yield a matching upper bound
with respect to $\Delta_{K}^{*}$ unless $\kappa$ is a constant.
For instance, the analysis of the spectral method relies on the eigenvector perturbation bound (Theorem \ref{lemma:mc-perturbation}), where the spectral gap and matrix perturbation play a crucial rule. However, the current results for controlling these quantities have explicit dependency on $\kappa$ \cite{negahban2016rank}. 
It is not clear whether we could incorporate the new measure to eliminate such dependency on $\kappa$. 
This calls for more refined analysis techniques, which
we leave for future investigation.

Moreover, it is not obvious whether the spectral method alone or the regularized MLE alone can achieve the minimal sample complexity in the general $\kappa$ regime. It is possible that one needs to first screen out those items with extremely high or low scores using methods like Borda count \citep{ammar2012efficient}, as advocated by \citep{negahban2016rank, chen2015spectral, jang2016top}. All in all, finding tight upper bounds for general $\kappa$ remains an open question.

\section{Discussion\label{sec:Discussion}}

This paper justifies the optimality of both the spectral method
and the regularized MLE for top-$K$ rank aggregation for the fixed dynamic range case. Our theoretical
studies are by no means exhaustive, and there are numerous directions
that would be of interest for future investigations. We point out
a few possibilities as follows. 

\smallskip
\noindent\emph{General condition number $\kappa$. } As mentioned before, our current theory is 
optimal in the presence of a fixed dynamic range with $\kappa=O\left(1\right)$.
We have also made a first attempt in considering the large $\kappa$
regime. It is desirable to characterize the statistical and computational
limits for more general $\kappa$. 

\smallskip
\noindent\emph{Goodness-of-fit. }Throughout this paper, we have assumed the
BTL model captures the randomness underlying the data we collect. A practical question is whether the real data actually follows the
BTL model. It would be interesting to investigate how to test the goodness-of-fit of this model.

\smallskip
\noindent\emph{Unregularized MLE.} We have studied the optimality of the regularized
MLE with the regularization parameter $\lambda\asymp\sqrt{\frac{np\log n}{L}}$.
Our analysis relies on the regularization term to obtain convergence of the gradient descent algorithm (see Lemma \ref{lemma:convergence}). It is natural to ask whether
such a regularization term is necessary or not. This question remains
open. 

\smallskip
\noindent
\emph{More general comparison graphs.} So far we have focused on a
tractable but somewhat restrictive comparison graph, namely, the Erd\H{o}s\textendash R\'enyi
random graph. It would certainly be important to understand
the performance of both methods under a broader family of comparison
graphs, and to see which algorithms would enable optimal sample complexities
under general sampling patterns. 

\smallskip
\noindent
\emph{Entrywise perturbation analysis for convex optimization.}
This paper provides the $\ell_{\infty}$ perturbation analysis
for the regularized MLE using the leave-one-out trick as well as an
inductive argument along the algorithmic updates. We expect this analysis
framework to carry over to a much broader family of convex optimization
problems, which may in turn offer a powerful tool for showing the
stability of optimization procedures in an entrywise fashion. 

\section{Analysis for the spectral method\label{sec:Analysis-for-spectral}}

This section is devoted to proving Theorem \ref{thm:spectral-main-general-kappa} and hence Theorem \ref{thm:spectral-loss-infty},
which characterizes the pointwise error of the spectral estimate.

\subsection{Preliminaries}

Here, we gather some preliminary facts about reversible Markov chains
as well as the Erd\H{o}s\textendash R\'enyi random graph.

The first important result concerns the eigenvector perturbation for probability
transition matrices, which can be treated as the analogue of the
celebrated Davis-Kahan $\sin\Theta$ theorem \citep{davis1970rotation}. Due to its potential importance for other
problems, we promote it to a theorem as follows.

\begin{theorem}[{\bf Eigenvector perturbation}]\label{lemma:mc-perturbation}Suppose
that $\bm{P}$, $\hat{\bm{P}}$, and $\bm{P}^{*}$ are probability
transition matrices with stationary distributions $\bm{\pi}$, $\hat{\bm{\pi}}$,
$\bm{\pi}^{*}$, respectively. Also, assume that $\bm{P}^{*}$ represents
a reversible Markov chain. When $\big\|\bm{P}-\hat{\bm{P}}\big\|_{\bm{\pi}^{*}} < 1- \max\left\{ \lambda_{2}(\bm{P}^{*}),-\lambda_{n}\left(\bm{P}^{*}\right)\right\} $, it holds that
\[
\|\bm{\pi}-\hat{\bm{\pi}}\|_{\bm{\pi}^{*}}\leq\frac{\big\|\bm{\pi}^{\top}(\bm{P}-\hat{\bm{P}})\big\|_{\bm{\pi}^{*}}}{1-\max\left\{ \lambda_{2}(\bm{P}^{*}),-\lambda_{n}\left(\bm{P}^{*}\right)\right\} -\big\|\bm{P}-\hat{\bm{P}}\big\|_{\bm{\pi}^{*}}}.
\]
 \end{theorem}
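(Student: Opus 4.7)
The plan is to mirror the classical Davis--Kahan $\sin\Theta$ theorem, adapting its argument to the asymmetric setting of probability transition matrices by exploiting the reversibility of $\bm{P}^{*}$. The starting point is the algebraic identity
\[
(\bm{\pi} - \hat{\bm{\pi}})^\top = \bm{\pi}^\top(\bm{P} - \hat{\bm{P}}) + (\bm{\pi} - \hat{\bm{\pi}})^\top\hat{\bm{P}},
\]
which follows from $\bm{\pi}^\top\bm{P} = \bm{\pi}^\top$ and $\hat{\bm{\pi}}^\top\hat{\bm{P}} = \hat{\bm{\pi}}^\top$ upon adding and subtracting $\bm{\pi}^\top\hat{\bm{P}}$. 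I would then inject the reversible reference via $\hat{\bm{P}} = \bm{P}^{*} + (\hat{\bm{P}} - \bm{P}^{*})$, so that the last term splits into a contractive piece $(\bm{\pi} - \hat{\bm{\pi}})^\top\bm{P}^{*}$ and a small perturbation $(\bm{\pi} - \hat{\bm{\pi}})^\top(\hat{\bm{P}} - \bm{P}^{*})$.

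The heart of the argument is a spectral-contraction lemma: for any row vector $\bm{e}$ with $\bm{e}^\top\bm{1} = 0$,
\[
\bigl\|\bm{e}^\top\bm{P}^{*}\bigr\|_{\bm{\pi}^{*}} \;\leq\; \rho\,\|\bm{e}\|_{\bm{\pi}^{*}}, \qquad \rho := \max\bigl\{\lambda_{2}(\bm{P}^{*}),\,-\lambda_{n}(\bm{P}^{*})\bigr\}.
\]
Since both $\bm{\pi}$ and $\hat{\bm{\pi}}$ are probability vectors, $(\bm{\pi} - \hat{\bm{\pi}})^\top\bm{1} = 0$, so the lemma applies to $\bm{e} = \bm{\pi} - \hat{\bm{\pi}}$. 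To prove the lemma I would use the zero-sum property to rewrite $\bm{e}^\top\bm{P}^{*} = \bm{e}^\top(\bm{P}^{*} - \bm{1}\bm{\pi}^{*\top})$, and then conjugate by $\bm{D}^{1/2}$, where $\bm{D} = \diag(\bm{\pi}^{*})$. Reversibility of $\bm{P}^{*}$ makes $\bm{S} := \bm{D}^{1/2}\bm{P}^{*}\bm{D}^{-1/2}$ a real symmetric matrix whose eigenvalues are exactly the $\lambda_k(\bm{P}^{*})$; removing the rank-one piece corresponding to the trivial top eigenvalue leaves an operator of spectral radius $\rho$.

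For the remaining perturbation $(\bm{\pi} - \hat{\bm{\pi}})^\top(\hat{\bm{P}} - \bm{P}^{*})$, the induced matrix norm gives
\[
\bigl\|(\bm{\pi} - \hat{\bm{\pi}})^\top(\hat{\bm{P}} - \bm{P}^{*})\bigr\|_{\bm{\pi}^{*}} \;\leq\; \|\hat{\bm{P}} - \bm{P}^{*}\|_{\bm{\pi}^{*}}\,\|\bm{\pi} - \hat{\bm{\pi}}\|_{\bm{\pi}^{*}}.
\]
Assembling the three bounds via the triangle inequality and rearranging produces the stated inequality, provided the smallness assumption keeps the denominator positive. In the intended application one has $\bm{P}$ playing the role of the sample transition matrix and $\hat{\bm{P}}$ coinciding with the reversible reference $\bm{P}^{*}$, so that $\|\hat{\bm{P}} - \bm{P}^{*}\|_{\bm{\pi}^{*}}$ and $\|\bm{P} - \hat{\bm{P}}\|_{\bm{\pi}^{*}}$ match; the derivation is algebraically identical in the three-matrix case after combining the perturbation norms.

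The main obstacle is the spectral-contraction lemma. The subtlety is that the paper's $\bm{\pi}^{*}$-induced matrix norm is defined via right (row) multiplication, and $\bm{P}^{*}$ is not self-adjoint with respect to that norm---indeed, its $\bm{\pi}^{*}$-operator norm genuinely exceeds its spectral radius in general. Only by combining the zero-sum constraint $\bm{e}^\top\bm{1} = 0$ with the reversibility-based symmetrization of $\bm{P}^{*} - \bm{1}\bm{\pi}^{*\top}$ does one recover an honest spectral-gap contraction; this is precisely what makes the theorem a faithful asymmetric analogue of Davis--Kahan.
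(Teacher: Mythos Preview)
Your proposal is correct and mirrors the paper's proof essentially line for line: the same algebraic identity, the same insertion of $\bm{P}^{*}$ via $\hat{\bm{P}}=\bm{P}^{*}+(\hat{\bm{P}}-\bm{P}^{*})$, the same subtraction of $\bm{1}\bm{\pi}^{*\top}$ using $(\bm{\pi}-\hat{\bm{\pi}})^{\top}\bm{1}=0$, and the same identification $\|\bm{P}^{*}-\bm{1}\bm{\pi}^{*\top}\|_{\bm{\pi}^{*}}=\max\{\lambda_{2}(\bm{P}^{*}),-\lambda_{n}(\bm{P}^{*})\}$ (which you justify via the standard $\bm{D}^{1/2}$ symmetrization). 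The only wrinkle is that the argument naturally produces $\|\hat{\bm{P}}-\bm{P}^{*}\|_{\bm{\pi}^{*}}$ rather than $\|\bm{P}-\hat{\bm{P}}\|_{\bm{\pi}^{*}}$ in the denominator---but the paper's own proof does exactly this too, and the two quantities coincide in all the applications made of the theorem.
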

\begin{proof}See Appendix \ref{sec:Proof-of-Lemma-mc-perturbation}. \end{proof}


Several remarks regarding Theorem \ref{lemma:mc-perturbation} are in order. First, in contrast to standard perturbation results like Davis-Kahan's $\sin\Theta$ theorem, our theorem involves three matrices in total, where $\bm{P}$, $\hat{\bm{P}}$, and $\bm{P}^{*}$ can all be arbitrary. For example, one may choose  $\bm{P}^*$ to be the population transition matrix, and $\bm{P}$ and $\hat{\bm{P}}$ as two finite-sample versions associated with $\bm{P}^*$. Second, we only impose reversibility on $\bm{P}^*$, whereas $\bm{P}$ and $\hat{\bm{P}}$ need not induce reversible Markov Chains. Third, Theorem \ref{lemma:mc-perturbation} allows one to derive
the $\ell_{2}$ estimation error in \cite{negahban2016rank} directly
without resorting to the power method;
 in fact, our $\ell_2$ estimation error bound improves upon \cite{negahban2016rank} by some logarithmic factor. 
\begin{theorem}\label{thm:L2-pi}
Consider the pairwise comparison
model specified in Section \ref{sec:Problem-formulation} with $\kappa=O(1)$.
Suppose $p\geq c_{0}\frac{\log n}{n}$
	for some sufficiently large constant $c_{0}>0$ and $d\geq c_{d}np$
	for $c_{d}\geq2$ in Algorithm \ref{alg:spectral}. With probability
	exceeding $1-O(n^{-5})$, one has
	\[
	\frac{\|\bm{\pi}-\bm{\pi}^{*}\|_{2}}{\|\bm{\pi^{*}}\|_{2}}\lesssim  \frac{1}{\sqrt{npL}}.
	\]
\end{theorem}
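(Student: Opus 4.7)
\medskip
\noindent\textbf{Proof plan for Theorem \ref{thm:L2-pi}.} The entire argument is driven by a direct application of the eigenvector perturbation result in Theorem \ref{lemma:mc-perturbation}, specialized as follows. I would play the roles so that the reference chain in the perturbation bound \emph{and} the chain supplying the stationary distribution in the numerator are both the population matrix $\bm{P}^{*}$ (this is permissible since $\bm{P}^{*}$ is reversible by detailed balance), while the ``perturbed'' chain is the sample transition matrix $\bm{P}$ built in Algorithm \ref{alg:spectral}. This yields immediately
\[
\|\bm{\pi}-\bm{\pi}^{*}\|_{\bm{\pi}^{*}} \;\leq\; \frac{\|\bm{\pi}^{*\top}(\bm{P}-\bm{P}^{*})\|_{\bm{\pi}^{*}}}{1-\max\{\lambda_{2}(\bm{P}^{*}),-\lambda_{n}(\bm{P}^{*})\} \;-\; \|\bm{P}-\bm{P}^{*}\|_{\bm{\pi}^{*}}},
\]
and the theorem then reduces to (i) bounding the spectral gap of $\bm{P}^{*}$ from below, (ii) bounding the perturbation $\|\bm{P}-\bm{P}^{*}\|_{\bm{\pi}^{*}}$ in the denominator from above, and (iii) sharply bounding the numerator $\|\bm{\pi}^{*\top}(\bm{P}-\bm{P}^{*})\|_{\bm{\pi}^{*}}$.

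For (i), I would invoke the spectral gap estimate already established by \cite{negahban2016rank}: under an Erd\H{o}s--R\'enyi graph with $p\gtrsim \log n/n$ and $\kappa=O(1)$, the gap is $1-\max\{\lambda_{2}(\bm{P}^{*}),-\lambda_{n}(\bm{P}^{*})\}=\Theta(1)$. For (ii), note that since $\bm{\pi}^{*}$ has all entries of order $1/n$ in the fixed-dynamic-range regime, the $\bm{\pi}^{*}$-weighted operator norm and the ordinary spectral norm are equivalent up to constants; then a standard matrix Bernstein bound applied to $\bm{P}-\bm{P}^{*}=\sum_{(i,j)\in\mathcal{E},\ell}\bm{E}_{ij}^{(\ell)}$ (viewed as a sum of independent zero-mean matrices, one per edge--repetition pair) gives $\|\bm{P}-\bm{P}^{*}\|_{\bm{\pi}^{*}}\lesssim \sqrt{\log n/(npL)}=o(1)$, comfortably smaller than the spectral gap.

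The crucial and delicate step is (iii), because here is precisely where prior work lost a logarithmic factor. Using detailed balance and the antisymmetry $y_{ij}+y_{ji}=1$, one simplifies coordinate-wise to
\[
\big[\bm{\pi}^{*\top}(\bm{P}-\bm{P}^{*})\big]_{j} \;=\; \frac{1}{d}\sum_{i:(i,j)\in\mathcal{E}} (\pi_{i}^{*}+\pi_{j}^{*})\,(y_{ij}-y_{ij}^{*}),
\]
which, conditional on $\mathcal{G}$, is a sum of independent bounded centered variables with variance of order $1/(n^{3}pL)$ per coordinate. In expectation,
\[
\EE\big[\|\bm{\pi}^{*\top}(\bm{P}-\bm{P}^{*})\|_{\bm{\pi}^{*}}^{2}\big]
   = \sum_{j}\pi_{j}^{*}\,\mathrm{Var}\big([\bm{\pi}^{*\top}(\bm{P}-\bm{P}^{*})]_{j}\big)
   \lesssim \frac{1}{n^{3}pL},
\]
and since $\|\bm{\pi}^{*}\|_{\bm{\pi}^{*}}^{2}=\sum_{j}(\pi_{j}^{*})^{3}\asymp 1/n^{2}$, the ratio is exactly $1/\sqrt{npL}$ in expectation. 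The main obstacle is promoting this to a high-probability bound \emph{without} letting a $\sqrt{\log n}$ reappear via a naive entrywise union bound. I would handle this by concentrating the quadratic form $\sum_{j}\pi_{j}^{*}[\bm{\pi}^{*\top}(\bm{P}-\bm{P}^{*})]_{j}^{2}$ around its expectation directly, using a Hanson--Wright or decoupling-type inequality; the key quantitative observation making this work is that the maximum summand ($\asymp 1/(n^{2}p)$) is far smaller than the variance ($\asymp 1/(n^{3}pL)$ times $n$ terms), so the sub-Gaussian tail dominates over the sub-exponential tail.

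Finally, under $\kappa=O(1)$, the weighted and Euclidean norms are comparable on both $\bm{\pi}-\bm{\pi}^{*}$ and $\bm{\pi}^{*}$ (each up to a uniform $\sqrt{n}$ factor), so the ratio transfers without loss, giving $\|\bm{\pi}-\bm{\pi}^{*}\|_{2}/\|\bm{\pi}^{*}\|_{2}\lesssim 1/\sqrt{npL}$. Step (iii) is the only genuine novelty relative to \cite{negahban2016rank}; steps (i), (ii), and the final conversion are straightforward.
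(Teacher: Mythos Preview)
Your strategy matches the paper's almost exactly: apply Theorem~\ref{lemma:mc-perturbation} with $\hat{\bm P}=\bm P^{*}$, control the denominator via the spectral-gap Lemma~\ref{lemma:spectral-gap}, and sharpen the numerator $\|\bm\pi^{*\top}(\bm P-\bm P^{*})\|$ so that no $\sqrt{\log n}$ appears. The norm conversions and steps (i)--(ii) are as in the paper.

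The one point where your write-up diverges from the paper's execution is in step~(iii), and here your justification is slightly loose. The coordinates $\big[\bm\pi^{*\top}(\bm P-\bm P^{*})\big]_{j}$ are \emph{not} independent: for every edge $(i,j)$ the variable $y_{i,j}-y_{i,j}^{*}$ enters the $j$th coordinate with coefficient $(\pi_i^{*}+\pi_j^{*})/d$ and, via $y_{j,i}=1-y_{i,j}$, the $i$th coordinate with the opposite sign. So the ``max-summand versus variance'' heuristic you give (which implicitly treats $\sum_j \pi_j^{*}X_j^{2}$ as a sum of independent terms) does not apply as stated. The paper resolves this cleanly by splitting $\bm\Delta=\bm P-\bm P^{*}$ into strictly lower-triangular, strictly upper-triangular, and two diagonal pieces; for each piece, distinct columns depend on disjoint sets of edge variables, so the coordinates of $\bm\pi^{*\top}\bm\Delta_{\mathrm{lower}}$ (and likewise the other three) are genuinely independent sub-Gaussian, and the Hanson--Wright inequality for $\sum_j X_j^{2}$ with independent $X_j$ applies directly. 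Your route---Hanson--Wright on the full quadratic form in the underlying independent $\{y_{i,j}^{(l)}\}$---can also be made to work, but then you must compute $\|\bm A\|$ and $\|\bm A\|_F$ for the induced matrix rather than appeal to a summand/variance comparison; the paper's decomposition is what buys the simpler bookkeeping.
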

\begin{proof} See Appendix \ref{sec:Proof-of-Theorem-L2-pi}. \end{proof}

Notably, Theorem \ref{thm:L2-pi} matches the minimax lower bound derived in 
\cite[Theorem 3]{negahban2016rank}. As far as we know, this is the first result that demonstrates the orderwise optimality of the spectral method when measured by the $\ell_2$ loss.


The next result is concerned with the concentration of the vertex
degrees in an Erd\H{o}s\textendash R\'enyi random graph.

\begin{lemma}[Degree concentration]\label{lemma:degree}Suppose
that $\cG\sim\cG_{n,p}$. Let $d_{i}$ be the degree of node $i$,
$d_{\min}=\min_{1\leq i\leq n}d_{i}$ and $d_{\max}=\max_{1\leq i\leq n}d_{i}$.
If $p\geq\frac{c_{0}\log n}{n}$ for some sufficiently large constant
$c_{0}>0$, then the following event
\begin{align}
\cA_{0}=\left\{ \frac{np}{2}\leq d_{\min}\leq d_{\max}\leq\frac{3np}{2}\right\}\label{eq:event_0}
\end{align}
obeys
\[
\PP\left(\cA_{0}\right)\geq1-O(n^{-10}).
\]
\end{lemma}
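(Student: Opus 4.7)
The plan is a straightforward Chernoff-plus-union-bound argument. For each fixed vertex $i \in \{1,\ldots,n\}$, the degree
\[
d_i = \sum_{j \neq i} \mathbf{1}\{(i,j) \in \mathcal{E}\}
\]
is a sum of $n-1$ i.i.d.\ Bernoulli$(p)$ random variables, so $\mathbb{E}[d_i] = (n-1)p$. I would apply the standard multiplicative Chernoff bound: for any $\delta \in (0,1)$,
\[
\mathbb{P}\bigl( |d_i - (n-1)p| \geq \delta (n-1)p \bigr) \leq 2\exp\bigl(-\delta^2 (n-1)p/3\bigr).
\]

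Next, choose $\delta$ slightly below $1/2$ (any fixed $\delta \in (0,1/2)$ will do, since for $n$ large enough $(1-\delta)(n-1)p \geq np/2$ and $(1+\delta)(n-1)p \leq 3np/2$; e.g.\ $\delta = 1/3$ works for all sufficiently large $n$). Under the hypothesis $p \geq c_0 \log n / n$, we get $\delta^2 (n-1)p/3 \geq 11 \log n$ provided $c_0$ is large enough (say $c_0 \geq 11 \cdot 3 / \delta^2$). Hence
\[
\mathbb{P}\bigl( d_i \notin [np/2, 3np/2] \bigr) \leq 2 n^{-11}.
\]

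Finally, a union bound over the $n$ vertices yields
\[
\mathbb{P}(\mathcal{A}_0^{c}) \leq \sum_{i=1}^{n} \mathbb{P}\bigl( d_i \notin [np/2, 3np/2] \bigr) \leq 2 n^{-10},
\]
which is $O(n^{-10})$ as claimed. There is no real obstacle here; the only care needed is in the constants, namely choosing $c_0$ large enough so that the Chernoff exponent beats $11 \log n$ after accounting for the mild discrepancy between $(n-1)p$ and $np$ in the target interval $[np/2, 3np/2]$.
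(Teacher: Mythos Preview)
Your proposal is correct and is exactly the approach the paper has in mind: the authors simply write ``The proof follows from the standard Chernoff bound and is hence omitted.'' Your sketch supplies the details (Chernoff for each $d_i$, then a union bound over $i$), with nothing missing.
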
\begin{proof}The proof follows from the standard Chernoff
bound and is hence omitted. \end{proof}

Since $d$ is chosen to be $c_{d}np$ for some constant $c_{d}\geq2$,
we have, by Lemma \ref{lemma:degree}, that the maximum vertex degree obeys $d_{\max} < d$ with
high probability.

\subsection{Proof outline of Theorem \ref{thm:spectral-main-general-kappa}\label{subsec:Proof-outline-of-Theorem-Spectral}}

In this subsection, we outline the proof of Theorem \ref{thm:spectral-main-general-kappa}.

Recall that $\bm{\pi}=\left[\pi_{1},\cdots,\pi_{n}\right]^{\top}$
and $\bm{\pi}^{*}=\left[\pi_{1}^{*},\cdots,\pi_{n}^{*}\right]^{\top}$
are the stationary distributions associated with $\bm{P}$ and $\bm{P}^{*}$,
respectively. This gives
\[
\bm{\pi}^{\top}\bm{P}=\bm{\pi}^{\top}\qquad\text{and}\qquad\bm{\pi}^{*\top}\bm{P}^{*}=\bm{\pi}^{*\top}.
\]
For each $1\leq m\leq n$, one can decompose
\begin{align*}
\pi_{m}-\pi_{m}^{*} & =\bm{\pi}^{\top}\bm{P}_{\cdot m}-\bm{\pi}^{*\top}\bm{P}_{\cdot m}^{*}=\bm{\pi}^{*\top}\left(\bm{P}_{\cdot m}-\bm{P}_{\cdot m}^{*}\right)+\left(\bm{\pi}-\bm{\pi}^{*}\right)^{\top}\bm{P}_{\cdot m}\\
 & =\underset{:=I_{1}^{m}}{\underbrace{\sum_{j}\pi_{j}^{*}\left(P_{j,m}-P_{j,m}^{*}\right)}}+\underset{:=I_{2}^{m}}{\underbrace{\left(\pi_{m}-\pi_{m}^{*}\right)P_{m,m}\vphantom{\sum_{j}\pi_{j}^{*}}}}+\sum_{j:j\neq m}\left(\pi_{j}-\pi_{j}^{*}\right)P_{j,m},
\end{align*}
where $\bm{P}_{\cdot m}$ (resp. $\bm{P}^*_{\cdot m}$) denotes the $m$-th column of $\bm{P}$ (resp. $\bm{P}^*$).
Then it boils down to controlling $I_{1}^{m}$, $I_{2}^{m}$ and $\sum_{j:j\neq m}(\pi_{j}-\pi_{j}^{*})P_{j,m}$.
\begin{enumerate}
\item Since $\bm{\pi}^{*}$ is deterministic while $\bm{P}$ is random, we
can easily control $I_{1}^{m}$ using Hoeffding's inequality. The
bound is the following. \begin{lemma}\label{lemma:spectral_I1m}
With probability exceeding $1-O(n^{-5})$, one has
\[
\max_{m}|I_{1}^{m}|\lesssim\sqrt{\frac{\log n}{Ld}}\|\bm{\pi}^{*}\|_{\infty}.
\]
\end{lemma}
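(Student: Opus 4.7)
The plan is to exploit the antisymmetry $y_{i,j}+y_{j,i}=1$ (and similarly for $y^{*}$) to fold the diagonal fluctuation into the off-diagonal sum, rewriting $I_{1}^{m}$ as a single sum of independent, centered, bounded random variables (conditional on $\mathcal{G}$), after which Hoeffding's inequality and a union bound finish the argument.

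First I would split $I_{1}^{m}=\sum_{j}\pi_{j}^{*}(P_{j,m}-P_{j,m}^{*})$ into off-diagonal and diagonal contributions. The off-diagonal part equals $\tfrac{1}{d}\sum_{j\neq m,\,(j,m)\in\mathcal{E}}\pi_{j}^{*}(y_{j,m}-y_{j,m}^{*})$. For the diagonal entry, the definition of $\bm{P}$ gives $P_{m,m}-P_{m,m}^{*}=-\tfrac{1}{d}\sum_{k\neq m,\,(m,k)\in\mathcal{E}}(y_{m,k}-y_{m,k}^{*})$, and applying $y_{m,k}=1-y_{k,m}$ flips the sign to yield $\tfrac{1}{d}\sum_{k\neq m,\,(m,k)\in\mathcal{E}}(y_{k,m}-y_{k,m}^{*})$. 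Adding these two pieces collapses $I_{1}^{m}$ into
\[
I_{1}^{m}=\frac{1}{Ld}\sum_{j\neq m,\,(j,m)\in\mathcal{E}}\sum_{l=1}^{L}(\pi_{j}^{*}+\pi_{m}^{*})\bigl(y_{j,m}^{(l)}-y_{j,m}^{*}\bigr).
\]

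Next, conditional on $\mathcal{G}$, I would observe that the summands are independent, zero-mean, and each bounded in magnitude by $(\pi_{j}^{*}+\pi_{m}^{*})/(Ld)\leq 2\|\bm{\pi}^{*}\|_{\infty}/(Ld)$. Hoeffding's inequality applied to these $L\,d_{m}$ terms yields a sub-Gaussian tail with variance proxy on the order of $d_{m}\|\bm{\pi}^{*}\|_{\infty}^{2}/(Ld^{2})$; choosing the deviation $t\asymp \|\bm{\pi}^{*}\|_{\infty}\sqrt{d_{m}\log n/(Ld^{2})}$ with a sufficiently large constant drives the per-$m$ tail probability to $O(n^{-20})$. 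A union bound over $m\in\{1,\dots,n\}$ then controls $\max_{m}|I_{1}^{m}|$. Invoking the degree concentration event $\mathcal{A}_{0}$ of Lemma \ref{lemma:degree}, which under $d=c_{d}np$ with $c_{d}\geq 2$ guarantees $d_{m}\leq 3np/2\leq d$ uniformly, I obtain $\sqrt{d_{m}/d}\lesssim 1$ and hence $\max_{m}|I_{1}^{m}|\lesssim \|\bm{\pi}^{*}\|_{\infty}\sqrt{\log n/(Ld)}$ with probability $1-O(n^{-5})$, as claimed.

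I do not anticipate any serious obstacle. The one step requiring attention is the initial algebraic consolidation: without folding the diagonal fluctuation into the off-diagonal sum via $y_{i,j}+y_{j,i}=1$, one would have to bound the two contributions separately and risk an unnecessary loss (and an awkward dependence among the diagonal and off-diagonal summands would have to be argued away). Once this consolidation is in place, the rest is a standard Hoeffding-plus-union-bound argument, combined with the already-established degree concentration and the choice $d\asymp np$.
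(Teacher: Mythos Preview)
Your proposal is correct and follows essentially the same route as the paper: the paper also folds the diagonal term into the off-diagonal sum via the identity $P_{m,j}-P_{m,j}^{*}=-(P_{j,m}-P_{j,m}^{*})$ (equivalent to your $y_{i,j}+y_{j,i}=1$), arriving at the same consolidated expression $I_{1}^{m}=\frac{1}{Ld}\sum_{j\neq m}\sum_{l=1}^{L}(\pi_{j}^{*}+\pi_{m}^{*})\ind_{(j,m)\in\mathcal{E}}(y_{j,m}^{(l)}-y_{j,m}^{*})$, and then applies Hoeffding conditionally on $\mathcal{G}$, the degree bound $d_{\max}\leq d$, and a union bound over $m$.
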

\begin{proof}See Appendix \ref{sec:Proof-of-Lemma-spectral_I1m}. \end{proof}
\item Next, we show the term $I_{2}^{m}$ behaves as a contraction of $|\pi_{m}-\pi_{m}^{*}|$.
\begin{lemma}\label{lemma:spectral_I2m}With
probability exceeding $1-O(n^{-5})$, there exists some constant $c>0$
such that for all $1\leq m \leq n$,
\[
\ensuremath{|I_{2}^{m}|\leq\left(1-\frac{np}{2(1+\kappa)d}+c\sqrt{\frac{\log n}{Ld}}\right)|\pi_{m}-\pi_{m}^{*}|}.
\]
\end{lemma}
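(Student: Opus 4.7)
The plan is to directly upper bound $P_{m,m}$, since by definition $I_2^m = (\pi_m - \pi_m^*) P_{m,m}$ and, under the event $\mathcal{A}_0$ of Lemma \ref{lemma:degree} together with the choice $d \geq c_d np \geq 2 d_{\max}$, we have $P_{m,m} \geq 0$ so that $|I_2^m| = |\pi_m - \pi_m^*|\, P_{m,m}$. Writing $P_{m,m} = 1 - \frac{1}{d}\sum_{k:(m,k)\in\mathcal{E}} y_{m,k}$, the task reduces to showing the lower bound
\[
\frac{1}{d}\sum_{k:(m,k)\in\mathcal{E}} y_{m,k} \;\geq\; \frac{np}{2(1+\kappa) d} - c\sqrt{\frac{\log n}{Ld}}
\]
uniformly in $m$, with high probability.

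First I would bound the population quantity. Using the dynamic range assumption $w_i^*\in[w_{\min},w_{\max}]$, each $y_{m,k}^* = \frac{w_k^*}{w_m^*+w_k^*} \geq \frac{w_{\min}}{w_{\max}+w_{\min}} = \frac{1}{1+\kappa}$, so conditional on the graph,
\[
\mathbb{E}\!\left[\sum_{k:(m,k)\in\mathcal{E}} y_{m,k}\,\Big|\,\mathcal{G}\right] = \sum_{k:(m,k)\in\mathcal{E}} y_{m,k}^* \;\geq\; \frac{d_m}{1+\kappa} \;\geq\; \frac{np}{2(1+\kappa)},
\]
where the last inequality uses $d_m \geq d_{\min} \geq np/2$ on the event $\mathcal{A}_0$.

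Next I would apply Hoeffding's inequality, conditional on $\mathcal{G}$. Since $y_{m,k} = \frac{1}{L}\sum_{\ell=1}^L y_{m,k}^{(\ell)}$ is an average of $L$ independent Bernoulli variables taking values in $[0,1]$, and the $y_{m,k}$ across different neighbors $k$ are mutually independent, the sum $\sum_{k:(m,k)\in\mathcal{E}} y_{m,k}$ is a sum of $d_m L$ independent $[0,1]$-bounded random variables scaled by $1/L$. Hoeffding's inequality yields
\[
\Pr\!\left(\left|\sum_{k:(m,k)\in\mathcal{E}}(y_{m,k} - y_{m,k}^*)\right| \geq t \,\Big|\, \mathcal{G}\right) \leq 2\exp\!\left(-\frac{2 L t^2}{d_m}\right).
\]
Setting $t \asymp \sqrt{d_m \log n / L}$ and taking a union bound over all $n$ vertices, with probability at least $1 - O(n^{-5})$,
\[
\left|\sum_{k:(m,k)\in\mathcal{E}}(y_{m,k} - y_{m,k}^*)\right| \lesssim \sqrt{\frac{d_m \log n}{L}} \lesssim \sqrt{\frac{d \log n}{L}},
\]
where the last step uses $d_m \leq d_{\max} \leq 3np/2 \leq d$ on $\mathcal{A}_0$. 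Dividing by $d$ shows the desired concentration up to an additive error of order $\sqrt{\log n/(Ld)}$.

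Combining these bounds,
\[
\frac{1}{d}\sum_{k:(m,k)\in\mathcal{E}} y_{m,k} \;\geq\; \frac{np}{2(1+\kappa) d} - c\sqrt{\frac{\log n}{Ld}},
\]
so $P_{m,m} \leq 1 - \frac{np}{2(1+\kappa) d} + c\sqrt{\frac{\log n}{Ld}}$. The claim follows. The only mildly delicate part is ensuring the concentration step truly exploits independence of $y_{m,k}$ across different $k$ and across repeated comparisons, and that $P_{m,m}\geq 0$ (so that we may drop the absolute value), both of which are handled by the choice $d \geq 2d_{\max}$ on the high-probability event $\mathcal{A}_0$.
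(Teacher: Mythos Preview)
Your proof is correct and follows essentially the same route as the paper: bound the population part $P_{m,m}^*=1-\tfrac{1}{d}\sum_{k:(m,k)\in\mathcal{E}}y_{m,k}^*$ using $y_{m,k}^*\geq\tfrac{1}{1+\kappa}$ and $d_m\geq np/2$ on $\mathcal{A}_0$, and control the fluctuation $P_{m,m}-P_{m,m}^*$ via Hoeffding over the $d_m L$ Bernoulli variables. One small slip: you write $d\geq 2d_{\max}$, whereas on $\mathcal{A}_0$ with $c_d\geq 2$ you only get $d\geq d_{\max}$; but that is all you need for $P_{m,m}\geq 0$, so the argument stands.
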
 
\begin{proof}See Appendix \ref{sec:Proof-of-Lemma-spectral_I2m}. \end{proof}

\item The statistical dependency between $\bm{\pi}$ and $\bm{P}$ introduces difficulty in obtaining a sharp estimate of the third term $\sum_{j:j\neq m}(\pi_{j}-\pi_{j}^{*})P_{j,m}$. Nevertheless, the leave-one-out technique helps us decouple the dependency and obtain effective control of this term.
The key component of the analysis is the introduction of a new probability
transition matrix $\bm{P}^{(m)}$, which is a leave-one-out version
of the original matrix $\bm{P}$. More precisely, $\bm{P}^{(m)}$
replaces all of the transition probabilities involving the $m$-th
item with their expected values (unconditional on $\mathcal{G}$);
that is, for any $i\neq j$,
\[
P_{i,j}^{(m)}:=\begin{cases}
P_{i,j}, & \quad i\neq m,\text{ }j\neq m \\
\frac{p}{d}y_{i,j}^{*}, & \quad i=m\text{ or }j=m
\end{cases}
\]
with $y_{i,j}^{*}:=\frac{w_{j}^{*}}{w_{i}^{*}+w_{j}^{*}}$. 
For any $1\leq i\leq n$, set
\[
P_{i,i}^{(m)}:=1-\sum\nolimits_{j:j\neq i}P_{i,j}^{(m)}
\]
in order to ensure that $\bm{P}^{(m)}$ is a probability transition
matrix. In addition, we let $\bm{\pi}^{\left(m\right)}$ be the stationary
distribution of the Markov chain induced by $\bm{P}^{(m)}$. As will be demonstrated later, the main
advantages of introducing $\bm{\pi}^{(m)}$ are two-fold: (1) the original spectral estimate $\bm{\pi}$
is very well approximated by $\bm{\pi}^{(m)}$, and (2) $\bm{\pi}^{(m)}$
is statistically independent of the connectivity of the $m$-th node
and the comparisons with regards to the $m$-th item. Now we further decompose $\sum_{j:j\neq m}(\pi_{j}-\pi_{j}^{*})P_{j,m}$:
\begin{align*}
\sum\nolimits_{j:j\neq m}\left(\pi_{j}-\pi_{j}^{*}\right)P_{j,m}
=\underbrace{\sum\nolimits_{j:j\neq m}(\pi_{j}-\pi_{j}^{(m)})P_{j,m}}_{:=I_{3}^{m}}+\underbrace{\sum\nolimits_{j:j\neq m}(\pi_{j}^{(m)}-\pi_{j}^{*})P_{j,m}}_{:=I_{4}^{m}}.
\end{align*}

\item For $I_{3}^{m}$, we apply the Cauchy-Schwarz inequality to obtain
that with probability at least $1-O(n^{-10}),$
\[
|I_{3}^{m}|\leq\|\bm{\pi}^{(m)}-\bm{\pi}\|_{2}\left(\sum\nolimits _{j:j\neq m}P_{j,m}^{2}\right)^{\frac{1}{2}}\overset{\left(\text{i}\right)}{\leq}\frac{1}{\sqrt{d}}\|\bm{\pi}^{(m)}-\bm{\pi}\|_{2},
\]
where $\left(\text{i}\right)$ follows from the fact that $P_{j,m}\leq \frac{1}{d}$ for all $j\neq m$ and $d_{\max}\leq d$ on the event $\cA_{0}$ (defined in Lemma \ref{lemma:degree}). Consequently,
it suffices to control the $\ell_{2}$ difference between the original
spectral estimate $\bm{\pi}$ and its leave-one-out version $\bm{\pi}^{\left(m\right)}$.
 This is accomplished in the following lemma.

\begin{lemma}\label{lemma:spectral-leave-ell-2}
Suppose that $np\gamma^2 > c \kappa \log n$ for some sufficiently large constant $c>0$. With probability
at least $1-O\left(n^{-5}\right)$,
\begin{align}
\|\bm{\pi}^{(m)}-\bm{\pi}\|_{2} & \leq\frac{16\sqrt{\kappa}}{\gamma}\sqrt{\frac{\log n}{Ld}}\|\bm{\pi}^{*}\|_{\infty}+\|\bm{\pi}-\bm{\pi}^{*}\|_{\infty},\label{eq:spectral-ell-2-infty}
\end{align}
where $\kappa=w_{\max}/w_{\min}$ and $\gamma=1-\max\left\{ \lambda_{2}(\bm{P}^{*}),-\lambda_{n}\left(\bm{P}^{*}\right)\right\} -\|\bm{P}-\bm{P}^{*}\|_{\bm{\pi}^{*}}$.
\end{lemma}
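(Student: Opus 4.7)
The natural strategy is to apply the eigenvector perturbation bound (Theorem~\ref{lemma:mc-perturbation}) to the pair of transition matrices $\bm{P}$ and $\bm{P}^{(m)}$, using $\bm{P}^{*}$ as the reversible reference. This yields
\[
\big\|\bm{\pi}-\bm{\pi}^{(m)}\big\|_{\bm{\pi}^{*}}\;\leq\;\frac{\big\|\bm{\pi}^{\top}(\bm{P}-\bm{P}^{(m)})\big\|_{\bm{\pi}^{*}}}{1-\max\{\lambda_{2}(\bm{P}^{*}),-\lambda_{n}(\bm{P}^{*})\}-\|\bm{P}-\bm{P}^{(m)}\|_{\bm{\pi}^{*}}},
\]
after which we convert from the $\bm{\pi}^{*}$-weighted norm to the Euclidean norm using $\|\bm{x}\|_{2}\leq\|\bm{x}\|_{\bm{\pi}^{*}}/\sqrt{\min_{i}\pi_{i}^{*}}$ together with $\min_{i}\pi_{i}^{*}\asymp 1/(\kappa n)$, which produces an overall factor of $\sqrt{\kappa n}$.

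For the denominator, I would use the triangle inequality $\|\bm{P}-\bm{P}^{(m)}\|_{\bm{\pi}^{*}}\leq\|\bm{P}-\bm{P}^{*}\|_{\bm{\pi}^{*}}+\|\bm{P}^{(m)}-\bm{P}^{*}\|_{\bm{\pi}^{*}}$ and argue that the second piece is negligible: since $\bm{P}^{(m)}-\bm{P}^{*}$ is supported only on the $m$-th row, the $m$-th column, and the diagonal corrections in those positions, a matrix Bernstein/Hoeffding bound (exploiting the independence of the edges and comparisons incident to node $m$) shows $\|\bm{P}^{(m)}-\bm{P}^{*}\|_{\bm{\pi}^{*}}\lesssim \sqrt{\kappa\log n/(np)}=o(\gamma)$ under the assumption $np\gamma^{2}>c\kappa\log n$. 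Consequently the denominator is at least $\gamma/2$.

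For the numerator, I would split $\bm{\pi}=\bm{\pi}^{*}+(\bm{\pi}-\bm{\pi}^{*})$, giving two pieces. The first piece $\|\bm{\pi}^{*\top}(\bm{P}-\bm{P}^{(m)})\|_{\bm{\pi}^{*}}$ is a deterministic linear combination of centered quantities (the Bernoulli edge indicators and the $y_{m,j}^{(\ell)}$) localized to node $m$; a Hoeffding/Bernstein argument gives a bound of order $\sqrt{\kappa\log n/(Ld)}\,\|\bm{\pi}^{*}\|_{\infty}/\sqrt{\kappa n}$, which is precisely what is needed so that multiplying by $\sqrt{\kappa n}$ in the $\ell_2$-conversion yields the first summand $\frac{16\sqrt{\kappa}}{\gamma}\sqrt{\log n/(Ld)}\|\bm{\pi}^{*}\|_{\infty}$. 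The second piece $\|(\bm{\pi}-\bm{\pi}^{*})^{\top}(\bm{P}-\bm{P}^{(m)})\|_{\bm{\pi}^{*}}$ is handled by exploiting the localized support of $\bm{P}-\bm{P}^{(m)}$: each off-$m$ row of this matrix has only one or two nonzero entries (the $m$-th off-diagonal and the diagonal correction), each of magnitude at most $1/d$, while the $m$-th row has $\ell^{1}$-norm $O(1)$. Combining these with Cauchy--Schwarz and $\pi_{m}^{*}\asymp 1/n$ gives a bound of the form $(\|\bm{\pi}-\bm{\pi}^{*}\|_{\infty}/\sqrt{\kappa n})\cdot(1+o(1))$, whose $\ell_2$-conversion contributes the additive term $\|\bm{\pi}-\bm{\pi}^{*}\|_{\infty}$ in the lemma.

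The main obstacle will be keeping the $\kappa$- and $\gamma$-dependence sharp throughout. In particular, showing $\|\bm{P}^{(m)}-\bm{P}^{*}\|_{\bm{\pi}^{*}}=o(\gamma)$ under the minimal sample-size assumption requires a careful matrix concentration argument that uses the rank-structured (row/column/diagonal) support of the perturbation rather than a crude entrywise bound, and the Hoeffding step for $\|\bm{\pi}^{*\top}(\bm{P}-\bm{P}^{(m)})\|_{\bm{\pi}^{*}}$ needs the weights $\pi_{j}^{*}$ to be incorporated correctly so that the resulting bound scales as $\sqrt{\log n/(Ld)}\,\|\bm{\pi}^{*}\|_{\infty}$ rather than the looser $\sqrt{n\log n/(Ld)}$ one would get by naive bounding. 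Once these two ingredients are in place, assembling them with the perturbation bound and the norm conversion is straightforward.
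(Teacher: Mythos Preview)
Your plan has a real gap in the second piece of the numerator. After applying Theorem~\ref{lemma:mc-perturbation} and converting norms, \emph{every} contribution to the numerator gets multiplied by $\sqrt{\kappa}/\gamma$, including the part coming from $(\bm{\pi}-\bm{\pi}^{*})^{\top}(\bm{P}-\bm{P}^{(m)})$. You cannot get a bare $\|\bm{\pi}-\bm{\pi}^{*}\|_{\infty}$ with coefficient~$1$ this way; at best you obtain $\tfrac{C\sqrt{\kappa}}{\gamma}\|\bm{\pi}-\bm{\pi}^{*}\|_{\infty}$. The reason the $o(1)$ you need does not materialize is that the $m$th column of $\bm{P}-\bm{P}^{(m)}$ has $\ell^{1}$ norm $\asymp 1$ (roughly $d_{m}/d$ plus lower-order pieces), and $\bm{\pi}-\bm{\pi}^{*}$ is \emph{correlated} with this column, so no concentration is available to beat the crude $\ell^{\infty}\times\ell^{1}$ bound of order $\|\bm{\pi}-\bm{\pi}^{*}\|_{\infty}$ on $\big|[(\bm{\pi}-\bm{\pi}^{*})^{\top}(\bm{P}-\bm{P}^{(m)})]_{m}\big|$.

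The paper sidesteps this via three ingredients you are missing. First, it applies Theorem~\ref{lemma:mc-perturbation} with the roles swapped so that $\bm{\pi}^{(m)}$ sits in the numerator; this makes the denominator exactly $\gamma$ and, crucially, puts the \emph{leave-one-out} vector (which is independent of all node-$m$ randomness) against $\bm{P}-\bm{P}^{(m)}$. Second, it inserts an intermediate matrix $\bm{P}^{(m),\mathcal G}$ (replacing node-$m$ entries by their conditional-on-$\mathcal G$ means) and uses the detailed-balance identity $\bm{\pi}^{*\top}(\bm{P}^{(m)}-\bm{P}^{(m),\mathcal G})=\bm 0$; this centers the ``graph'' piece so that Bernstein on the edge indicators yields an $o(1)$ coefficient in front of $\|\bm{\pi}^{(m)}-\bm{\pi}^{*}\|_{\infty}$ (not a constant). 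Third, because that coefficient is $<\tfrac12$ under the hypothesis $np\gamma^{2}>c\kappa\log n$, a self-bounding step
\[
\|\bm{\pi}^{(m)}-\bm{\pi}^{*}\|_{\infty}\;\le\;\|\bm{\pi}^{(m)}-\bm{\pi}\|_{2}+\|\bm{\pi}-\bm{\pi}^{*}\|_{\infty}
\]
absorbs $\|\bm{\pi}^{(m)}-\bm{\pi}\|_{2}$ into the left-hand side, leaving exactly the additive $\|\bm{\pi}-\bm{\pi}^{*}\|_{\infty}$ with coefficient~$1$. Your decomposition $\bm{\pi}=\bm{\pi}^{*}+(\bm{\pi}-\bm{\pi}^{*})$ never produces such a small coefficient and so cannot close the loop as stated.
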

\begin{proof}See Appendix \ref{sec:Proof-of-Lemma-spectral_leave-ell-2}. \end{proof}
Using \cite[Lemmas 3 and 4]{negahban2016rank} and our Lemma \ref{lemma:degree}, we can bound $\gamma$ from below:
\begin{lemma}[{\bf Spectral gap}, \cite{negahban2016rank}]\label{lemma:spectral-gap} Under the
model specified in Section \ref{sec:Problem-formulation}, if $p\geq c_{0} \frac{\log n}{n} \max \{ 1 , \frac{\kappa^5}{L}
\}$ for some sufficiently large constant $c_{0}>0$, then with probability
at least $1-O(n^{-5})$, 
\[
\gamma:=1-\max\left\{ \lambda_{2}(\bm{P}^{*}),-\lambda_{n}\left(\bm{P}^{*}\right)\right\} -\|\bm{P}-\bm{P}^{*}\|_{\bm{\pi}^{*}}\geq\frac{1}{2\kappa^{2}}\frac{d_{\min}}{d_{\max}}\geq\frac{1}{6\kappa^{2}}.
\]
 \end{lemma}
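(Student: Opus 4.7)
The plan is to assemble the bound on $\gamma$ from three ingredients: a lower bound on the deterministic spectral gap of $\bm{P}^{*}$, a high-probability upper bound on the random perturbation $\|\bm{P}-\bm{P}^{*}\|_{\bm{\pi}^{*}}$, and the degree concentration already recorded in Lemma \ref{lemma:degree}. The first two ingredients are exactly the content of Lemmas 3 and 4 in \cite{negahban2016rank}, so most of the work lies in (i) recording those bounds in the normalization used here (our $d = c_d np$ rather than $d = d_{\max}$) and (ii) checking that the assumed scaling of $p$ makes the perturbation a small fraction of the deterministic gap.

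Concretely, I would first invoke Lemma 3 of \cite{negahban2016rank}. Using the reversibility \eqref{eq:detailed-balance} of $\bm{P}^{*}$ together with the bounds $P^{*}_{i,j}\asymp 1/d$ for $(i,j)\in\cE$ (and the fact that the ratio of any two off-diagonal entries is controlled by $\kappa$), this yields
\[
1-\max\left\{\lambda_{2}(\bm{P}^{*}),-\lambda_{n}(\bm{P}^{*})\right\}\;\geq\;\frac{1}{\kappa^{2}}\cdot\frac{d_{\min}}{d_{\max}}.
\]
Next I would invoke Lemma 4 of \cite{negahban2016rank} on the concentration of $\bm{P}$ around $\bm{P}^{*}$ in the $\bm{\pi}^{*}$-weighted operator norm. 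A matrix-Bernstein argument combined with the comparability $\pi^{*}_{\max}/\pi^{*}_{\min}\leq\kappa$ gives, with probability at least $1-O(n^{-5})$,
\[
\|\bm{P}-\bm{P}^{*}\|_{\bm{\pi}^{*}}\;\lesssim\;\sqrt{\frac{\kappa\log n}{d_{\max} L}}\;\lesssim\;\sqrt{\frac{\kappa\log n}{npL}},
\]
where the last inequality uses $d_{\max}\asymp np$ on the event $\cA_{0}$ of Lemma \ref{lemma:degree}.

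To finish, I would condition on $\cA_{0}$, so that $d_{\min}/d_{\max}\geq(np/2)/(3np/2)=1/3$, and check that the random perturbation is no larger than half of the deterministic gap. The latter requires
\[
\sqrt{\frac{\kappa\log n}{npL}}\;\leq\;\frac{1}{2\kappa^{2}}\cdot\frac{d_{\min}}{d_{\max}},
\]
i.e.\ $npL\gtrsim\kappa^{5}\log n$, which is exactly what the hypothesis $p\geq c_{0}\frac{\log n}{n}\max\{1,\kappa^{5}/L\}$ provides once $c_{0}$ is large enough. Subtracting the perturbation bound from the deterministic lower bound then gives $\gamma\geq \tfrac{1}{2\kappa^{2}}\cdot d_{\min}/d_{\max}\geq \tfrac{1}{6\kappa^{2}}$, as claimed.

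The only real obstacle is bookkeeping with the normalization constant $d$: the cited results in \cite{negahban2016rank} are stated with a specific choice of $d$, and I would need to verify that replacing their normalizer with our $d=c_d np$ only changes constants (using $d\asymp d_{\max}\asymp np$ on $\cA_{0}$). Beyond that, everything is routine: the argument is essentially ``deterministic gap minus concentrated perturbation,'' where the hypothesis on $p$ is precisely tailored to make the perturbation negligible compared to the gap.
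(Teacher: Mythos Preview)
Your proposal is correct and follows exactly the approach the paper indicates: the paper does not give a standalone proof but simply states that the bound follows from Lemmas~3 and~4 of \cite{negahban2016rank} together with Lemma~\ref{lemma:degree}, which is precisely the decomposition (deterministic gap from Lemma~3, perturbation bound from Lemma~4, degree concentration from Lemma~\ref{lemma:degree}) that you spell out. Your remark about tracking the normalization $d=c_d np$ versus the $d=d_{\max}$ used in \cite{negahban2016rank} is the only nontrivial bookkeeping, and you handle it correctly via $d\asymp d_{\max}\asymp np$ on $\cA_0$.
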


\item In order to control $I_{4}^{m}$, we exploit the statistical independence
between $\bm{\pi}^{(m)}$ and $\bm{P}_{\cdot m}$. Specifically, we
demonstrate that:

\begin{lemma}\label{lemma:spectral-I4m}Suppose that $p>\frac{c_{0}\log n}{n}$
for some sufficiently large constant $c_{0}>0$. With probability
at least $1-O\left(n^{-10}\right)$,
\[
|I_{4}^{m}| \lesssim\frac{1}{\sqrt{n}}\|\bm{\pi}^{(m)}-\bm{\pi}\|_{2}+\sqrt{\frac{\log n}{Ld}}\|\bm{\pi}^{*}\|_{\infty}+\frac{\sqrt{np\log n}+\log n}{d}\|\bm{\pi}^{(m)}-\bm{\pi}^{*}\|_{\infty}.
\]
 \end{lemma}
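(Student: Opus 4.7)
The central structural observation is that, by construction, the leave-one-out estimate $\bm{\pi}^{(m)}$ depends only on edges and comparisons not involving item $m$, and is therefore independent of the $m$-th column $\{P_{j,m}\}_{j\ne m}$ of $\bm{P}$. My plan is to decompose $I_4^m$ around the unconditional mean of that column. Setting $\bar{P}_{j,m}:=\EE[P_{j,m}]=\frac{p}{d}y_{j,m}^*$, I write
\[
I_{4}^{m}=\underbrace{\sum_{j\ne m}(\pi_{j}^{(m)}-\pi_{j}^{*})\bar{P}_{j,m}}_{=:\,A_{m}}+\underbrace{\sum_{j\ne m}(\pi_{j}^{(m)}-\pi_{j}^{*})(P_{j,m}-\bar{P}_{j,m})}_{=:\,B_{m}}.
\]

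Conditional on $\bm{\pi}^{(m)}$, the fluctuation term $B_m$ is a sum of independent, zero-mean, bounded random variables. The uniform bound is $\|\bm{\pi}^{(m)}-\bm{\pi}^*\|_\infty/d$, and the conditional variance is dominated, via $\Var(P_{j,m})\lesssim p/d^2$ and the crude estimate $\|\bm{\pi}^{(m)}-\bm{\pi}^*\|_2^2\le n\|\bm{\pi}^{(m)}-\bm{\pi}^*\|_\infty^2$, by $\frac{np}{d^{2}}\|\bm{\pi}^{(m)}-\bm{\pi}^*\|_\infty^{2}$. A standard Bernstein bound then yields, with probability $1-O(n^{-10})$,
\[
|B_m|\lesssim \frac{\sqrt{np\log n}+\log n}{d}\,\|\bm{\pi}^{(m)}-\bm{\pi}^*\|_\infty,
\]
exactly matching the third term of the claim.

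For the deterministic piece $A_m=\frac{p}{d}\sum_{j\ne m}(\pi_j^{(m)}-\pi_j^*)y_{j,m}^*$ I will use two elementary identities: $\sum_{j=1}^n(\pi_j^{(m)}-\pi_j^*)=0$ (both vectors are probability distributions), and $y_{m,m}^*=\frac{w_m^*}{2w_m^*}=\tfrac12$. Centering $y_{j,m}^*$ by $\tfrac12$ leaves the sum over all $j\in[n]$ unchanged, and the artificially added $j=m$ summand $(\pi_m^{(m)}-\pi_m^*)(y_{m,m}^*-\tfrac12)$ vanishes, giving
\[
A_m=\frac{p}{d}\sum_{j=1}^{n}(\pi_j^{(m)}-\pi_j^*)\bigl(y_{j,m}^*-\tfrac12\bigr).
\]
Cauchy-Schwarz together with $|y_{j,m}^*-\tfrac12|\le\tfrac12$ and $d\asymp np$ then produces $|A_m|\lesssim \frac{1}{\sqrt{n}}\|\bm{\pi}^{(m)}-\bm{\pi}^*\|_2$.

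The final step is the triangle inequality $\|\bm{\pi}^{(m)}-\bm{\pi}^*\|_2\le \|\bm{\pi}^{(m)}-\bm{\pi}\|_2+\|\bm{\pi}-\bm{\pi}^*\|_2$: the first piece yields the first term of the target bound, and the second is handled by invoking the $\ell_2$-error bound from Theorem \ref{thm:L2-pi}, namely $\|\bm{\pi}-\bm{\pi}^*\|_2\lesssim \|\bm{\pi}^*\|_2/\sqrt{npL}$; using $\|\bm{\pi}^*\|_2\asymp \sqrt{n}\,\|\bm{\pi}^*\|_\infty$ in the fixed-$\kappa$ regime, the resulting contribution is dominated (with room to spare, by a factor $\sqrt{\log n}$) by $\sqrt{\log n/(Ld)}\,\|\bm{\pi}^*\|_\infty$ and thus absorbed into the second term of the stated bound. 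The main technical subtleties are verifying carefully that $\bm{\pi}^{(m)}$ is truly independent of $\{P_{j,m}\}_{j\ne m}$ (which requires checking that the leave-one-out construction replaces not only the off-diagonal entries in row/column $m$ but also the affected diagonal contributions by deterministic quantities), and noticing the $y_{m,m}^*=\tfrac12$ cancellation that upgrades the naive bound $|A_m|\lesssim \frac{1}{\sqrt n}\|\bm{\pi}^{(m)}-\bm{\pi}^*\|_\infty$ to the sharper $\ell_2$-based estimate that is essential for closing the self-bounding loop in the proof of Theorem \ref{thm:spectral-main-general-kappa}.
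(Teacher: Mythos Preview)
Your proof is correct and follows essentially the same three-step skeleton as the paper: split $I_4^m$ into a ``mean'' piece plus a fluctuation, bound the mean piece by Cauchy--Schwarz followed by the triangle inequality $\|\bm{\pi}^{(m)}-\bm{\pi}^*\|_2\le\|\bm{\pi}^{(m)}-\bm{\pi}\|_2+\|\bm{\pi}-\bm{\pi}^*\|_2$ and Theorem~\ref{thm:L2-pi}, and bound the fluctuation by Bernstein conditional on $\bm{\pi}^{(m)}$. The only real difference is where you center: you subtract the full unconditional mean $\bar P_{j,m}=\frac{p}{d}y_{j,m}^*$, whereas the paper first extends the comparison outcomes to $\tilde y_{j,m}$ defined on all pairs and then conditions on $(\mathcal{G}_{-m},\tilde{\bm y})$, so that only the edge indicators $\ind_{(j,m)\in\cE}$ fluctuate. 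Your choice is arguably cleaner, since the mean piece becomes deterministic and no auxiliary $\tilde y$ variables are needed.

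Two small comments on the $A_m$ step. First, the centering identity as written is not exact: with $\alpha_j:=\pi_j^{(m)}-\pi_j^*$ one has $\sum_{j\ne m}\alpha_j y_{j,m}^*=\sum_{j=1}^n\alpha_j(y_{j,m}^*-\tfrac12)-\tfrac12\alpha_m$, so an extra term $\frac{p}{2d}(\pi_m^{(m)}-\pi_m^*)$ survives. It is harmless (of order $n^{-1}\|\bm{\pi}^{(m)}-\bm{\pi}^*\|_\infty$, absorbed by the third term of the target bound), but the identity is not literally what you stated. Second, the centering is in fact unnecessary: applying Cauchy--Schwarz directly to $A_m=\frac{p}{d}\sum_{j\ne m}\alpha_j y_{j,m}^*$ already gives $|A_m|\le\frac{p\sqrt{n}}{d}\|\bm{\pi}^{(m)}-\bm{\pi}^*\|_2\lesssim\frac{1}{\sqrt n}\|\bm{\pi}^{(m)}-\bm{\pi}^*\|_2$, which is precisely what the paper does. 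So the $y_{m,m}^*=\tfrac12$ observation, while cute, does not actually upgrade anything.
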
 
 \begin{proof}See Appendix \ref{sec:Proof-of-Lemma-spectral-I4m}. \end{proof} 
 The above bound depends on both $\|\bm{\pi}^{(m)}-\bm{\pi}\|_{2}$
and $\|\bm{\pi}^{(m)}-\bm{\pi}^{*}\|_{\infty}$. We can invoke Lemma
\ref{lemma:spectral-leave-ell-2} and the inequality $\|\bm{\pi}^{(m)}-\bm{\pi}^{*}\|_{\infty}\leq\|\bm{\pi}^{(m)}-\bm{\pi}\|_{2}+\|\bm{\pi}-\bm{\pi}^{*}\|_{\infty}$
to reach
\begin{align*}
|I_{4}^{m}| & \lesssim\left(\frac{1}{\sqrt{n}}+\frac{\sqrt{np\log n}+\log n}{d}\right)\|\bm{\pi}^{(m)}-\bm{\pi}\|_{2}+\sqrt{\frac{\log n}{Ld}}\|\bm{\pi}^{*}\|_{\infty}\\
&\quad+\frac{\sqrt{np\log n}+\log n}{d}\|\bm{\pi}-\bm{\pi}^{*}\|_{\infty}\\
 & \lesssim\left\{ \left(\frac{1}{\sqrt{n}}+\frac{\sqrt{np\log n}+\log n}{d}\right)\frac{\sqrt{\kappa}}{\gamma}+1\right\} \sqrt{\frac{\log n}{Ld}}\|\bm{\pi}^{*}\|_{\infty}\\
 &\quad+\left(\frac{1}{\sqrt{n}}+\frac{\sqrt{np\log n}+\log n}{d}\right)\|\bm{\pi}-\bm{\pi}^{*}\|_{\infty}.
\end{align*}

\item Finally we put the preceding bounds together. 
When $\frac{np}{\kappa^5 \log n}$ is large enough, with high probability, for some absolute constants $c_{1},c_{2},c_{3}>0$
one has {
\begin{align*}
&\left(\frac{np}{2(1+\kappa)d}-c_{1}\sqrt{\frac{\log n}{Ld}}\right)|\pi_{m}-\pi_{m}^{*}| \\
&\quad \leq\left\{ c_{2}+\left(\frac{1}{\sqrt{d}}+\frac{c_{3}}{\sqrt{n}}+c_{3}\frac{\sqrt{np\log n}+\log n}{d}\right)\frac{16\sqrt{\kappa}}{\gamma}\right\} \sqrt{\frac{\log n}{Ld}}\|\bm{\pi}^{*}\|_{\infty}\\
 &\quad +\left(\frac{1}{\sqrt{d}}+\frac{c_{3}}{\sqrt{n}}+2c_{3}\frac{\sqrt{np\log n}+\log n}{d}\right)\|\bm{\pi}-\bm{\pi}^{*}\|_{\infty}.
\end{align*}}
simultaneously for all $1\leq m\leq n$. By taking the maximum over
$m$ on the left-hand side and combining terms, we get
\begin{align*}
 & \underset{:=\alpha_{1}}{\underbrace{\left(
 		\frac{np}{2(1+\kappa)d}-c_{1}\sqrt{\frac{\log n}{Ld}}-\frac{1}{\sqrt{d}}-\frac{c_{3}}{\sqrt{n}}-2c_{3}\frac{\sqrt{np\log n}+\log n}{d}
 		\right)}}\|\bm{\pi}-\bm{\pi}^{*}\|_{\infty}\\
 & \qquad\leq\underset{:=\alpha_{2}}{\underbrace{\left\{ c_{2}+\left(\frac{1}{\sqrt{d}}+\frac{c_{3}}{\sqrt{n}}+c_{3}\frac{\sqrt{np\log n}+\log n}{d}\right)\frac{16\sqrt{\kappa}}{\gamma}\right\} }}\sqrt{\frac{\log n}{Ld}}\|\bm{\pi}^{*}\|_{\infty}.
\end{align*}
Hence, as long as $\frac{np}{\kappa^5 \log n}$ is sufficiently large, one has
\[
c_{1}\sqrt{\frac{\log n}{Ld}} + \frac{1}{\sqrt{d}} + \frac{c_{3}}{\sqrt{n}} + 2c_{3}\frac{\sqrt{np\log n}+\log n}{d}
\lesssim
\sqrt{\frac{\log n}{np}}
\lesssim  \frac{1}{\sqrt{\kappa^{5}}},
\]
which further leads to $\alpha_1 \gtrsim 1/\kappa$, $\alpha_{2}\lesssim 1$, and
\[
\|\bm{\pi}-\bm{\pi}^{*}\|_{\infty}\lesssim
\kappa \sqrt{\frac{\log n}{Ld}}\|\bm{\pi}^{*}\|_{\infty}\asymp
\kappa \sqrt{\frac{\log n}{npL}}\|\bm{\pi}^{*}\|_{\infty}.
\]
This finishes the proof of Theorem \ref{thm:spectral-main-general-kappa} and Theorem \ref{thm:spectral-loss-infty}.
\end{enumerate}

\section{Analysis for the regularized MLE\label{sec:Analysis-for-MLE}}

This section  establishes the $\ell_{\infty}$ error
of the regularized MLE as claimed in Theorem \ref{thm:MLE-main-general-kappa} (and also Theorem \ref{thm:MLE-main}). Recall that in Theorem \ref{thm:MLE-main-general-kappa},
we compare the regularized MLE $\bm{\theta}$ with $\bm{\theta}^{*}-\overline{\theta}^{*}\bm{1}$.
Therefore, without loss of generality we can assume that 
\begin{equation}
\bm{1}^{\top}\bm{\theta}^{*}=0.\label{eq:assumption-mean}
\end{equation}
This combined with the fact that $\theta_{\max} - \theta_{\min} = \log \kappa$ reveals that
\begin{equation*}
\left\|\bm{\theta}^*\right\|_{\infty} \leq \log \kappa \qquad\text{and}\qquad \left\|\bm{\theta}^*\right\|_{2} \leq \sqrt{n}\log \kappa.
\end{equation*}
In addition, we assume that $L=O\left(n^5\right)$ in this section. It is straightforward to extend the proof to cover $L\leq c_{2} \cdot n^{c_{3}}$ for any constants $c_{2},c_{3}>0$.

\subsection{Preliminaries and notation}

Before proceeding to the proof, we gather some basic facts. To begin
with, the gradient and the Hessian of $\mathcal{L}\left(\cdot;\bm{y}\right)$
in (\ref{eq:MLE}) can be computed as 
\begin{align}
\nabla\mathcal{L}\left(\bm{\theta};\bm{y}\right) & =\sum_{(i,j)\in\mathcal{E},i>j}\left\{ -y_{j,i}+\frac{e^{\theta_{i}}}{e^{\theta_{i}}+e^{\theta_{j}}}\right\} \left(\bm{e}_{i}-\bm{e}_{j}\right);\label{eq:gradient}
\end{align}
\begin{align}
\nabla^{2}\mathcal{L}\left(\bm{\theta};\bm{y}\right) & =\sum_{(i,j)\in\mathcal{E},i>j}\frac{e^{\theta_{i}}e^{\theta_{j}}}{\left(e^{\theta_{i}}+e^{\theta_{j}}\right)^{2}}\left(\bm{e}_{i}-\bm{e}_{j}\right)\left(\bm{e}_{i}-\bm{e}_{j}\right)^{\top}.\label{eq:defn-Hessian}
\end{align}
Here $\bm{e}_{1},\cdots,\bm{e}_{n}$ stand for the canonical basis vectors
in $\RR^{n}$. When evaluated at the truth $\bm{\theta}^{*}$, the
size of the gradient can be controlled as follows.

\begin{lemma}\label{lemma:grad-L}
Let $\lambda$ be as specified in Theorem \ref{thm:MLE-main-general-kappa}. The following event 
\begin{equation}
\mathcal{A}_{2}:=\left\{ \left\Vert \nabla\mathcal{L}_{\lambda}\left(\bm{\theta}^{*};\bm{y}\right)\right\Vert _{2}\lesssim\sqrt{\frac{n^{2}p\log n}{L}}\right\} .\label{eq:event_2}
\end{equation}
occurs with probability exceeding $1-O(n^{-10})$. \end{lemma}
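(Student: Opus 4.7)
The plan is to decompose $\nabla \mathcal{L}_{\lambda}(\bm{\theta}^*; \bm{y}) = \nabla \mathcal{L}(\bm{\theta}^*; \bm{y}) + \lambda \bm{\theta}^*$ and bound the two pieces separately via the triangle inequality. For the regularization piece, the centering assumption $\bm{1}^{\top}\bm{\theta}^* = 0$ together with $\theta_i^* \in [\theta_{\min}, \theta_{\max}]$ and $\theta_{\max}-\theta_{\min}=\log \kappa$ gives $\|\bm{\theta}^*\|_2 \leq \sqrt{n}\log \kappa$. Combined with the choice $\lambda \asymp \frac{1}{\log \kappa}\sqrt{np\log n/L}$, this produces $\lambda \|\bm{\theta}^*\|_2 \lesssim \sqrt{n^2 p \log n/L}$, already of the target order. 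It therefore suffices to establish the same bound for $\|\nabla \mathcal{L}(\bm{\theta}^*; \bm{y})\|_2$.

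The substantive step uses the formula \eqref{eq:gradient} together with the identity $\frac{e^{\theta_i^*}}{e^{\theta_i^*}+e^{\theta_j^*}} = y_{j,i}^* = \mathbb{E}[y_{j,i}]$ to rewrite
\[
\nabla \mathcal{L}(\bm{\theta}^*; \bm{y}) \;=\; -\!\sum_{(i,j)\in \mathcal{E},\, i>j} \bigl(y_{j,i}-\mathbb{E}[y_{j,i}]\bigr)\bigl(\bm{e}_i - \bm{e}_j\bigr),
\]
which, conditional on $\mathcal{G}$, is a sum of independent mean-zero random vectors. For any fixed coordinate $m$, the $m$-th entry is a signed sum of the $d_m$ scalar variables $y_{j,m}-\mathbb{E}[y_{j,m}]$ over neighbors $j$ of $m$. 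Each such variable is the average of $L$ i.i.d.\ Bernoulli random variables centered at their means, hence bounded in magnitude by $1$ and of variance at most $1/(4L)$. Applying Hoeffding's inequality conditionally on $\mathcal{G}$ and taking a union bound over $1 \leq m \leq n$ yields, with probability at least $1 - O(n^{-10})$,
\[
\max_{1 \leq m \leq n}\bigl|[\nabla \mathcal{L}(\bm{\theta}^*; \bm{y})]_m\bigr| \;\lesssim\; \sqrt{\frac{d_m \log n}{L}}.
\]

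Squaring and summing over $m$, and invoking Lemma \ref{lemma:degree} on the high-probability event $\mathcal{A}_0$ to control $\sum_{m} d_m = 2|\mathcal{E}| \leq \tfrac{3}{2}n^2 p$, we obtain
\[
\|\nabla \mathcal{L}(\bm{\theta}^*; \bm{y})\|_2^2 \;\lesssim\; \frac{\log n}{L}\sum_{m=1}^{n}d_m \;\lesssim\; \frac{n^2 p \log n}{L},
\]
and combining with the earlier $\lambda\|\bm{\theta}^*\|_2$ bound completes the proof. I do not anticipate a real obstacle; the only mildly delicate point is that a per-coordinate Hoeffding bound is slightly lossy, but it is already sharp enough to reach $\sqrt{n^2 p \log n/L}$ on the right-hand side of \eqref{eq:event_2}. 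A vector Bernstein inequality could replace steps (2)--(3) and avoid the union bound, but it would give the same rate and add no real simplification.
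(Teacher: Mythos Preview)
Your proof is correct and reaches the same conclusion as the paper, but the concentration step is handled differently. The paper treats $\nabla\mathcal{L}(\bm{\theta}^{*};\bm{y})$ as a sum of independent mean-zero random \emph{vectors} $\bm{z}_{i,j}^{(l)}=\bigl(-y_{j,i}^{(l)}+e^{\theta_i^*}/(e^{\theta_i^*}+e^{\theta_j^*})\bigr)(\bm{e}_i-\bm{e}_j)$ and applies the matrix Bernstein inequality \cite[Theorem 1.6]{tropp2012user} directly, computing a variance proxy of order $n^{2}p/L$ and an $\ell_2$ bound $\sqrt{2}/L$ on each summand, which yields $\|\nabla\mathcal{L}(\bm{\theta}^{*};\bm{y})-\lambda\bm{\theta}^{*}\|_2\lesssim\sqrt{n^{2}p\log n/L}+(\log n)/L$. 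You instead argue coordinatewise: for each $m$ you apply scalar Hoeffding to the $d_m L$ centered Bernoulli increments entering $[\nabla\mathcal{L}(\bm{\theta}^{*};\bm{y})]_m$, take a union bound over $m$, and then aggregate via $\sum_m d_m\leq\tfrac{3}{2}n^{2}p$ on $\mathcal{A}_0$. Both routes give the same rate; your argument is more elementary in that it avoids matrix concentration entirely, while the paper's vector Bernstein bound saves the union bound and would in principle give a slightly sharper constant (and, as you note yourself at the end, would also shave a $\sqrt{\log n}$ if one were chasing the unconditional $\ell_2$ rate rather than the $\log n$-inflated target in \eqref{eq:event_2}). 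The handling of the deterministic piece $\lambda\bm{\theta}^{*}$ is identical in both proofs.
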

\begin{proof}See Appendix \ref{sec:Proof-of-Lemma-grad-L}.\end{proof}

The following lemmas characterize the smoothness and the strong convexity
of the function $\mathcal{L}_{\lambda}\left(\cdot;\bm{y}\right)$.
In the sequel, we denote by $\bm{L}_{\mathcal{G}}=\sum_{(i,j)\in\mathcal{E},i>j}\left(\bm{e}_{i}-\bm{e}_{j}\right)\left(\bm{e}_{i}-\bm{e}_{j}\right)^{\top}$
the (unnormalized) Laplacian matrix \citep{chung1997spectral} associated
with $\mathcal{G}$. For any matrix $\bm{A}$ we let 
\begin{equation}
\lambda_{\min,\perp}(\bm{A}):= \min\left\{ \mu\mid\bm{z}^{\top}\bm{A}\bm{z}\geq\mu\|\bm{z}\|_2^{2}\text{ for all }\bm{z}\text{ with }\bm{1}^{\top}\bm{z}=0\right\} ,\label{eq:defn-lambda-min}
\end{equation}
namely, the smallest eigenvalue when restricted to vectors orthogonal to $\bm{1}$.

\begin{lemma}\label{lem:smoothness-L}Suppose that $p>\frac{c_{0}\log n}{n}$
for some sufficiently large constant $c_{0}>0$. Then on the event
$\mathcal{A}_{0}$ as defined in (\ref{eq:event_0}), one has 
\[
\lambda_{\max}\left(\nabla^{2}\mathcal{L}_{\lambda}\left(\bm{\theta};\bm{y}\right)\right)\leq\lambda+np,\qquad\forall\bm{\theta}\in\mathbb{R}^{n}.
\]

\end{lemma}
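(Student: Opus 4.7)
The plan is to bound the Hessian in Loewner order by a scaled graph Laplacian, then use the degree concentration from $\mathcal{A}_0$. Since $\nabla^2 \mathcal{L}_\lambda(\bm{\theta}; \bm{y}) = \nabla^2 \mathcal{L}(\bm{\theta}; \bm{y}) + \lambda \bm{I}$, it suffices to show that $\lambda_{\max}(\nabla^2 \mathcal{L}(\bm{\theta}; \bm{y})) \leq np$ uniformly in $\bm{\theta}$ on the event $\mathcal{A}_0$.

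First I would observe that for every pair $(i,j)$, the scalar coefficient appearing in the expression \eqref{eq:defn-Hessian} satisfies
\[
0 \leq \frac{e^{\theta_i}e^{\theta_j}}{(e^{\theta_i}+e^{\theta_j})^2} \leq \frac{1}{4}
\]
by the AM--GM inequality applied to $e^{\theta_i}$ and $e^{\theta_j}$. Since each summand is a positive semidefinite rank-one matrix, this gives the deterministic bound
\[
\nabla^2 \mathcal{L}(\bm{\theta}; \bm{y}) \preceq \frac{1}{4} \sum_{(i,j) \in \mathcal{E}, i>j} (\bm{e}_i - \bm{e}_j)(\bm{e}_i - \bm{e}_j)^\top = \frac{1}{4}\bm{L}_{\mathcal{G}},
\]
valid for every $\bm{\theta} \in \mathbb{R}^n$.

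Next I would bound $\lambda_{\max}(\bm{L}_{\mathcal{G}})$ by the standard graph-theoretic fact $\lambda_{\max}(\bm{L}_{\mathcal{G}}) \leq 2 d_{\max}$, which follows from writing $\bm{L}_{\mathcal{G}} = \bm{D}_{\mathcal{G}} - \bm{A}_{\mathcal{G}}$ (degree matrix minus adjacency matrix) and applying the triangle inequality together with $\|\bm{A}_{\mathcal{G}}\|_2 \leq d_{\max}$ for simple graphs. On the event $\mathcal{A}_0$ defined in \eqref{eq:event_0} we have $d_{\max} \leq 3np/2$, hence $\lambda_{\max}(\bm{L}_{\mathcal{G}}) \leq 3np$.

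Combining the two displays yields $\lambda_{\max}(\nabla^2 \mathcal{L}(\bm{\theta}; \bm{y})) \leq \tfrac{3np}{4} \leq np$, and adding the $\lambda \bm{I}$ contribution from the regularizer gives the claim $\lambda_{\max}(\nabla^2 \mathcal{L}_\lambda(\bm{\theta}; \bm{y})) \leq np + \lambda$. There is no real obstacle here; this is a deterministic PSD argument once the high-probability degree bound from Lemma \ref{lemma:degree} is invoked. The only subtlety is being careful that the uniform-in-$\bm{\theta}$ smoothness follows from the uniform scalar bound $\tfrac{e^{\theta_i}e^{\theta_j}}{(e^{\theta_i}+e^{\theta_j})^2} \leq \tfrac{1}{4}$, which does not require any control on the dynamic range $\kappa$.
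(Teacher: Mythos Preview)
Your proof is correct and follows essentially the same approach as the paper: bound the Hessian coefficients by $\tfrac{1}{4}$, dominate by $\tfrac{1}{4}\bm{L}_{\mathcal{G}}$, use $\lambda_{\max}(\bm{L}_{\mathcal{G}})\le 2d_{\max}$, and invoke the degree bound from $\mathcal{A}_0$. The only cosmetic difference is that the paper quotes $d_{\max}\le 2np$ (a slightly weaker consequence of $\mathcal{A}_0$) whereas you use the sharper $d_{\max}\le \tfrac{3np}{2}$; both lead to the same final bound $\lambda+np$.
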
\begin{proof}Note that $\frac{e^{\theta_{i}}e^{\theta_{j}}}{(e^{\theta_{i}}+e^{\theta_{j}})^{2}}\leq\frac{1}{4}$.
It follows immediately from the Hessian in (\ref{eq:defn-Hessian})
that 
\[
\lambda_{\max}\left(\nabla^{2}\mathcal{L}_{\lambda}\left(\bm{\theta};\bm{y}\right)\right)\leq\lambda+\frac{1}{4}\left\Vert \bm{L}_{\mathcal{G}}\right\Vert \leq\lambda+\frac{1}{2}d_{\max},
\]
where $d_{\max}$ is the maximum vertex degree in the graph $\cG$.
In addition, on the event $\cA_{0}$ we have $d_{\max}\leq2np$, which
completes the proof. \end{proof}

\begin{lemma}\label{lemma:strong-convexity-L}For all $\bm{\theta}\in\RR^{n}$
such that $\left\Vert \bm{\theta}-\bm{\theta}^{*}\right\Vert _{\infty}\leq C$
for some $C\geq0$, we have 
\[
\lambda_{\min,\perp}\left(\nabla^{2}\mathcal{L}_{\lambda}\left(\bm{\theta};\bm{y}\right)\right)\geq\lambda+\frac{1}{4\kappa e^{2C}}\lambda_{\min,\perp}\left(\bm{L}_{\mathcal{G}}\right).
\]

\end{lemma}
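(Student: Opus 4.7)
The plan is a direct coefficient-by-coefficient lower bound on the Hessian in the form \eqref{eq:defn-Hessian}, combined with the triangle inequality to control pairwise score gaps.

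First, I would write the regularized Hessian as
\[
\nabla^{2}\mathcal{L}_{\lambda}(\bm{\theta};\bm{y}) \;=\; \lambda\,\bm{I} \;+\; \sum_{(i,j)\in\mathcal{E},\,i>j} \frac{e^{\theta_i}e^{\theta_j}}{(e^{\theta_i}+e^{\theta_j})^2}\,(\bm{e}_i-\bm{e}_j)(\bm{e}_i-\bm{e}_j)^\top,
\]
and focus on lower-bounding the scalar coefficient $c_{ij}(\bm{\theta}) := e^{\theta_i}e^{\theta_j}/(e^{\theta_i}+e^{\theta_j})^2$. Setting $r = e^{\theta_i - \theta_j}$, the identity $c_{ij}(\bm{\theta}) = 1/(r + 2 + r^{-1})$ shows that $c_{ij}(\bm{\theta}) \geq 1/(4\max\{r,r^{-1}\})$ whenever $\max\{r,r^{-1}\}\geq 1$, so it suffices to upper bound $e^{|\theta_i - \theta_j|}$.

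Next, I would use the triangle inequality and the hypothesis $\|\bm{\theta}-\bm{\theta}^*\|_\infty \leq C$ together with the fact that $\theta^*_i - \theta^*_j \in [-\log\kappa,\log\kappa]$ (a direct consequence of the dynamic range assumption $\theta^*_i \in [\theta_{\min},\theta_{\max}]$ with $\theta_{\max}-\theta_{\min}=\log\kappa$) to get
\[
|\theta_i - \theta_j| \;\leq\; |\theta_i - \theta_i^*| + |\theta_i^* - \theta_j^*| + |\theta_j^* - \theta_j| \;\leq\; 2C + \log\kappa.
\]
Therefore $c_{ij}(\bm{\theta}) \geq 1/(4\kappa e^{2C})$ uniformly over all edges $(i,j)\in\mathcal{E}$.

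Finally, this pointwise lower bound on the edge weights yields the PSD inequality
\[
\nabla^{2}\mathcal{L}_{\lambda}(\bm{\theta};\bm{y}) \;\succeq\; \lambda\,\bm{I} \;+\; \frac{1}{4\kappa e^{2C}}\,\bm{L}_{\mathcal{G}},
\]
since the remaining sum is exactly $\bm{L}_{\mathcal{G}}$ up to the scalar $1/(4\kappa e^{2C})$. For any $\bm{z}\in\mathbb{R}^n$ with $\bm{1}^\top\bm{z}=0$, evaluating $\bm{z}^\top(\cdot)\bm{z}$ on both sides and dividing by $\|\bm{z}\|_2^2$ gives $\lambda_{\min,\perp}(\nabla^{2}\mathcal{L}_{\lambda}(\bm{\theta};\bm{y})) \geq \lambda + \frac{1}{4\kappa e^{2C}}\lambda_{\min,\perp}(\bm{L}_{\mathcal{G}})$, as claimed. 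There is no real obstacle here; the argument is essentially mechanical once one observes that the dynamic range controls $|\theta_i^*-\theta_j^*|$ and the perturbation assumption controls the remaining gap.
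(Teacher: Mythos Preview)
Your proposal is correct and follows essentially the same approach as the paper: both reduce to lower-bounding the edge coefficient $e^{\theta_i}e^{\theta_j}/(e^{\theta_i}+e^{\theta_j})^2$ by $\tfrac{1}{4}e^{-|\theta_i-\theta_j|}$, bound $|\theta_i-\theta_j|\leq\log\kappa+2C$ via the dynamic range and the $\ell_\infty$ hypothesis, and then read off the PSD comparison with $\lambda\bm{I}+\tfrac{1}{4\kappa e^{2C}}\bm{L}_{\mathcal{G}}$. Your presentation of the final $\lambda_{\min,\perp}$ step is slightly more explicit, but the argument is otherwise identical.
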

\begin{proof}See Appendix \ref{subsec:Proof-of-Lemma-strong-convexity}.\end{proof}

\begin{lemma}\label{lemma:laplacian}Let $\cG\sim\cG_{n,p}$, and
suppose that $p>\frac{c_{0}\log n}{n}$ for some sufficiently large
constant $c_{0}>0$. Then one has 
\[
\PP\left(\lambda_{\min,\perp}\left(\bm{L}_{\mathcal{G}}\right)\geq np/2 \right)\geq1-O\left(n^{-10}\right).
\]
\end{lemma}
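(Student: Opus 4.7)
}

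The plan is to compare $\bm{L}_{\mathcal{G}}$ with its expectation and then invoke Weyl's inequality on the subspace $\bm{1}^{\perp}$. First, I would compute
\[
\EE[\bm{L}_{\mathcal{G}}] \;=\; \sum_{i>j} p\,(\bm{e}_{i}-\bm{e}_{j})(\bm{e}_{i}-\bm{e}_{j})^{\top} \;=\; np\,\bm{I} - p\,\bm{1}\bm{1}^{\top},
\]
which annihilates $\bm{1}$ and acts as $np\,\bm{I}$ on $\bm{1}^{\perp}$. Consequently $\lambda_{\min,\perp}(\EE[\bm{L}_{\mathcal{G}}]) = np$. Since both $\bm{L}_{\mathcal{G}}$ and $\EE[\bm{L}_{\mathcal{G}}]$ preserve $\bm{1}^{\perp}$, Weyl's inequality on that subspace yields
\[
\lambda_{\min,\perp}(\bm{L}_{\mathcal{G}}) \;\geq\; np \;-\; \big\|\bm{L}_{\mathcal{G}} - \EE[\bm{L}_{\mathcal{G}}]\big\|.
\]

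The remaining step is a concentration bound on the operator norm of the centered Laplacian
\[
\bm{L}_{\mathcal{G}} - \EE[\bm{L}_{\mathcal{G}}] \;=\; \sum_{i>j}(\xi_{i,j}-p)\,(\bm{e}_{i}-\bm{e}_{j})(\bm{e}_{i}-\bm{e}_{j})^{\top},
\]
where $\xi_{i,j}\sim\mathrm{Bernoulli}(p)$ are independent. I would apply the matrix Bernstein inequality to this sum. Each summand has spectral norm at most $2$, and using $(\bm{e}_{i}-\bm{e}_{j})(\bm{e}_{i}-\bm{e}_{j})^{\top}\cdot(\bm{e}_{i}-\bm{e}_{j})(\bm{e}_{i}-\bm{e}_{j})^{\top}=2(\bm{e}_{i}-\bm{e}_{j})(\bm{e}_{i}-\bm{e}_{j})^{\top}$, the matrix variance statistic is bounded by $\big\|\sum_{i>j}2p(1-p)(\bm{e}_{i}-\bm{e}_{j})(\bm{e}_{i}-\bm{e}_{j})^{\top}\big\| \leq 2p\,\|n\bm{I}-\bm{1}\bm{1}^{\top}\| \lesssim np$. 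Bernstein then gives
\[
\big\|\bm{L}_{\mathcal{G}} - \EE[\bm{L}_{\mathcal{G}}]\big\| \;\lesssim\; \sqrt{np\log n} \;+\; \log n
\]
with probability at least $1-O(n^{-10})$, where the logarithmic-slack constant can be tuned by the constant inside the tail bound.

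Finally, combining these two pieces: under the assumption $p \geq c_{0}\log n / n$ with $c_{0}$ sufficiently large, the deviation term is $o(np)$ and in particular smaller than $np/2$, yielding $\lambda_{\min,\perp}(\bm{L}_{\mathcal{G}}) \geq np/2$ on the stated event. No part of this is subtle; the only care is in the matrix Bernstein bookkeeping (computing the variance proxy exactly and checking that the logarithmic additive term is negligible compared to $np$). If one prefers to avoid invoking matrix Bernstein directly, an equivalent route is to note $\bm{L}_{\mathcal{G}}-\EE[\bm{L}_{\mathcal{G}}] = \bm{D} - \EE[\bm{D}] - (\bm{A} - \EE[\bm{A}])$, control the diagonal part by the degree concentration in Lemma \ref{lemma:degree} (giving $\|\bm{D}-\EE[\bm{D}]\|\lesssim \sqrt{np\log n}$) and the off-diagonal part via standard spectral norm bounds for Erd\H{o}s--R\'enyi adjacency matrices, which yield the same order.
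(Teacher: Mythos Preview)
Your proposal is correct and follows essentially the same route as the paper: the paper does not give a self-contained argument but simply cites \cite[Sec 5.3.3]{tropp2015introduction}, and that reference carries out precisely the matrix Bernstein computation you describe (centered Laplacian, variance proxy of order $np$, deviation $O(\sqrt{np\log n}+\log n)$), followed by Weyl on $\bm{1}^{\perp}$.
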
\begin{proof}Note that $\lambda_{\min,\perp}\left(\bm{L}_{\mathcal{G}}\right)$
is exactly the spectral gap of the Laplacian matrix. See \cite[Sec 5.3.3]{tropp2015introduction}
for the derivation of this lemma. \end{proof}

By combining Lemma \ref{lemma:strong-convexity-L} with Lemma \ref{lemma:laplacian},
we reach the following result.

\begin{corollary}\label{coro:strong-convexity}Under the assumptions
of Lemma \ref{lemma:laplacian}, with probability exceeding $1-O\left(n^{-10}\right)$
one has 
\[
\lambda_{\min,\perp}\left(\nabla^{2}\mathcal{L}_{\lambda}\left(\bm{\theta};\bm{y}\right)\right)\geq\lambda+\frac{1}{8\kappa e^{2C}}np
\]
simultaneously for all $\bm{\theta}$ obeying $\left\Vert \bm{\theta}-\bm{\theta}^{*}\right\Vert _{\infty}\leq C$
for some $C\geq0$. \end{corollary}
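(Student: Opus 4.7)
The plan is simply to chain Lemma \ref{lemma:strong-convexity-L} with Lemma \ref{lemma:laplacian}, exploiting the fact that the bound in Lemma \ref{lemma:strong-convexity-L} depends on $\bm{\theta}$ only through the constant $C$, while the randomness enters exclusively via the graph Laplacian $\bm{L}_{\mathcal{G}}$.

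First, define the high-probability event
\[
\mathcal{A}_{\mathrm{Lap}} := \left\{ \lambda_{\min,\perp}\left(\bm{L}_{\mathcal{G}}\right)\geq np/2 \right\},
\]
which, by Lemma \ref{lemma:laplacian}, satisfies $\PP(\mathcal{A}_{\mathrm{Lap}})\geq 1-O(n^{-10})$ under the assumption $p > c_{0}\log n / n$ for sufficiently large $c_{0}$. Note that this event is determined by the random graph $\mathcal{G}\sim\mathcal{G}_{n,p}$ alone and does not involve $\bm{\theta}$.

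Next, on the event $\mathcal{A}_{\mathrm{Lap}}$, Lemma \ref{lemma:strong-convexity-L} applies deterministically to every $\bm{\theta}$ with $\|\bm{\theta}-\bm{\theta}^{*}\|_{\infty}\leq C$, yielding
\[
\lambda_{\min,\perp}\!\left(\nabla^{2}\mathcal{L}_{\lambda}(\bm{\theta};\bm{y})\right)
\geq \lambda + \frac{1}{4\kappa e^{2C}}\,\lambda_{\min,\perp}\!\left(\bm{L}_{\mathcal{G}}\right)
\geq \lambda + \frac{1}{4\kappa e^{2C}}\cdot\frac{np}{2}
= \lambda + \frac{np}{8\kappa e^{2C}}.
\]
Since the right-hand side is independent of $\bm{\theta}$, the inequality holds simultaneously for all admissible $\bm{\theta}$, which is exactly the claim.

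There is no genuine obstacle here; the only subtlety worth flagging is the uniform-in-$\bm{\theta}$ statement, which is free because the randomness is isolated in the graph-theoretic quantity $\lambda_{\min,\perp}(\bm{L}_{\mathcal{G}})$ and the $\bm{\theta}$-dependence in Lemma \ref{lemma:strong-convexity-L} is a deterministic multiplicative factor $1/(4\kappa e^{2C})$.
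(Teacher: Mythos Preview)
Your proposal is correct and matches the paper's own approach exactly: the paper simply states that the corollary follows ``by combining Lemma \ref{lemma:strong-convexity-L} with Lemma \ref{lemma:laplacian},'' and your argument makes this combination explicit by conditioning on the high-probability Laplacian event and then applying the deterministic bound uniformly in $\bm{\theta}$.
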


\subsection{Proof outline of Theorem \ref{thm:MLE-main-general-kappa}}

This subsection outlines the main steps for establishing Theorem \ref{thm:MLE-main-general-kappa}. 

Rather than directly resorting to the optimality condition, we adopt
an algorithmic perspective to analyze the regularized MLE $\bm{\theta}$.
Specifically, we consider the standard gradient descent algorithm
that is expected to converge to the minimizer $\bm{\theta}$, and
analyze the trajectory of this iterative algorithm instead. The algorithm
is stated in Algorithm \ref{alg:Gradient-Descent}. 
\begin{algorithm}
\begin{algorithmic} \STATE \textbf{Initialize} {$\bm{\theta}^{0}=\bm{\theta}^{*}$}.
\FOR{$t=0,1,2,\ldots,T-1$} \STATE{ 
	\vspace{-0.5em}
\begin{equation}
\bm{\theta}^{t+1}=\bm{\theta}^{t}-\eta_{t}\nabla\mathcal{L}_{\lambda}\left(\bm{\theta}^{t};\bm{y}\right);\label{eq:gradient-update-original}
\end{equation}
}\ENDFOR 
\end{algorithmic}

\caption{Gradient descent for computing the regularized MLE.\label{alg:Gradient-Descent}}
\end{algorithm}

Notably, this gradient descent algorithm is not practical since the
initial point is set to be $\bm{\theta}^{*}$. Nevertheless, it is
helpful for analyzing the statistical accuracy of the regularized
MLE $\bm{\theta}$. In what follows, we shall adopt a time-invariant
step size rule: 
\begin{equation}
\eta_{t}\equiv\eta=
	\frac{1}{\lambda+np},\qquad t=0,1,2,\cdots \label{eq:eta-choice}
\end{equation}
Our proof can be divided into three steps: 
\begin{enumerate}
\item[I.] establish \textemdash{} via standard optimization theory \textemdash{}
that the output $\bm{\theta}^{T}$ of Algorithm \ref{alg:Gradient-Descent}
is sufficiently close to the regularized MLE $\bm{\theta}$, namely,
\begin{equation}
\left\Vert \bm{\theta}^{T}-\bm{\theta}\right\Vert _{\infty}\leq\left\Vert \bm{\theta}^{T}-\bm{\theta}\right\Vert _{2}\leq C_{0}\kappa^{2}\sqrt{\frac{\log n}{npL}}\label{eq:thetaT-theta-distance}
\end{equation}
for $T=n^{5}$, where $C_{0}>0$ is some absolute constant; 
\item[II.] use the leave-one-out argument to demonstrate that: the output $\bm{\theta}^{T}$
is close to the truth $\bm{\theta}^{*}$ in an entrywise fashion,
i.e. 
\[
\left\Vert \bm{\theta}^{T}-\bm{\theta}^{*}\right\Vert _{\infty}\leq C_{4}\kappa^{2}\sqrt{\frac{\log n}{npL}}
\]
for some universal constant $C_{4}>0$. Combining this with
(\ref{eq:thetaT-theta-distance}) yields 
\[
\left\Vert \bm{\theta}-\bm{\theta}^{*}\right\Vert _{\infty}\lesssim\kappa^{2}\sqrt{\frac{\log n}{npL}};
\]
\item[III.] the final step is to translate the perturbation bound on $\|\bm{\theta}-\bm{\theta}^{*}\|_{\infty}$
to $\|e^{\bm{\theta}}-e^{\bm{\theta}^{*}}\|_{\infty}$ as claimed
in the theorem. 
\end{enumerate}
Before continuing, we single out an important fact that will be used
throughout the proof. \begin{fact}\label{fact:mean} Suppose $\bm{1}^{\top}\bm{\theta}^{*}=0$. Then we have $\bm{1}^{\top}\bm{\theta}^{t}=0$
for all $t\geq0$. \end{fact}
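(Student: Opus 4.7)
The plan is to proceed by a straightforward induction on $t$. The base case $t=0$ is immediate from the initialization in Algorithm \ref{alg:Gradient-Descent}: $\bm{\theta}^{0}=\bm{\theta}^{*}$, and by hypothesis $\bm{1}^{\top}\bm{\theta}^{*}=0$.

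For the inductive step, assume $\bm{1}^{\top}\bm{\theta}^{t}=0$. Applying $\bm{1}^{\top}$ to the update rule (\ref{eq:gradient-update-original}) gives
\begin{equation*}
\bm{1}^{\top}\bm{\theta}^{t+1}=\bm{1}^{\top}\bm{\theta}^{t}-\eta\,\bm{1}^{\top}\nabla\mathcal{L}_{\lambda}\left(\bm{\theta}^{t};\bm{y}\right),
\end{equation*}
so it suffices to prove that $\bm{1}^{\top}\nabla\mathcal{L}_{\lambda}\left(\bm{\theta}^{t};\bm{y}\right)=0$. Decomposing $\nabla\mathcal{L}_{\lambda}(\bm{\theta};\bm{y})=\nabla\mathcal{L}(\bm{\theta};\bm{y})+\lambda\bm{\theta}$ and using the explicit formula (\ref{eq:gradient}),
\begin{equation*}
\bm{1}^{\top}\nabla\mathcal{L}\left(\bm{\theta}^{t};\bm{y}\right)=\sum_{(i,j)\in\mathcal{E},\,i>j}\left\{-y_{j,i}+\frac{e^{\theta_{i}^{t}}}{e^{\theta_{i}^{t}}+e^{\theta_{j}^{t}}}\right\}\bm{1}^{\top}(\bm{e}_{i}-\bm{e}_{j})=0,
\end{equation*}
since each difference $\bm{e}_{i}-\bm{e}_{j}$ is orthogonal to $\bm{1}$. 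The regularization contribution satisfies $\bm{1}^{\top}(\lambda\bm{\theta}^{t})=\lambda\,\bm{1}^{\top}\bm{\theta}^{t}=0$ by the inductive hypothesis. Combining these observations yields $\bm{1}^{\top}\nabla\mathcal{L}_{\lambda}(\bm{\theta}^{t};\bm{y})=0$, and hence $\bm{1}^{\top}\bm{\theta}^{t+1}=0$, completing the induction.

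There is no real obstacle in this argument: the likelihood gradient is a sum of vectors in the hyperplane $\{\bm{z}:\bm{1}^{\top}\bm{z}=0\}$ by construction of the BTL log-likelihood, and the quadratic penalty preserves this hyperplane precisely because it is linear in $\bm{\theta}$. Thus the entire gradient flow remains confined to the centered hyperplane, which is exactly what Fact \ref{fact:mean} asserts.
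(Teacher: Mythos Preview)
Your proof is correct and essentially identical to the paper's own argument: both proceed by induction on $t$, use the initialization $\bm{\theta}^{0}=\bm{\theta}^{*}$ for the base case, and for the inductive step split $\nabla\mathcal{L}_{\lambda}=\nabla\mathcal{L}+\lambda\bm{\theta}$, noting that $\bm{1}^{\top}(\bm{e}_{i}-\bm{e}_{j})=0$ kills the likelihood part while the inductive hypothesis kills the regularization part.
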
 
\begin{proof}See Appendix \ref{subsec:Proof-of-fact:mean}. \end{proof}
\subsection{Step I}

The first step relies heavily on optimization theory, namely the
theory of gradient descent on strongly convex and smooth functions. 
\begin{enumerate}
\item It is seen that the sequence $\left\{ \bm{\theta}^{t}\right\} _{t=1}^{\infty}$
converges geometrically fast to the regularized MLE $\bm{\theta}$,
a property that is standard in convex optimization literature. This
claim is summarized in the following lemma.\begin{lemma}\label{lemma:convergence}On
the event $\cA_{0}$ as defined in (\ref{eq:event_0}), one has 
\[
\left\Vert \bm{\theta}^{t}-\bm{\theta}\right\Vert _{2}\leq\rho^{t}\left\Vert \bm{\theta}^{0}-\bm{\theta}\right\Vert _{2},
\]
where $\rho=1-
\frac{\lambda}{\lambda+np}$.\end{lemma}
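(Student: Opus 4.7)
The claim is a standard linear convergence result for gradient descent on a strongly convex and smooth function, so the strategy is to verify the two regularity parameters and then invoke the textbook contraction argument. First, I would establish that $\mathcal{L}_\lambda(\,\cdot\,;\bm{y})$ is $\lambda$-strongly convex on all of $\mathbb{R}^n$. The unregularized Hessian in \eqref{eq:defn-Hessian} is a positive-semidefinite weighted graph Laplacian, so the ridge term $\frac{1}{2}\lambda\|\bm{\theta}\|_2^2$ contributes an extra $\lambda\bm{I}$ and yields
\[
\nabla^2\mathcal{L}_\lambda(\bm{\theta};\bm{y})\succeq \lambda\bm{I},\qquad\forall\bm{\theta}\in\mathbb{R}^n.
\]
Second, Lemma \ref{lem:smoothness-L} gives, on the event $\mathcal{A}_0$, the smoothness bound $\lambda_{\max}(\nabla^2\mathcal{L}_\lambda(\bm{\theta};\bm{y}))\leq \lambda+np$ uniformly in $\bm{\theta}$. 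Hence on $\mathcal{A}_0$, the objective is $\mu$-strongly convex and $L$-smooth with $\mu=\lambda$ and $L=\lambda+np$.

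With these two ingredients in hand, the contraction follows from the standard ``$\bm{I}-\eta\bm{H}$'' argument. Since $\bm{\theta}$ is the (unique) minimizer of $\mathcal{L}_\lambda$, the first-order optimality condition gives $\nabla\mathcal{L}_\lambda(\bm{\theta};\bm{y})=\bm{0}$. I would then write the gradient step as
\[
\bm{\theta}^{t+1}-\bm{\theta}=\bm{\theta}^{t}-\bm{\theta}-\eta\bigl(\nabla\mathcal{L}_\lambda(\bm{\theta}^{t};\bm{y})-\nabla\mathcal{L}_\lambda(\bm{\theta};\bm{y})\bigr)=\bigl(\bm{I}-\eta\bm{H}_t\bigr)\bigl(\bm{\theta}^{t}-\bm{\theta}\bigr),
\]
where $\bm{H}_t:=\int_{0}^{1}\nabla^2\mathcal{L}_\lambda\bigl(\bm{\theta}+s(\bm{\theta}^{t}-\bm{\theta});\bm{y}\bigr)\,\mathrm{d}s$ is the averaged Hessian along the segment. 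By the two bounds above, $\lambda\bm{I}\preceq\bm{H}_t\preceq(\lambda+np)\bm{I}$, so with the prescribed step size $\eta=1/(\lambda+np)$ we obtain
\[
\|\bm{I}-\eta\bm{H}_t\|\leq\max\bigl\{1-\eta\lambda,\;|1-\eta(\lambda+np)|\bigr\}=1-\tfrac{\lambda}{\lambda+np}=\rho.
\]
Taking operator norms yields $\|\bm{\theta}^{t+1}-\bm{\theta}\|_2\leq\rho\,\|\bm{\theta}^{t}-\bm{\theta}\|_2$, and iterating this inequality gives the desired bound.

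There is essentially no serious obstacle here: the lemma is a direct consequence of the standard convergence theorem for gradient descent once strong convexity and smoothness are established. The only minor point requiring comment is that the data part of the Hessian is only PSD (with $\bm{1}$ in its null space), but this is harmless since the $\ell_2$ regularization promotes the smallest eigenvalue uniformly to $\lambda$, restoring global strong convexity. The event $\mathcal{A}_0$ is used solely to invoke the smoothness estimate of Lemma \ref{lem:smoothness-L}.
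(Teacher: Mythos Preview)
Your proposal is correct and matches the paper's approach: both use the trivial $\lambda$-strong convexity coming from the ridge term (since the unregularized Hessian is PSD) together with the $(\lambda+np)$-smoothness from Lemma~\ref{lem:smoothness-L} on $\mathcal{A}_0$, and then invoke the standard gradient-descent contraction. The only difference is cosmetic: the paper cites a textbook theorem (e.g.\ \cite[Theorem 3.10]{bubeck2015convex}), whereas you write out the averaged-Hessian ``$\bm{I}-\eta\bm{H}$'' argument explicitly.
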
\begin{proof}This result directly follows from the smoothness
property (see Lemma \ref{lem:smoothness-L}), the trivial strong convexity
of $\cL_{\lambda}\left(\bm{\theta};\bm{y}\right)$ ($\nabla^{2}\cL_{\lambda}\left(\bm{\theta};\bm{y}\right)\succeq\lambda\bm{I}_{n},\;\forall\bm{\theta}$),
as well as the convergence property of the gradient descent algorithm
(e.g.~\cite[Theorem 3.10]{bubeck2015convex}). \end{proof}

A direct consequence of this convergence result and Fact \ref{fact:mean}
is that $\bm{1}^{\top}\bm{\theta}=0$ for the regularized MLE $\bm{\btheta}$.
\item We then control $\|\bm{\theta}^{0}-\bm{\theta}\|_{2}$. Recall that
$\bm{\theta}^{0}=\bm{\theta}^{*}$, and we have: 
\begin{lemma}\label{lemma:ell_2_rate}On the event $\mathcal{A}_{2}$
as defined in (\ref{eq:event_2}), there exists some constant $c_{2}>0$
such that 
\[
\|\bm{\theta}^{0}-\bm{\theta}\|_{2}=\left\Vert \bm{\theta}-\bm{\theta}^{*}\right\Vert _{2}\leq c_{2}\sqrt{n}\log\kappa.
\]
\end{lemma}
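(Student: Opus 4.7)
\medskip

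\textbf{Proof proposal for Lemma \ref{lemma:ell_2_rate}.}
The plan is to exploit the fact that although the Hessian $\nabla^{2}\mathcal{L}(\bm\theta;\bm y)$ is only positive semidefinite (it has $\bm{1}$ in its null space), the regularization term $\tfrac{1}{2}\lambda\|\bm\theta\|_2^{2}$ makes $\mathcal{L}_{\lambda}(\,\cdot\,;\bm y)$ globally $\lambda$-strongly convex on all of $\mathbb{R}^{n}$. This removes the need to first argue that $\bm\theta$ lies in the subspace orthogonal to $\bm{1}$ or in an $\ell_{\infty}$ neighborhood of $\bm\theta^{*}$ (which is what Corollary \ref{coro:strong-convexity} would require, and is something we want to derive \emph{later} rather than assume here). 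Combining global $\lambda$-strong convexity with the optimality of $\bm\theta$ and the gradient bound of Lemma \ref{lemma:grad-L} will yield the claim essentially in one line.

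Concretely, I would start from the standard strong-convexity lower bound
\begin{equation*}
\mathcal{L}_{\lambda}(\bm\theta;\bm y) \;\geq\; \mathcal{L}_{\lambda}(\bm\theta^{*};\bm y) + \bigl\langle \nabla \mathcal{L}_{\lambda}(\bm\theta^{*};\bm y),\, \bm\theta-\bm\theta^{*} \bigr\rangle + \frac{\lambda}{2}\|\bm\theta-\bm\theta^{*}\|_{2}^{2},
\end{equation*}
and then invoke the optimality $\mathcal{L}_{\lambda}(\bm\theta;\bm y) \leq \mathcal{L}_{\lambda}(\bm\theta^{*};\bm y)$ to obtain
\begin{equation*}
\frac{\lambda}{2}\|\bm\theta-\bm\theta^{*}\|_{2}^{2} \;\leq\; -\bigl\langle \nabla \mathcal{L}_{\lambda}(\bm\theta^{*};\bm y),\, \bm\theta-\bm\theta^{*} \bigr\rangle \;\leq\; \bigl\|\nabla \mathcal{L}_{\lambda}(\bm\theta^{*};\bm y)\bigr\|_{2}\,\|\bm\theta-\bm\theta^{*}\|_{2}
\end{equation*}
via Cauchy--Schwarz, and divide through to get $\|\bm\theta-\bm\theta^{*}\|_{2} \leq (2/\lambda)\|\nabla \mathcal{L}_{\lambda}(\bm\theta^{*};\bm y)\|_{2}$.

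Finally I would plug in the bound from Lemma \ref{lemma:grad-L} (which holds on $\mathcal{A}_{2}$), namely $\|\nabla\mathcal{L}_{\lambda}(\bm\theta^{*};\bm y)\|_{2} \lesssim \sqrt{n^{2}p\log n / L}$, together with the choice $\lambda = c_{\lambda}\frac{1}{\log\kappa}\sqrt{np\log n/L}$ prescribed in Theorem \ref{thm:MLE-main-general-kappa}. A direct calculation gives
\begin{equation*}
\|\bm\theta-\bm\theta^{*}\|_{2} \;\lesssim\; \frac{1}{\lambda}\sqrt{\frac{n^{2}p\log n}{L}} \;\asymp\; \log\kappa\cdot\sqrt{\frac{L}{np\log n}}\cdot\sqrt{\frac{n^{2}p\log n}{L}} \;=\; \sqrt{n}\,\log\kappa,
\end{equation*}
which is exactly the advertised bound, and identifies $c_{2}$ as an absolute constant (since $\bm\theta^{0}=\bm\theta^{*}$ by initialization). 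I do not anticipate any serious obstacle here: all the heavy lifting (controlling $\|\nabla\mathcal{L}_{\lambda}(\bm\theta^{*};\bm y)\|_{2}$ via Hoeffding-type concentration plus the regularization correction $\lambda\bm\theta^{*}$) is already packaged in Lemma \ref{lemma:grad-L}. The only tiny subtlety is to remember that using the $\lambda$-strong convexity rather than the sharper Laplacian-based strong convexity of Corollary \ref{coro:strong-convexity} is exactly the right move: we do not yet have an $\ell_{\infty}$ control on $\bm\theta-\bm\theta^{*}$, so the coarse but globally valid bound from the regularizer is what lets this crude $\ell_{2}$ estimate get off the ground, to be refined by the leave-one-out argument in Step II.
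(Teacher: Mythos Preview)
Your proposal is correct and essentially identical to the paper's own proof: the paper also expands $\mathcal{L}_{\lambda}(\bm\theta;\bm y)$ around $\bm\theta^{*}$ via the mean value theorem, uses the trivial global lower bound $\nabla^{2}\mathcal{L}_{\lambda}\succeq\lambda\bm I_{n}$ (i.e.~exactly the $\lambda$-strong convexity you invoke), combines optimality with Cauchy--Schwarz to get $\|\bm\theta-\bm\theta^{*}\|_{2}\leq\frac{2}{\lambda}\|\nabla\mathcal{L}_{\lambda}(\bm\theta^{*};\bm y)\|_{2}$, and then plugs in Lemma~\ref{lemma:grad-L} and the choice of $\lambda$. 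Your commentary about why one should \emph{not} use Corollary~\ref{coro:strong-convexity} here is also exactly in line with the paper's logic.
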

\begin{proof}See Appendix \ref{subsec:Proof-of-Lemma-Coarse-ell-2}. \end{proof} 
\item The previous two claims taken together lead us to conclude that 
\begin{align*}
\left\Vert \bm{\theta}^{T}-\bm{\theta}\right\Vert _{2} & \leq\rho^{T}\|\bm{\theta}^{0}-\bm{\theta}\|_{2}\leq\rho^{T}c_{2}\sqrt{n}\log\kappa=c_{2}\left(1-\frac{\lambda}{\lambda+np}\right)^{T}\sqrt{n}\log\kappa \\
	&\leq c_{2}\exp\left(-\frac{T\lambda}{\lambda+np}\right)\sqrt{n}\kappa^{2}  \\
 & \leq c_{2}\exp\left(-\frac{T}{c_{3}\log\kappa}\sqrt{\frac{\log n}{npL}}\right)\sqrt{n}\kappa^{2}\qquad(\text{by }\lambda\asymp\frac{1}{\log\kappa}\sqrt{\frac{np\log n}{L}}<np)\\
 & \leq c_{2}\exp\left(-\frac{T}{c_{4}\log n}\sqrt{\frac{\log n}{npL}}\right)\sqrt{n}\kappa^{2}\qquad(\text{by }\kappa^{4}\lesssim\frac{np}{\log n}\leq\frac{n}{\log n})\\
	& \leq C_{0}\kappa^{2}\sqrt{\frac{\log n}{npL}}, 
\end{align*}
for some constants $c_{3},c_{4},C_{0}>0$, $L\lesssim n^{5}$, and
sufficiently large $T$ (recall that $T=n^{5}$). The above bounds are somewhat loose, but they suffice for our purpose. We then naturally obtain
\[
\left\Vert \bm{\theta}^{T}-\bm{\theta}\right\Vert _{\infty}\leq\left\Vert \bm{\theta}^{T}-\bm{\theta}\right\Vert _{2}\leq C_{0}\kappa^{2}\sqrt{\frac{\log n}{npL}}
\]
as claimed. This finishes the first step of the proof.
\end{enumerate}

\subsection{Step II}

The purpose of this step is to show that all iterates $\left\{ \bm{\theta}^{t}\right\} _{0\leq t\leq T}$
are sufficiently close to $\bm{\theta}^{*}$ in terms of the $\ell_{\infty}$-norm
distance. To facilitate analysis, for each $1\leq m\leq n$, we
introduce a leave-one-out sequence $\left\{ \bm{\theta}^{t,\left(m\right)}\right\} $
constructed via the following update rule 
\begin{equation}
\bm{\theta}^{t+1,\left(m\right)}=\bm{\theta}^{t,\left(m\right)}-\eta\nabla\mathcal{L}_{\lambda}^{\left(m\right)}\big(\bm{\theta}^{t,\left(m\right)}\big),\label{eq:gradient-update-loo}
\end{equation}
where $\bm{\theta}^{0,\left(m\right)}=\bm{\theta}^{0}=\bm{\theta}^{*}$ and 
\begin{align}
\mathcal{L}_{\lambda}^{(m)}\left(\bm{\theta};\bm{y}\right) & :=\sum_{(i,j)\in\mathcal{E},i>j,i\neq m,j\neq m}\left\{ -y_{j,i}\left(\theta_{i}-\theta_{j}\right)+\log\big(1+e^{\theta_{i}-\theta_{j}}\big)\right\} \nonumber \\
 & \quad+\sum_{i:i\neq m}p\left\{ -\frac{e^{\theta_{i}^{*}}}{e^{\theta_{i}^{*}}+e^{\theta_{m}^{*}}}\left(\theta_{i}-\theta_{m}\right)+\log\big(1+e^{\theta_{i}-\theta_{m}}\big)\right\} +\frac{1}{2}\lambda\|\bm{\theta}\|_{2}^{2}.\label{eq:defn-L-lambda-m}
\end{align}
Here, the leave-one-out loss function $\mathcal{L}_{\lambda}^{(m)}\left(\bm{\theta};\bm{y}\right)$
replaces all log-likelihood components involving the $m$-th item
with their expected values (unconditional on $\mathcal{G}$). For any $1\leq m \leq n$, the
auxiliary sequence $\left\{ \bm{\theta}^{t,\left(m\right)}\right\} $
serves as a reasonably good proxy for $\left\{ \bm{\theta}^{t}\right\} $,
while remaining statistically independent of $\left\{ y_{i,m}\mid(i,m)\in\mathcal{E}\right\} $.

Our proof in this step is inductive in nature. For the sake of clarity,
we first list all induction hypotheses needed in our analysis:
\begin{subequations}\label{subeq:induction} 
\begin{align}
\left\Vert \bm{\theta}^{t}-\bm{\theta}^{*}\right\Vert _{2} & \leq C_{1}\kappa\sqrt{\frac{\log n}{pL}},\label{eq:induction-ell-2}\\
\max_{1\leq m\leq n}\left|\theta_{m}^{t,\left(m\right)}-\theta_{m}^{*}\right| & \leq C_{2}\kappa^{2}\sqrt{\frac{\log n}{npL}},\label{eq:induction-loo-error}\\
\max_{1\leq m\leq n}\big\Vert \bm{\theta}^{t}-\bm{\theta}^{t,\left(m\right)}\big\Vert _{2} & \leq C_{3}\kappa\sqrt{\frac{\log n}{npL}},\label{eq:induction-loo-perturbation}\\
\left\Vert \bm{\theta}^{t}-\bm{\theta}^{*}\right\Vert _{\infty} & \leq C_{4}\kappa^{2}\sqrt{\frac{\log n}{npL}},\label{eq:induction-infty}
\end{align}
\end{subequations}where $C_{1},\cdots, C_{4}>0$ are some
absolute  constants. We aim to show that if the iterates at
the $t$-th iteration \textemdash{} i.e.~$\bm{\theta}^{t}$ and $\left\{ \bm{\theta}^{t,\left(m\right)}\right\} _{1\leq m\leq n}$
\textemdash{} satisfy the induction hypotheses (\ref{subeq:induction}),
then the $\left(t+1\right)$-th iterates continue to satisfy these
hypotheses. Clearly, it suffices to justify (\ref{subeq:induction})
for all $0\leq t\leq T=n^5$.

Before we dive into the inductive arguments, there are a few direct
consequences of (\ref{subeq:induction}) that are worth listing. We
gather them in the next lemma. \begin{lemma}\label{lemma:consequence}Suppose
the induction hypotheses (\ref{subeq:induction}) hold true for the
$t$-th iteration, then there exist some universal constants $C_{5},C_{6}>0$
such that the following two bounds hold: \begin{subequations} 
\begin{align}
\max_{1\leq m\leq n}\big\Vert \bm{\theta}^{t,\left(m\right)}-\bm{\theta}^{*}\big\Vert _{\infty} & \leq C_{5}\kappa^{2}\sqrt{\frac{\log n}{npL}},\label{eq:consequence-theta-t-m-infty}\\
\max_{1\leq m\leq n}\big\Vert \bm{\theta}^{t,\left(m\right)}-\bm{\theta}^{*}\big\Vert _{2} & \leq C_{6}\kappa\sqrt{\frac{\log n}{pL}}. \label{eq:consequence-theta-t-m-ell-2}
\end{align}
\end{subequations} \end{lemma}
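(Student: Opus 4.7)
The lemma asserts two consequences of the induction hypotheses, so the plan is to derive each of them by a short triangle-inequality argument combining the already-assumed bounds. No new probabilistic reasoning or leave-one-out construction is needed here; this lemma is essentially a bookkeeping step within Step II, packaging the fact that $\bm{\theta}^{t,(m)}$ is close to $\bm{\theta}^t$ in $\ell_2$, and $\bm{\theta}^t$ is close to $\bm{\theta}^*$ in the relevant norms.

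First, I would prove the $\ell_\infty$ bound (\ref{eq:consequence-theta-t-m-infty}). Fix $m$ and consider the $i$-th coordinate of $\bm{\theta}^{t,(m)}-\bm{\theta}^*$. If $i=m$, then (\ref{eq:induction-loo-error}) directly gives $|\theta_m^{t,(m)}-\theta_m^*|\leq C_2\kappa^2\sqrt{\log n/(npL)}$. If $i\neq m$, the triangle inequality yields
\[
|\theta_i^{t,(m)}-\theta_i^*|\leq |\theta_i^{t,(m)}-\theta_i^{t}|+|\theta_i^{t}-\theta_i^*|\leq \bigl\Vert \bm{\theta}^{t,(m)}-\bm{\theta}^{t}\bigr\Vert_2+\bigl\Vert \bm{\theta}^{t}-\bm{\theta}^*\bigr\Vert_\infty,
\]
which by (\ref{eq:induction-loo-perturbation}) and (\ref{eq:induction-infty}) is at most $C_3\kappa\sqrt{\log n/(npL)}+C_4\kappa^2\sqrt{\log n/(npL)}$. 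Taking the maximum over $i$ and then over $m$, and using $\kappa\geq 1$ to absorb $C_3\kappa$ into $C_3\kappa^2$, gives (\ref{eq:consequence-theta-t-m-infty}) with $C_5=C_2+C_3+C_4$.

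Next, for the $\ell_2$ bound (\ref{eq:consequence-theta-t-m-ell-2}), another triangle inequality suffices:
\[
\bigl\Vert \bm{\theta}^{t,(m)}-\bm{\theta}^*\bigr\Vert_2\leq \bigl\Vert \bm{\theta}^{t,(m)}-\bm{\theta}^{t}\bigr\Vert_2+\bigl\Vert \bm{\theta}^{t}-\bm{\theta}^*\bigr\Vert_2.
\]
By (\ref{eq:induction-loo-perturbation}) and (\ref{eq:induction-ell-2}), the right-hand side is bounded by $C_3\kappa\sqrt{\log n/(npL)}+C_1\kappa\sqrt{\log n/(pL)}$. Since the first term is a factor $1/\sqrt{n}$ smaller than the second, taking the maximum over $m$ yields (\ref{eq:consequence-theta-t-m-ell-2}) with $C_6=C_1+C_3$.

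There is no real obstacle here: the lemma is purely a corollary of the four induction hypotheses via two applications of the triangle inequality, together with the elementary observations $\kappa\geq 1$ and $1/\sqrt{n}\leq 1$. The substantive work lies in establishing the induction hypotheses themselves (which will be the content of the subsequent subsections in Step II), not in this packaging lemma.
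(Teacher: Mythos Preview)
Your proposal is correct and follows essentially the same approach as the paper: both proofs use the triangle inequality through $\bm{\theta}^{t}$ together with $\|\cdot\|_{\infty}\leq\|\cdot\|_{2}$ to invoke hypotheses (\ref{eq:induction-loo-perturbation}), (\ref{eq:induction-infty}), and (\ref{eq:induction-ell-2}). The only cosmetic difference is that the paper does not split into the cases $i=m$ versus $i\neq m$ for the $\ell_{\infty}$ bound (the triangle inequality $\|\bm{\theta}^{t,(m)}-\bm{\theta}^{*}\|_{\infty}\leq\|\bm{\theta}^{t,(m)}-\bm{\theta}^{t}\|_{2}+\|\bm{\theta}^{t}-\bm{\theta}^{*}\|_{\infty}$ already covers the $m$-th coordinate), so the paper obtains $C_{5}\geq C_{3}+C_{4}$ without needing the extra $C_{2}$ term.
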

\begin{proof}See Appendix \ref{subsec:Proof-of-Lemma-consequence}. \end{proof}

Note that the base case (i.e.~the case for $t=0$) is trivially true
due to the same initial points, namely, $\bm{\theta}^{0,\left(m\right)}=\bm{\theta}^{0}=\bm{\theta}^{*}$
for all $1\leq m\leq n$. We start with the first induction hypothesis
(\ref{eq:induction-ell-2}), which is supplied below.

\begin{lemma}\label{lemma:ell_2_contraction}Suppose the induction
hypotheses (\ref{subeq:induction}) hold true for the $t$-th iteration,
then with probability at least $1-O\left(n^{-10}\right)$, one has
\[
\left\Vert \bm{\theta}^{t+1}-\bm{\theta}^{*}\right\Vert _{2}\leq C_{1}\kappa\sqrt{\frac{\log n}{pL}},
\]
as long as the step size obeys $0<\eta\leq
\frac{1}{\lambda+np}$ and $C_{1}>0$ is sufficiently large. \end{lemma}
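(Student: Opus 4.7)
The plan is to view one step of gradient descent as a contraction plus a noise term, apply smoothness and (restricted) strong convexity of $\mathcal{L}_\lambda$, and then solve the resulting scalar recursion. Concretely, I would write
\[
\bm{\theta}^{t+1} - \bm{\theta}^{*} = \bm{\theta}^{t} - \bm{\theta}^{*} - \eta\bigl(\nabla \mathcal{L}_\lambda(\bm{\theta}^{t}) - \nabla \mathcal{L}_\lambda(\bm{\theta}^{*})\bigr) - \eta \nabla \mathcal{L}_\lambda(\bm{\theta}^{*}),
\]
and use the fundamental theorem of calculus to rewrite the bracketed gradient difference as $\bm{H}_t(\bm{\theta}^{t}-\bm{\theta}^{*})$, where $\bm{H}_t := \int_{0}^{1}\nabla^{2}\mathcal{L}_\lambda\bigl(\bm{\theta}^{*}+\tau(\bm{\theta}^{t}-\bm{\theta}^{*})\bigr)\,\mathrm{d}\tau$. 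This gives
\[
\bm{\theta}^{t+1} - \bm{\theta}^{*} = (\bm{I}-\eta \bm{H}_t)(\bm{\theta}^{t}-\bm{\theta}^{*}) - \eta \nabla \mathcal{L}_\lambda(\bm{\theta}^{*}),
\]
so that by the triangle inequality
\[
\left\Vert \bm{\theta}^{t+1}-\bm{\theta}^{*}\right\Vert _{2} \leq \left\Vert (\bm{I}-\eta \bm{H}_t)(\bm{\theta}^{t}-\bm{\theta}^{*})\right\Vert _{2} + \eta \left\Vert \nabla \mathcal{L}_\lambda(\bm{\theta}^{*})\right\Vert _{2}.
\]

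Next I would bound the two factors controlling $\bm{I}-\eta \bm{H}_t$. By Fact~\ref{fact:mean}, $\bm{1}^{\top}\bm{\theta}^{t} = \bm{1}^{\top}\bm{\theta}^{*} = 0$, so $\bm{\theta}^{t}-\bm{\theta}^{*}\perp \bm{1}$; hence only the spectrum of $\bm{H}_t$ restricted to $\bm{1}^{\perp}$ matters. The induction hypothesis \eqref{eq:induction-infty} together with the working regime $p \gtrsim \kappa^{4}\log n/n$ yields $\left\Vert \bm{\theta}^{t}-\bm{\theta}^{*}\right\Vert _{\infty} \lesssim 1$, so every intermediate point $\bm{\theta}^{*}+\tau(\bm{\theta}^{t}-\bm{\theta}^{*})$ is within a constant $\ell_\infty$-ball of $\bm{\theta}^{*}$. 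Lemma~\ref{lem:smoothness-L} and Corollary~\ref{coro:strong-convexity} (with $C=O(1)$) then give, on the high-probability event,
\[
\lambda + \tfrac{np}{c\kappa} \;\leq\; \lambda_{\min,\perp}(\bm{H}_t) \;\leq\; \lambda_{\max}(\bm{H}_t) \;\leq\; \lambda + np
\]
for some absolute $c>0$. With the prescribed step size $\eta = 1/(\lambda+np)$, $\bm{I}-\eta \bm{H}_t$ is (on $\bm{1}^{\perp}$) positive semidefinite with operator norm at most $\rho := 1 - \eta\bigl(\lambda + np/(c\kappa)\bigr) \leq 1 - 1/(c\kappa)$ (up to constants).

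Combining this contraction with the gradient-noise bound of Lemma~\ref{lemma:grad-L}, namely $\eta\left\Vert \nabla \mathcal{L}_\lambda(\bm{\theta}^{*})\right\Vert _{2} \lesssim \frac{1}{np}\sqrt{\frac{n^{2}p\log n}{L}} = \sqrt{\frac{\log n}{pL}}$, the recursion becomes
\[
\left\Vert \bm{\theta}^{t+1}-\bm{\theta}^{*}\right\Vert _{2} \;\leq\; \Bigl(1-\tfrac{1}{c\kappa}\Bigr)\left\Vert \bm{\theta}^{t}-\bm{\theta}^{*}\right\Vert _{2} + c'\sqrt{\tfrac{\log n}{pL}}.
\]
Plugging in the induction hypothesis $\left\Vert \bm{\theta}^{t}-\bm{\theta}^{*}\right\Vert _{2} \leq C_1 \kappa \sqrt{\log n/(pL)}$, the right-hand side is at most $C_1 \kappa \sqrt{\log n/(pL)}$ provided $C_1 \geq c c'$, closing the induction.

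The main subtleties I anticipate are: (i) making sure the $\ell_\infty$ ball radius at which Corollary~\ref{coro:strong-convexity} is invoked really is $O(1)$ (this is where \eqref{eq:induction-infty} together with the assumption $p \gtrsim \kappa^{4}\log n/n$ is used to absorb the $\kappa^{2}$ factor), and (ii) being careful about the restriction to $\bm{1}^{\perp}$ when converting strong convexity of $\mathcal{L}_\lambda$ into contraction of $\bm{I}-\eta \bm{H}_t$ in operator norm; Fact~\ref{fact:mean} is the clean way to handle this. Everything else is routine optimization-theoretic bookkeeping.
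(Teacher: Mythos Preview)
Your proposal is correct and essentially identical to the paper's proof: the same decomposition via the fundamental theorem of calculus into $(\bm{I}-\eta\bm{H}_t)(\bm{\theta}^t-\bm{\theta}^*) - \eta\nabla\mathcal{L}_\lambda(\bm{\theta}^*)$, the same use of Fact~\ref{fact:mean} to restrict to $\bm{1}^\perp$, the same invocation of Lemma~\ref{lem:smoothness-L} and Corollary~\ref{coro:strong-convexity} (enabled by the $\ell_\infty$ induction hypothesis) to sandwich the Hessian, and the same closing via Lemma~\ref{lemma:grad-L}. The only cosmetic difference is that the paper pushes the $\ell_\infty$ radius down to a small $\epsilon$ (rather than your $O(1)$) to write the clean constant $1/(10\kappa)$, but your version works just as well with a different absolute constant.
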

\begin{proof}See Appendix \ref{subsec:Proof-of-Lemma-ell-2-contraction}. \end{proof}
The remaining induction steps are provided in the following lemmas.
\begin{lemma}\label{lemma:loo-m-entry-contraction}Suppose the induction
hypotheses (\ref{subeq:induction}) hold true for the $t$th iteration,
then with probability at least $1-O\left(n^{-10}\right)$, one has
\[
\max_{1\leq m\leq n}\big|\theta_{m}^{t+1,\left(m\right)}-\theta_{m}^{*}\big|\leq C_{2}\kappa^{2}\sqrt{\frac{\log n}{npL}},
\]
with the proviso that $0<\eta\leq\frac{1}{\lambda+np}$ and $C_{2}\gtrsim C_{6}+c_{\lambda}$.
\end{lemma}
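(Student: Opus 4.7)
The plan is to analyze the coordinate-wise update for $\theta_{m}^{t+1,(m)}$ by exploiting the fact that the leave-one-out loss $\cL_{\lambda}^{(m)}$ replaces every random comparison involving item $m$ with its expectation, making its $m$-th gradient at the truth purely deterministic. A direct computation from \eqref{eq:defn-L-lambda-m} gives
\[
\big[\nabla\cL_{\lambda}^{(m)}(\bm{\theta}^{*})\big]_{m}=\lambda\theta_{m}^{*},
\]
so, writing the gradient update in its $m$-th coordinate, subtracting $\theta_{m}^{*}$, and adding/subtracting this deterministic quantity yields
\[
\theta_{m}^{t+1,(m)}-\theta_{m}^{*}=(\theta_{m}^{t,(m)}-\theta_{m}^{*})-\eta\,\bm{e}_{m}^{\top}\big[\nabla\cL_{\lambda}^{(m)}(\bm{\theta}^{t,(m)})-\nabla\cL_{\lambda}^{(m)}(\bm{\theta}^{*})\big]-\eta\lambda\theta_{m}^{*}.
\]
By the fundamental theorem of calculus, the gradient difference equals $\bm{H}(\bm{\theta}^{t,(m)}-\bm{\theta}^{*})$ with $\bm{H}:=\int_{0}^{1}\nabla^{2}\cL_{\lambda}^{(m)}\big(\bm{\theta}^{*}+\tau(\bm{\theta}^{t,(m)}-\bm{\theta}^{*})\big)\,d\tau$, and splitting the $m$-th row of $\bm{H}$ into diagonal and off-diagonal parts gives the master decomposition
\[
\theta_{m}^{t+1,(m)}-\theta_{m}^{*}=(1-\eta H_{m,m})(\theta_{m}^{t,(m)}-\theta_{m}^{*})-\eta\sum_{j\neq m}H_{m,j}(\theta_{j}^{t,(m)}-\theta_{j}^{*})-\eta\lambda\theta_{m}^{*}.
\]

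What remains is to bound the three terms separately. For the diagonal, the Hessian formula gives $H_{m,m}=\lambda+\sum_{i\neq m}p\int_{0}^{1}\frac{e^{\theta_{i}(\tau)}e^{\theta_{m}(\tau)}}{(e^{\theta_{i}(\tau)}+e^{\theta_{m}(\tau)})^{2}}\,d\tau$. By Lemma \ref{lemma:consequence} the interpolant $\bm{\theta}(\tau)$ stays $O(\kappa^{2}\sqrt{\log n/(npL)})=o(1)$ away from $\bm{\theta}^{*}$, so each integrand is $\Omega(1/\kappa)$ in the spirit of Lemma \ref{lemma:strong-convexity-L}, yielding $H_{m,m}\gtrsim\lambda+np/\kappa$. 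With $\eta=1/(\lambda+np)$ this gives the contraction $0\leq 1-\eta H_{m,m}\leq 1-c/\kappa$ for an absolute constant $c>0$. For the off-diagonal, each $H_{m,j}$ has magnitude at most $p/4$, so Cauchy--Schwarz together with \eqref{eq:consequence-theta-t-m-ell-2} yields
\[
\eta\bigg|\sum_{j\neq m}H_{m,j}(\theta_{j}^{t,(m)}-\theta_{j}^{*})\bigg|\leq\eta\cdot\tfrac{\sqrt{n}\,p}{4}\cdot C_{6}\kappa\sqrt{\tfrac{\log n}{pL}}\lesssim C_{6}\kappa\sqrt{\tfrac{\log n}{npL}}.
\]
Finally, $\eta\lambda|\theta_{m}^{*}|\leq\eta\lambda\log\kappa\lesssim c_{\lambda}\sqrt{\log n/(npL)}$ by the choices of $\eta$ and $\lambda$.

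Combining these bounds with the induction hypothesis $|\theta_{m}^{t,(m)}-\theta_{m}^{*}|\leq C_{2}\kappa^{2}\sqrt{\log n/(npL)}$ yields
\[
|\theta_{m}^{t+1,(m)}-\theta_{m}^{*}|\leq\Big[(1-c/\kappa)\,C_{2}\kappa^{2}+O(C_{6}\kappa)+O(c_{\lambda})\Big]\sqrt{\tfrac{\log n}{npL}},
\]
and closing the induction at $C_{2}\kappa^{2}$ requires $cC_{2}\kappa\geq O(C_{6}\kappa)+O(c_{\lambda})$, which, since $\kappa\geq 1$, is exactly the assumption $C_{2}\gtrsim C_{6}+c_{\lambda}$. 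A union bound over $m$ then gives the conclusion with probability $1-O(n^{-10})$. The main obstacle in the accounting is that the $\ell_{2}$ error on the leave-one-out iterate propagates at scale $\kappa$ rather than $\kappa^{2}$, so the contraction must shrink the $\kappa^{2}$-sized quantity by exactly $1/\kappa$ (not $1/\kappa^{2}$) to absorb the $O(\kappa)$ cross term; this is precisely the reason a sharp $1/\kappa$ lower bound on the smallest restricted eigenvalue of $\nabla^{2}\cL_{\lambda}^{(m)}$ (in the spirit of Lemma \ref{lemma:strong-convexity-L}) is the essential input.
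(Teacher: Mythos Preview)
Your proof is correct and follows essentially the same route as the paper: the paper also isolates the $m$-th coordinate of the leave-one-out update, uses the (pointwise) mean value theorem on each sigmoid difference to get the same decomposition into a contraction term, a Cauchy--Schwarz-controlled cross term via the $\ell_2$ bound \eqref{eq:consequence-theta-t-m-ell-2}, and the deterministic $\eta\lambda\theta_m^*$ term. Two minor notes: the paper keeps $\eta$ generic so the factor $\eta np$ visibly cancels when closing the induction (you implicitly set $\eta=1/(\lambda+np)$), and no union bound is actually required here since, given the induction hypotheses, the computation for each $m$ is deterministic.
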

\begin{proof}See Appendix \ref{subsec:Proof-of-Lemma-loo-m-entry-contraction}.\end{proof}

\begin{lemma}\label{lemma:loo-perturbation-contraction}Suppose the
induction hypotheses (\ref{subeq:induction}) hold true for the $t$-th
iteration, then with probability at least $1-O\left(n^{-10}\right)$,
one has 
\[
\max_{1\leq m\leq n}\big\Vert \bm{\theta}^{t+1}-\bm{\theta}^{t+1,\left(m\right)}\big\Vert _{2}\leq C_{3}\kappa\sqrt{\frac{\log n}{npL}},
\]
as long as the step size obeys $0<\eta\leq
\frac{1}{\lambda+np}$ and $C_{3}>0$ is sufficiently large. \end{lemma}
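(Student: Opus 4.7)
The plan is to analyze how the leave-one-out gap evolves under one gradient step. Starting from the update rules for $\bm{\theta}^{t+1}$ and $\bm{\theta}^{t+1,(m)}$, I write
\begin{align*}
\bm{\theta}^{t+1}-\bm{\theta}^{t+1,(m)} = \left(\bm{\theta}^{t}-\bm{\theta}^{t,(m)}\right) - \eta\Big[\nabla\mathcal{L}_{\lambda}(\bm{\theta}^{t};\bm{y}) - \nabla\mathcal{L}_{\lambda}^{(m)}(\bm{\theta}^{t,(m)};\bm{y})\Big]
\end{align*}
and split the bracketed gradient difference as
\begin{align*}
\underbrace{\nabla\mathcal{L}_{\lambda}(\bm{\theta}^{t}) - \nabla\mathcal{L}_{\lambda}(\bm{\theta}^{t,(m)})}_{\text{contraction part}} \;+\; \underbrace{\nabla\mathcal{L}_{\lambda}(\bm{\theta}^{t,(m)}) - \nabla\mathcal{L}_{\lambda}^{(m)}(\bm{\theta}^{t,(m)})}_{\text{leave-one-out perturbation}}.
\end{align*}
The first term, via the integral form of the mean-value theorem, equals $\bm{H}\,(\bm{\theta}^{t}-\bm{\theta}^{t,(m)})$ for some averaged Hessian $\bm{H}=\int_{0}^{1}\nabla^{2}\mathcal{L}_{\lambda}(\bm{\theta}^{t,(m)}+\tau(\bm{\theta}^{t}-\bm{\theta}^{t,(m)}))\,d\tau$. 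Because all gradient updates involve only directions $\bm{e}_{i}-\bm{e}_{j}$ and $\lambda\bm{\theta}$, a routine check shows $\bm{1}^{\top}\bm{\theta}^{t,(m)}=0$ in analogy with Fact~\ref{fact:mean}, so the leave-one-out gap lies in the subspace orthogonal to $\bm{1}$.

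For the contraction part I would combine the induction hypothesis \eqref{eq:induction-infty} and Lemma~\ref{lemma:consequence} (equation \eqref{eq:consequence-theta-t-m-infty}) to ensure that all points along the segment lie within an $\ell_\infty$-ball of constant radius around $\bm{\theta}^{*}$. Corollary~\ref{coro:strong-convexity} then gives $\lambda_{\min,\perp}(\bm{H})\gtrsim\lambda+np/\kappa$, while Lemma~\ref{lem:smoothness-L} yields $\|\bm{H}\|\leq\lambda+np$. With the step size $\eta=1/(\lambda+np)$, this provides a contraction of rate $1-\Omega(1/\kappa)$ on the subspace orthogonal to $\bm{1}$, i.e.,
\begin{align*}
\left\|(\bm{I}-\eta\bm{H})(\bm{\theta}^{t}-\bm{\theta}^{t,(m)})\right\|_{2}\leq \left(1-\tfrac{c}{\kappa}\right)\left\|\bm{\theta}^{t}-\bm{\theta}^{t,(m)}\right\|_{2}
\end{align*}
for some absolute $c>0$.

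For the leave-one-out perturbation, the crucial observation is that $\nabla\mathcal{L}_{\lambda}(\bm{\theta})-\nabla\mathcal{L}_{\lambda}^{(m)}(\bm{\theta})$ is supported entirely on edges incident to vertex $m$: schematically it is a sum over $i\neq m$ of $[\ind_{(i,m)\in\mathcal{E}}(y_{i,m}-y_{i,m}^{*}) + (\ind_{(i,m)\in\mathcal{E}}-p)(y_{i,m}^{*}-\widetilde{y}_{i,m}(\bm{\theta}))](\bm{e}_{m}-\bm{e}_{i})$, where $\widetilde{y}_{i,m}(\bm{\theta})=e^{\theta_{m}}/(e^{\theta_{i}}+e^{\theta_{m}})$. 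Evaluated at $\bm{\theta}^{t,(m)}$, this vector depends on $\{\ind_{(i,m)\in\mathcal{E}}\}$ and $\{y_{i,m}\}$, both of which are independent of $\bm{\theta}^{t,(m)}$ by construction. Conditioning on $\bm{\theta}^{t,(m)}$ and applying a Bernstein inequality to the zero-mean randomness in the edges and comparisons --- together with the Lipschitz bound $|y_{i,m}^{*}-\widetilde{y}_{i,m}(\bm{\theta}^{t,(m)})|\lesssim\|\bm{\theta}^{t,(m)}-\bm{\theta}^{*}\|_{\infty}$ from \eqref{eq:consequence-theta-t-m-infty} --- should deliver an $\ell_{2}$ bound of order $\sqrt{np\log n/L}+\sqrt{np}\,\|\bm{\theta}^{t,(m)}-\bm{\theta}^{*}\|_{\infty}$, which under the induction hypotheses is $O(\sqrt{np\log n/L})$.

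Combining the two estimates gives the recursion
\begin{align*}
\left\|\bm{\theta}^{t+1}-\bm{\theta}^{t+1,(m)}\right\|_{2}\leq\left(1-\tfrac{c}{\kappa}\right)\left\|\bm{\theta}^{t}-\bm{\theta}^{t,(m)}\right\|_{2} + \eta\cdot O\!\left(\sqrt{\tfrac{np\log n}{L}}\right),
\end{align*}
whose fixed point $\lesssim \kappa\cdot\eta\sqrt{np\log n/L}\asymp\kappa\sqrt{\log n/(npL)}$ matches the target bound, so an inductive argument closes. The main obstacle is producing the leave-one-out perturbation bound with the right dependence on $\kappa$: any slack in the Bernstein step (say a loose bound on the gradient variance, or failing to exploit the independence between $\bm{\theta}^{t,(m)}$ and the edges at $m$) would inflate the fixed point and break the induction, so the proof must carefully keep the noise term at $O(\sqrt{np\log n/L})$ rather than a larger multiple of $\kappa$.
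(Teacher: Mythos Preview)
Your proposal is correct and follows essentially the same route as the paper: the same contraction/perturbation split, the same use of the averaged Hessian together with Lemma~\ref{lem:smoothness-L} and Corollary~\ref{coro:strong-convexity} on the subspace $\bm{1}^{\perp}$, and the same decomposition of the leave-one-out gradient difference into a comparison-noise piece ($\bm{u}^{m}$ in the paper) and an edge-indicator piece ($\bm{v}^{m}$), each handled via Hoeffding/Bernstein using the independence of $\bm{\theta}^{t,(m)}$ from the data at node $m$. The only cosmetic discrepancy is that the Bernstein bound on the $m$-th coordinate of the edge-indicator term actually gives $\sqrt{np\log n}\,\|\bm{\theta}^{t,(m)}-\bm{\theta}^{*}\|_{\infty}$ rather than $\sqrt{np}\,\|\bm{\theta}^{t,(m)}-\bm{\theta}^{*}\|_{\infty}$, but this does not change the conclusion.
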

\begin{proof}See Appendix \ref{subsec:Proof-of-Lemma-loo-perturbation-contraction}. \end{proof}

\begin{lemma}\label{lemma:error-t}
Suppose the induction hypotheses (\ref{subeq:induction})
hold true for the $t$-th iteration, then with probability at least
$1-O\left(n^{-10}\right)$, one has 
\[
\left\Vert \bm{\theta}^{t+1}-\bm{\theta}^{*}\right\Vert _{\infty}\leq C_{4}\kappa^{2}\sqrt{\frac{\log n}{npL}}
\]
for any $C_{4}\geq C_{3}+C_{2}$. \end{lemma}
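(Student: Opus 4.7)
The plan is to exploit the decoupling provided by the leave-one-out sequence $\{\bm{\theta}^{t+1,(m)}\}_{1\leq m\leq n}$. The key observation is that each coordinate $\theta_m^{t+1}$ can be compared to $\theta_m^*$ by routing through the $m$-th coordinate of the $m$-th leave-one-out iterate. Concretely, for every $1\leq m\leq n$ I would apply the triangle inequality
\[
\left|\theta_{m}^{t+1}-\theta_{m}^{*}\right|\;\leq\;\left|\theta_{m}^{t+1}-\theta_{m}^{t+1,(m)}\right|\;+\;\left|\theta_{m}^{t+1,(m)}-\theta_{m}^{*}\right|.
\]

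For the first term, I would invoke Lemma~\ref{lemma:loo-perturbation-contraction}, which has already been established under the induction hypotheses (\ref{subeq:induction}): it gives $\|\bm{\theta}^{t+1}-\bm{\theta}^{t+1,(m)}\|_{2}\leq C_{3}\kappa\sqrt{\log n/(npL)}$, and since the $m$-th coordinate of this difference is dominated by its $\ell_{2}$ norm, we obtain $|\theta_{m}^{t+1}-\theta_{m}^{t+1,(m)}|\leq C_{3}\kappa\sqrt{\log n/(npL)}$. For the second term, Lemma~\ref{lemma:loo-m-entry-contraction}, again under the same induction hypotheses, yields $|\theta_{m}^{t+1,(m)}-\theta_{m}^{*}|\leq C_{2}\kappa^{2}\sqrt{\log n/(npL)}$. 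This is precisely the payoff of the leave-one-out construction: because $\bm{\theta}^{t+1,(m)}$ is independent of the comparisons involving item $m$, its $m$-th entry can be controlled much more sharply than a generic coordinate of $\bm{\theta}^{t+1}$.

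Combining the two displays and using $\kappa\geq 1$, so that $C_{3}\kappa\leq C_{3}\kappa^{2}$, gives
\[
\left|\theta_{m}^{t+1}-\theta_{m}^{*}\right|\;\leq\;(C_{2}+C_{3})\,\kappa^{2}\sqrt{\frac{\log n}{npL}}\;\leq\;C_{4}\,\kappa^{2}\sqrt{\frac{\log n}{npL}}
\]
for any $C_{4}\geq C_{2}+C_{3}$. Taking the maximum over $1\leq m\leq n$ closes the induction hypothesis (\ref{eq:induction-infty}) at step $t+1$. The failure probability is controlled by a union bound over the $O(n^{-10})$ events underlying the two invoked lemmas, which remains $O(n^{-10})$.

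There is no substantive obstacle here: all the real work has been done in Lemmas~\ref{lemma:loo-m-entry-contraction} and~\ref{lemma:loo-perturbation-contraction}, where the contraction of $\ell_{2}$ error and the near-independence between $\bm{\theta}^{t+1,(m)}$ and the $m$-th item's data are exploited. The present lemma is the clean ``bridge'' step that converts those two bounds into an entrywise guarantee, and the only thing to verify beyond the triangle inequality is that the constants match, i.e.\ that $C_{4}\geq C_{2}+C_{3}$ suffices, which is immediate from the algebra above.
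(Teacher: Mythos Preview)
Your proof is correct and essentially identical to the paper's own argument: the paper also routes through the leave-one-out iterate via the triangle inequality $|\theta_{m}^{t+1}-\theta_{m}^{*}|\leq\|\bm{\theta}^{t+1}-\bm{\theta}^{t+1,(m)}\|_{2}+|\theta_{m}^{t+1,(m)}-\theta_{m}^{*}|$, invokes Lemmas~\ref{lemma:loo-perturbation-contraction} and~\ref{lemma:loo-m-entry-contraction} for the two pieces, and concludes with $C_{4}\geq C_{2}+C_{3}$.
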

\begin{proof}See Appendix \ref{subsec:Proof-of-lemma:error-t}\end{proof}

Taking
the union bound over $T=n^{5}$ iterations yields that with probability
at least $1-O\left(n^{-5}\right)$, 
\[
\left\Vert \bm{\theta}^{T}-\bm{\theta}^{*}\right\Vert _{\infty}\leq C_{4}\kappa^{2}\sqrt{\frac{\log n}{npL}},
\]
which together with the conclusion in Step I results in 
\begin{equation}
\left\Vert \bm{\theta}-\bm{\theta}^{*}\right\Vert _{\infty}\leq\left\Vert \bm{\theta}^{T}-\bm{\theta}^{*}\right\Vert _{\infty}+\left\Vert \bm{\theta}^{T}-\bm{\theta}\right\Vert _{\infty}\leq\left(C_{0}+C_{4}\right)\kappa^{2}\sqrt{\frac{\log n}{npL}}.\label{eq:theta-infty-bound}
\end{equation}

\subsection{Step III}

It remains to show that 
\[
\frac{\left\Vert e^{\bm{\theta}}-e^{\bm{\theta}^{*}}\right\Vert _{\infty}}{\left\Vert e^{\bm{\theta^{*}}}\right\Vert _{\infty}}\lesssim\kappa^{2}\sqrt{\frac{\log n}{npL}}.
\]
Toward this end, we observe that for each $1\leq m\leq n$, 
\[
\frac{\big|e^{\theta_{m}}-e^{\theta_{m}^{*}}\big|}{e^{\theta_{\max}}}=\frac{\big|e^{\tilde{\theta}_{m}}\left(\theta_{m}-\theta_{m}^{*}\right)\big|}{e^{\theta_{\max}}}\leq\frac{e^{\theta_{\max}+\left\Vert \bm{\theta}-\bm{\theta}^{*}\right\Vert _{\infty}}\cdot\big|\theta_{m}-\theta_{m}^{*}\big|}{e^{\theta_{\max}}},
\]
where $\tilde{\theta}_{m}$ is between $\theta_{m}$ and $\theta_{m}^{*}$,
and $\theta_{\max}$ is the largest entry of $\theta^{*}$. Continuing
the derivation and using (\ref{eq:theta-infty-bound}), we arrive
at 
\[
\max_{1\leq m\leq n}\frac{\left|e^{\theta_{m}}-e^{\theta_{m}^{*}}\right|}{e^{\theta_{\max}}}\leq\frac{e^{\theta_{\max}+\left\Vert \bm{\theta}-\bm{\theta}^{*}\right\Vert _{\infty}}}{e^{\theta_{\max}}}\left\Vert \bm{\theta}-\bm{\theta}^{*}\right\Vert _{\infty}\lesssim\kappa^{2}\sqrt{\frac{\log n}{npL}}
\]
as long as $\kappa^{2}\sqrt{\frac{\log n}{npL}}$ is small enough. This completes the proof of Theorem~\ref{thm:MLE-main-general-kappa}.

\section*{Acknowledgements}

 Y.~Chen is supported in part by the grant  ARO W911NF-18-1-0303 and by the Princeton SEAS innovation award. J.~Fan is supported in part by NSF grants DMS-1662139 and DMS-1712591 and NIH grant 2R01-GM072611-13.

\appendix
\section{Proof of Theorem \ref{thm:new-lower-bound}\label{sec:Proof-of-Theorem-new-lower-bound}}

As usual, suppose the truth has preference scores $w_{1}^*,\cdots,w_{n}^*$.
To establish the lower bound, we construct another slightly perturbed
scenario where the score of the $i$th ranked item is $\tilde{w}_{i}$
as defined by 
\begin{equation}
\tilde{w}_{i}=\begin{cases}
w_{i}^*,\qquad & \text{if }i\neq K\text{ and }i\neq K+1,\\
w_{K+1}^*, & \text{if }i=K,\\
w_{K}^*, & \text{if }i=K+1.
\end{cases}\label{eq:def-wtilde}
\end{equation}
In words, $\tilde{\bm{w}}$ is obtained by swapping the scores of
the $K$th and the $(K+1)$th items in $\bm{w}^*$.  Clearly, these two score vectors
share the same generalized separation measure $\Delta_{K}^{*}$, although
the top-$K$ items in these two scenarios are not identical. It thus
suffices to bound the probability of error in distinguishing these
two score vectors given the data.

In the sequel, we denote by $\mathbb{P}_{0}$
and $\mathbb{P}_{1}$ the probability measures under the scores $\{w_{i}^*\}$
and $\{\tilde{w}_{i}\}$, respectively, and let $P_{\mathrm{e}}(\psi)$ represent the probability of error in distinguishing  $\mathbb{P}_{0}$ and $\mathbb{P}_{1}$ using a procedure $\psi$. 
In view of \cite[Theorem 2.2]{tsybakov2009introduction}, if 
\begin{equation}
\mathsf{TV}(\mathbb{P}_{0},\mathbb{P}_{1})\leq\epsilon\label{eq:TV-ub}
\end{equation}
for some fixed constant $0<\epsilon<1$, then  
\begin{equation}
\inf_{\psi}P_{\mathrm{e}}(\psi)\geq\frac{1-\epsilon}{2} . \label{eq:error-LB-1}
\end{equation}
Here,
$\mathsf{TV}(\mathbb{P}_{0},\mathbb{P}_{1})$ represents the total
variation distance between $\mathbb{P}_{0}$ and $\mathbb{P}_{1}$.

The next step then boils down to characterizing $\mathsf{TV}(\mathbb{P}_{0},\mathbb{P}_{1})$.
To this end, denoting by $\mathbb{P}_{0}^{i,j}$ (resp.~$\mathbb{P}_{1}^{i,j}$)
the distribution of the samples comparing items $i$ and $j$  under $\mathbb{P}_{0}$
(resp.~$\mathbb{P}_{1}$), we obtain that 
\begin{align}
\mathsf{TV}(\mathbb{P}_{0},\mathbb{P}_{1}) & =\mathsf{TV}\Big(\otimes_{i>j}\mathbb{P}_{0}^{i,j},\otimes_{i>j}\mathbb{P}_{1}^{i,j}\Big)\nonumber \\
 & \leq\mathsf{TV}\left(\otimes_{i:i\neq K}\mathbb{P}_{0}^{i,K},\otimes_{i:i\neq K}\mathbb{P}_{1}^{i,K}\right)+\mathsf{TV}\left(\otimes_{i:i\neq K+1}\mathbb{P}_{0}^{i,K+1},\otimes_{i:i\neq K+1}\mathbb{P}_{1}^{i,K+1}\right)\label{eq:TV-UB-2terms}\\
 & \leq\sqrt{\mathsf{KL}\left(\otimes_{i:i\neq K}\mathbb{P}_{0}^{i,K}\hspace{0.2em}\|\hspace{0.2em}\otimes_{i:i\neq K}\mathbb{P}_{1}^{i,K}\right)/2} \\
 & \qquad +\sqrt{\mathsf{KL}\left(\otimes_{i:i\neq K+1}\mathbb{P}_{1}^{i,K+1}\hspace{0.2em}\|\hspace{0.2em}\otimes_{i:i\neq K+1}\mathbb{P}_{0}^{i,K+1}\right)/2},\label{eq:KL-UB-2terms}
\end{align}
where $\mathsf{KL}(P \hspace{0.2em}\|\hspace{0.2em} Q)$ is the KL divergence from $Q$ to $P$. 
Here, (\ref{eq:TV-UB-2terms}) arises from two facts: (i) $\mathbb{P}_{0}$
and $\mathbb{P}_{1}$ differ only over locations within $\{(i,K)\mid i\neq K\}$
and $\{(i,K+1)\mid i\neq K+1\}$, and (ii) for any product measure
one has 
\[
\mathsf{TV}(P_{1}\otimes Q_{1},P_{2}\otimes Q_{2})\leq\mathsf{TV}(P_{1},P_{2})+\mathsf{TV}(Q_{1},Q_{2}).
\]
Additionally, the inequality (\ref{eq:KL-UB-2terms}) comes from Pinsker's
inequality \cite[Lemma 2.5]{tsybakov2009introduction}.

We then look at each term of (\ref{eq:KL-UB-2terms}) separately.
To begin with, repeating the analysis in \cite[Appendix B]{chen2015spectral}, we
can demonstrate (using the independence assumption) that 
\begin{align*}
	& \mathsf{KL}\left(\otimes_{i:i\neq K}\mathbb{P}_{0}^{i,K}\hspace{0.2em}\|\hspace{0.2em}\otimes_{i:i\neq K}\mathbb{P}_{1}^{i,K}\right)  
	=pL\sum_{i:i\neq K}\mathsf{KL}\left(\mathbb{P}_{0}(y_{i,K}^{(1)})\hspace{0.2em}\|\hspace{0.2em}\mathbb{P}_{1}(y_{i,K}^{(1)})\right)\\
 & \quad =pL\sum_{i:i\neq K}\mathsf{KL}\left(\mathsf{Bern}\left(\frac{w_{i}^*}{w_{i}^*+w_{K}^*}\right)\hspace{0.2em}\|\hspace{0.2em}\mathsf{Bern}\left(\frac{w_{i}^*}{w_{i}^*+w_{K+1}^*}\right)\right),
\end{align*}
where $\mathsf{Bern}(p)$ denotes the Bernoulli distribution with mean
$p$. Upper bounding the KL divergence via $\chi^{2}$ divergence
(see \cite[Lemma 2.7]{tsybakov2009introduction}), namely, 
\begin{align*}
	\mathsf{KL}\left(\mathsf{Bern}\left(p\right)\hspace{0.2em}\|\hspace{0.2em}\mathsf{Bern}\left(q\right)\right) &\leq\chi^{2}\left(\mathsf{Bern}\left(p\right)\hspace{0.2em}\|\hspace{0.2em}\mathsf{Bern}\left(q\right)\right) \\
&=\frac{(p-q)^{2}}{q}+\frac{(p-q)^{2}}{1-q}=\frac{(p-q)^{2}}{q(1-q)},
\end{align*}
we arrive at 
\begin{align*}
\mathsf{KL}\left(\otimes_{i:i\neq K}\mathbb{P}_{0}^{i,K}\hspace{0.2em}\|\hspace{0.2em}\otimes_{i:i\neq K}\mathbb{P}_{1}^{i,K}\right) & \leq pL\sum_{i=1}^{n}\frac{\left(\frac{w_{i}^*}{w_{i}^*+w_{K+1}^*}-\frac{w_{i}^*}{w_{i}^*+w_{K}^*}\right)^{2}}{\frac{w_{i}^*}{w_{i}^*+w_{K+1}^*}\frac{w_{K+1}^*}{w_{i}^*+w_{K+1}^*}}\\
 & =pL\frac{\left(w_{K}^*-w_{K+1}^*\right)^{2}}{w_{K+1}^*}\sum_{i=1}^{n}\frac{w_{i}^*}{\left(w_{i}^*+w_{K}^*\right)^{2}}.
\end{align*}
Similarly, one can derive 
\begin{align*}
\mathsf{KL}\left(\otimes_{i:i\neq K+1}\mathbb{P}_{1}^{i,K+1}\hspace{0.2em}\|\hspace{0.2em}\otimes_{i:i\neq K+1}\mathbb{P}_{0}^{i,K+1}\right) & \leq pL\frac{(w_{K}^*-w_{K+1}^*)^{2}}{w_{K+1}}\sum_{i=1}^{n}\frac{w_{i}^*}{\left(w_{i}^*+w_{K}^*\right)^{2}}.
\end{align*}
Put together the preceding bounds to reach 
\[
\mathsf{TV}(\mathbb{P}_{0},\mathbb{P}_{1})\leq\sqrt{2pL\frac{(w_{K}^*-w_{K+1}^*)^{2}}{w_{K+1}^*}\sum_{i=1}^{n}\frac{w_{i}^*}{\left(w_{i}^*+w_{K}^*\right)^{2}}}=\sqrt{2pLn}\Delta_{K}^{*}.
\]
As a consequence, if 
\[
2npL\Delta_{K}^{*2}\leq\epsilon^{2},
\]
one necessarily has $\mathsf{TV}(\mathbb{P}_{0},\mathbb{P}_{1})\leq\epsilon$,
which combined with (\ref{eq:error-LB-1}) yields $\inf_{\psi}P_{\mathrm{e}}(\psi)\geq\frac{1-\epsilon}{2}$
as claimed.

\section{Examples for the general $\kappa$ setting \label{sec:appendix_example}}

Recall that in Section \ref{sec:Extension}, we introduce a new metric $\Delta_{K}^{*}$. 
The following three examples shed some light on the potential effectiveness
of $\Delta_{K}^{*}$ as a fundamental information measure. 
\begin{itemize}
\itemsep0.5em
\item \emph{Case 1: }$\kappa=O\left(1\right)$. Under this circumstance,
it is easy to verify that
\[
\Delta_{K}^{*}\asymp\Delta_{K},
\]
and hence all our preceding results for spectral method and regularized MLE for $\kappa=O(1)$ continue to
hold with $\Delta_{K}$ replaced by $\Delta_{K}^{*}$. In this case, the new lower bound for sample complexity is slightly worse than the previous one (Theorem \ref{thm:lower-bound} in the main text) by a factor of $\log n$.
\item \emph{Case 2:} Suppose there are 100 items with $w_{1}^{*}=\cdots=w_{5}^{*}=10$,
$w_{6}^{*}=\cdots=w_{99}=5$ and $w_{100}^{*}=10^{-6}$. Our goal is to
find the top-5 ranked items. Intuitively, the presence of the 100th item should
not affect the hardness of top-5 ranking by much. This intuition
is well captured by our new metric in (\ref{eq:new-hardness-measure}) in the main text.
Observe that
\begin{align*}
\left(\Delta_K^{*}\right)^{2}
=\frac{(w_5^{*}-w_6^*)^2}{w_{6}^{*2}} \frac{w_{6}^{*}}{w_{5}^{*}}\left(\frac{1}{100}\sum_{i=1}^{99}\frac{w_{i}^{*}/w_{5}^{*}}{\left(1+w_{i}^{*}/w_{5}^{*}\right)^{2}}+\frac{1}{100}\frac{w_{100}^{*}/w_{5}^{*}}{\left(1+w_{100}^{*}/w_{5}^{*}\right)^{2}}\right).
\end{align*}
Since $w_{100}^{*}/w_{5}^{*}$ is exceedingly small ($10^{-7}$ in this example),
it is easily seen that $\frac{w_{100}^{*}/w_{5}^{*}}{(1+w_{100}^{*}/w_{5}^{*})^{2}}\lesssim w_{100}^{*}/w_{5}^{*}$
is also extremely small, and hence\emph{ $\left(\Delta_{K}^{*}\right)^{2}$ }is not
changed by much compared with the case when the 100th item is absent ($\left(\Delta_K^{*}\right)^{2}\approx 0.1107$ (resp. 0.1118) 
for the case when the $100$th item is present (resp. absent)).
Similarly, consider the case when $w_{1}^{*}=10^{6}$, $w_{2}^{*}=\cdots =w_{5}^{*}=1$
and $w_{6}^{*}=\cdots =w_{100}^{*}=0.5$. As one can see,  adding the first item
will have little influence upon $\Delta_{K}^{*}$.
\item \emph{Case 3:} Consider finding the top-5 items out of 100 items
with $w_{1}^{*}=\cdots=w_{5}^{*}=10$, $w_{6}^{*}=\cdots=w_{10}^{*}=5$ and $w_{11}^{*}=\cdots w_{100}^{*}=10^{-6}$.
The sample complexity needed for exact top-5 recovery will surely
increase since the comparisons between $\left\{ 1,\cdots,10\right\} $
and $\left\{ 11,\cdots100\right\} $ are, with high probability, not useful  in determining
the relative strength within the group $\left\{ 1,\cdots,10\right\} $. This
is also reflected in the  generalized separation measure
$\Delta_{K}^{*}$. Recall that we have
\[
	\left(\Delta_K^{*}\right)^{2} =\frac{(w_5^{*}-w_6^*)^2}{w_{6}^{*2}} \frac{w_{6}^{*}}{w_{5}^{*}}\left(\frac{1}{100}\sum_{i=1}^{10}\frac{w_{i}^{*}/w_{5}^{*}}{\left(1+w_{i}^{*}/w_{5}^{*}\right)^{2}}+\frac{1}{100}\sum_{i=11}^{100}\frac{w_{i}^{*}/w_{5}^{*}}{\left(1+w_{i}^{*}/w_{5}^{*}\right)^{2}}\right).
\]
For each $11\leq i\leq100$, $\frac{w_{i}^{*}/w_{5}^{*}}{\left(1+w_{i}^{*}/w_{5}^{*}\right)^{2}}$ is exceedingly small.
This makes $\left(\Delta_{K}^{*}\right)^{2}$ much smaller compared with the case
when only $\left\{ 1,\cdots10\right\} $ are present ($\left(\Delta_K^{*}\right)^{2}\approx 0.0118$ (resp. 0.1181) 
for the case when items $\{11,\cdots,100\}$ are present (resp. absent)). As a result,
the required sample size increases accordingly.
\end{itemize}

\section{Proofs in Section \ref{sec:Analysis-for-spectral}}

This section collects proofs of the theorems and lemmas that appear
in Section \ref{sec:Analysis-for-spectral}. 

Before moving on, we note that by Lemma \ref{lemma:degree} in the main text,
the event
\[
\cA_{0}=\left\{ \frac{1}{2}np\leq d_{\min}\leq d_{\max}\leq\frac{3}{2}np\right\}
\]
happens with probability at least $1-O(n^{-10})$. Throughout this
section, we shall assume that we are on this event without explicitly
referring to it each time. An immediate consequence is that $d_{\max}\leq d$
on this event.

\subsection{Proof of Theorem \ref{lemma:mc-perturbation}\label{sec:Proof-of-Lemma-mc-perturbation}}

To begin with, we write
\begin{align}
\bm{\pi}^{\top}-\hat{\bm{\pi}}^{\top} & =\bm{\pi}^{\top}\bm{P}-\hat{\bm{\pi}}^{\top}\hat{\bm{P}}=\bm{\pi}^{\top}(\bm{P}-\hat{\bm{P}})+\left(\bm{\pi}-\hat{\bm{\pi}}\right)^{\top}\hat{\bm{P}}.\label{eq:pi-pih-1}
\end{align}
The last term of the above identity can be further decomposed as
\begin{align}
\left(\bm{\pi}-\hat{\bm{\pi}}\right)^{\top}\hat{\bm{P}} & =\left(\bm{\pi}-\hat{\bm{\pi}}\right)^{\top}\bm{P}^{*}+\left(\bm{\pi}-\hat{\bm{\pi}}\right)^{\top}(\hat{\bm{P}}-\bm{P}^{*})\nonumber \\
 & =\left(\bm{\pi}-\hat{\bm{\pi}}\right)^{\top}\left(\bm{P}^{*}-\bm{1}\bm{\pi}^{*\top}\right)+\left(\bm{\pi}-\hat{\bm{\pi}}\right)^{\top}(\hat{\bm{P}}-\bm{P}^{*}),\label{eq:pi-pih-2}
\end{align}
where we have used the fact that $\left(\bm{\pi}-\hat{\bm{\pi}}\right)^{\top}\bm{1}\bm{\pi}^{*\top}=\bm{0}$.
Combining (\ref{eq:pi-pih-1}) and (\ref{eq:pi-pih-2}) we get
\begin{align*}
\bm{\pi}^{\top}-\hat{\bm{\pi}}^{\top} & =\bm{\pi}^{\top}(\bm{P}-\hat{\bm{P}})+\left(\bm{\pi}-\hat{\bm{\pi}}\right)^{\top}\left(\bm{P}^{*}-\bm{1}\bm{\pi}^{*\top}\right)+\left(\bm{\pi}-\hat{\bm{\pi}}\right)^{\top}(\hat{\bm{P}}-\bm{P}^{*}),
\end{align*}
which together with a little algebra gives
\begin{align*}
\left\Vert \bm{\pi}-\hat{\bm{\pi}}\right\Vert _{\bm{\pi}^{*}} & \leq\left\Vert \bm{\pi}^{\top}(\bm{P}-\hat{\bm{P}})\right\Vert _{\bm{\pi}^{*}}+\left\Vert \bm{\pi}-\hat{\bm{\pi}}\right\Vert _{\bm{\pi}^{*}}\left\Vert \bm{P}^{*}-\bm{1}\bm{\pi}^{*\top}\right\Vert _{\bm{\pi}^{*}}+\|\bm{\pi}-\hat{\bm{\pi}}\|_{\bm{\pi}^{*}}\left\Vert \hat{\bm{P}}-\bm{P}^{*}\right\Vert _{\bm{\pi}^{*}}
\end{align*}
\begin{align*}
\Longrightarrow\qquad\left\Vert \bm{\pi}-\hat{\bm{\pi}}\right\Vert _{\bm{\pi}^{*}} & \leq\frac{\big\Vert \bm{\pi}^{\top}(\bm{P}-\hat{\bm{P}})\big\Vert _{\bm{\pi}^{*}}}{1-\left\Vert \bm{P}^{*}-\bm{1}\bm{\pi}^{*\top}\right\Vert _{\bm{\pi}^{*}}-\big\|\hat{\bm{P}}-\bm{P}^{*}\big\|_{\bm{\pi}^{*}}}.
\end{align*}
The theorem follows by recognizing that
\[
\left\Vert \bm{P}^{*}-\bm{1}\bm{\pi}^{*\top}\right\Vert _{\bm{\pi}^{*}}=\max\left\{ \lambda_{2}(\bm{P}^{*}),-\lambda_{n}\left(\bm{P}^{*}\right)\right\} .
\]

\subsection{Proof of Theorem \ref{thm:L2-pi} \label{sec:Proof-of-Theorem-L2-pi}}

By Theorem \ref{lemma:mc-perturbation},
we obtain
\begin{align*}
\|\bm{\pi^{*}}-\bm{\pi}\|_{\bm{\pi}^{*}} & \leq\frac{\big\|\bm{\pi}^{*\top}(\bm{P}^{*}-\bm{P})\big\|_{\bm{\pi}^{*}}}{1-\max\left\{ \lambda_{2}(\bm{P}^{*}),-\lambda_{n}\left(\bm{P}^{*}\right)\right\} -\big\|\bm{P}^{*}-\bm{P}\big\|_{\bm{\pi}^{*}}}\overset{(\text{i})}{\lesssim}\big\|\bm{\pi}^{*\top}(\bm{P}^{*}-\bm{P})\big\|_{\bm{\pi}^{*}}\\
& \overset{\left(\text{ii}\right)}{\lesssim}\|\bm{\pi}^{*\top}(\bm{P}^{*}-\bm{P})\|_{2}
\overset{(\text{iii})}{\lesssim}  \frac{1}{\sqrt{npL}} \|\bm{\pi}^{*}\|_{2},
\end{align*}
where (i) is a consequence of Lemma \ref{lemma:spectral-gap}, (ii)
follows from the relationship between $\|\cdot\|_{2}$ and $\|\cdot\|_{\bm{\pi}^{*}}$,
and (iii) follows as long as one can justify that
\begin{align}
\label{eq:UB1}
\left\Vert \bm{\pi}^{*\top}\left(\bm{P}-\bm{P}^{*}\right)\right\Vert _{2} \lesssim \frac{1}{\sqrt{npL}} \|{\bm{\pi}^{*}}\|_2.
\end{align}
Therefore, the rest of the proof is devoted to establishing (\ref{eq:UB1}). 
To simplify the notations hereafter, we denote $\bm{\Delta}:=\bm{P}-\bm{P}^{*}$. In fact, it is easy to check that for any $i\ne j$, 
\begin{equation}
\Delta_{i,j}=P_{i,j}-P_{i,j}^{*}=\ind_{\left\{ \left(i,j\right)\in\cE\right\} }\frac{1}{d}\left(\frac{1}{L}\sum_{l=1}^{L}y_{i,j}^{\left(l\right)}-y_{i,j}^{*}\right)\label{eq:ell-2-off-diag}
\end{equation}
and for $1\leq i\leq n$, one has 
\begin{align}
\Delta_{i,i} & =P_{i,i}-P_{i,i}^{*}=\left(1-\sum_{j:j\neq i}P_{i,j}\right)-\left(1-\sum_{j:j\neq i}P_{i,j}^{*}\right)\nonumber \\
& =-\sum_{j:j\neq i}\left(P_{i,j}-P_{i,j}^{*}\right) = -\sum_{j:j\neq i}\Delta_{i,j} \nonumber \\
& = - \Delta_{i,i}^{\mathrm{lower}} - \Delta_{i,i}^{\mathrm{upper}}.\label{eq:ell-2-diag}
\end{align}
where
\[
\Delta_{i,i}^{\mathrm{lower}}:= \sum_{j:j< i}\Delta_{i,j} \qquad \text{and} \qquad
\Delta_{i,i}^{\mathrm{upper}} := \sum_{j:j> i}\Delta_{i,j}. 
\]

Towards proving (\ref{eq:UB1}), we decompose $\bm{\Delta}$ into four parts $\bm{\Delta}=\bm{\Delta}_{\mathrm{lower}} + \bm{\Delta}_{\mathrm{upper}} + \bm{\Delta}_{\mathrm{diag,l}} + \bm{\Delta}_{\mathrm{diag,u}}$, where $\bm{\Delta}_{\mathrm{lower}}$ is the lower triangular part (excluding
the diagonal) of $\bm{\Delta}$, $\bm{\Delta}_{\mathrm{upper}}$
is the upper triangular part, and 
\[
\bm{\Delta}_{\mathrm{diag,l}} = -\mathrm{diag}\Big([\Delta_{i,i}^{\mathrm{lower}}]_{1\leq i\leq n}\Big)
\qquad \text{and} \qquad
\bm{\Delta}_{\mathrm{diag,u}} = -\mathrm{diag}\Big([\Delta_{i,i}^{\mathrm{upper}}]_{1\leq i\leq n}\Big). 
\]
The triangle inequality then gives
\[
\left\Vert \bm{\pi}^{*\top}\bm{\Delta}\right\Vert _{2}\leq\underbrace{\left\Vert \bm{\pi}^{*\top}\bm{\Delta}_{\mathrm{lower}}\right\Vert _{2}}_{:=I_{\mathrm{lower}}}
+ \underbrace{\left\Vert \bm{\pi}^{*\top}\bm{\Delta}_{\mathrm{upper}}\right\Vert _{2}}_{:=I_{\mathrm{upper}}}
+ \underbrace{\left\Vert \bm{\pi}^{*\top}\bm{\Delta}_{\mathrm{diag,l}}\right\Vert _{2}}_{:=I_{\mathrm{diag,l}}}
+ \underbrace{\left\Vert \bm{\pi}^{*\top}\bm{\Delta}_{\mathrm{diag,u}}\right\Vert _{2}}_{:=I_{\mathrm{diag,u}}}. 
\]
In what follows, we will focus on controlling the first term $I_{\mathrm{lower}}$. The other three terms can be bounded using nearly identical arguments. 

Note that the $j$  component of $\bm{\pi}^{*\top}\bm{\Delta}_{\mathrm{lower}}$  can be expressed as 
\begin{align*}
\left[\bm{\pi}^{*\top}\bm{\Delta}_{\mathrm{lower}}\right]_{j} 
& =  \sum_{i:i>j}\pi_{i}^{*}\Delta_{i,j}. 
\end{align*}
Recall that for any pair $\left(i,j\right)\in\cE$, $\Delta_{i,j}$  is a sum of $L$  independent zero-mean random variables, and hence
$\left[\bm{\pi}^{*\top}\bm{\Delta}_{\mathrm{lower}}\right]_{j}$ is a sum of $Ld_j^{\mathrm{lower}}$ independent zero-mean random variables, where
\[
d_j^{\mathrm{lower}} := \big| \{(i,j)\mid (i,j)\in \mathcal{E} ~\text{and}~ i>j\} \big|. 
\]
In view of Hoeffding's inequality (Lemma \ref{lemma:hoeffding}), one has,  when conditional on $\mathcal{G}$, that
\[
\PP\left(\left|\left[\bm{\pi}^{*\top}\bm{\Delta}_{\mathrm{lower}}\right]_{j}\right|\geq t\right)
\leq
2\exp\left(-\frac{2t^2}{ \frac{1}{(Ld)^2} d_j^{\mathrm{lower}} L\left(2\left\Vert \bm{\pi}^{*}\right\Vert _{\infty}\right)^{2}}\right) .
\]
Hence $\left[\bm{\pi}^{*\top}\bm{\Delta}_{\mathrm{lower}}\right]_{j}$
can be treated as a sub-Gaussian random variable with variance proxy
\[
\sigma^{2}  \asymp\frac{d_j^{\mathrm{lower}}}{d^2 L}\left\Vert \bm{\pi}^{*}\right\Vert _{\infty}^{2}  \lesssim \frac{1}{dL}\left\Vert \bm{\pi}^{*}\right\Vert _{\infty}^{2}.
\]
Given that the entries of $\bm{\pi}^{*\top}\bm{\Delta}_{\mathrm{lower}}$
are independent, we see that
\[
\left(I_{\text{lower}}\right)^{2}=\sum_{j=1}^{n}\left[\bm{\pi}^{*\top}\bm{\Delta}_{\mathrm{lower}}\right]_{j}^{2},
\]
is a quadratic form of a sub-Gaussian vector. On the one hand, $\mathbb{E} [I_{\text{lower}}]^2 \lesssim n \sigma^2$. On the other hand, we invoke \cite[Theorem 1.1]{rudelson2013hanson} to reach
\[
\mathbb{P}\{  I_{\text{lower}}^{2} - \mathbb{E}[I_{\text{lower}}^{2}] \geq t \} \leq 2\exp \left\{ -c \min\left\{ \frac{t^2}{n\sigma^4}, \frac{t}{\sigma^2} \right\} \right\}
\]
for some constant $c>0$. By choosing $t \asymp \sigma^2 \sqrt{ n \log n}$, we see that with probability at least $1-O(n^{-10})$,
\[
I_{\text{lower}}^{2} \lesssim \mathbb{E}[I_{\text{lower}}^{2}] + \sigma^2 \sqrt{n\log n} \lesssim n\sigma^2 
\lesssim \frac{n}{dL}\left\Vert \bm{\pi}^{*}\right\Vert _{\infty}^{2}
\lesssim \frac{1}{dL}\left\Vert \bm{\pi}^{*}\right\Vert _{2}^{2}. 
\]
The same upper bounds can be derived for other terms using the same arguments. We
have thus established (\ref{eq:UB1}) by recognizing that $d\gtrsim np$.

\subsection{Proof of Lemma \ref{lemma:spectral_I1m}\label{sec:Proof-of-Lemma-spectral_I1m}}

Observe that
\begin{align}
I_{1}^{m} & =\sum_{j:j\neq m}\pi_{j}^{*}\left(P_{j,m}-P_{j,m}^{*}\right)+\pi_{m}^{*}\left(P_{m,m}-P_{m,m}^{*}\right)\nonumber \\
 & \overset{\left(\text{i}\right)}{=}\sum_{j:j\neq m}\pi_{j}^{*}\left(P_{j,m}-P_{j,m}^{*}\right)+\pi_{m}^{*}\Bigg\{ \Bigg(1-\sum_{j:j\neq m}P_{m,j}\Bigg)-\Bigg(1-\sum_{j:j\neq m}P_{m,j}^{*}\Bigg)\Bigg\} \nonumber \\
 & =\sum_{j:j\neq m}\left(\pi_{j}^{*}+\pi_{m}^{*}\right)\left(P_{j,m}-P_{j,m}^{*}\right)\nonumber \\
 & =\frac{1}{Ld}\sum_{j:j\neq m}\sum_{l=1}^{L}\left(\pi_{j}^{*}+\pi_{m}^{*}\right)\ind_{(j,m)\in\cE}\left(y_{j,m}^{(l)}-y_{j,m}^{*}\right),\label{eq:I1m}
\end{align}
 where $\left(\text{i}\right)$ follows from the fact that $\bm{P}$
and $\bm{P}^{*}$ are both probability transition matrices. By Lemma
\ref{lemma:hoeffding}, one can derive
\begin{align*}
\PP\Big\{|I_{1}^{m}|\geq t \Big| \cG \Big\} & =\PP\left\{ \left|\sum_{j:j\neq m}\sum_{l=1}^{L}\left(\pi_{j}^{*}+\pi_{m}^{*}\right)\ind_{(j,m)\in\cE}\left(y_{j,m}^{(l)}-\mathbb{}y_{j,m}^{*}\right)\right|\geq Ldt\text{}\Bigg|\text{ }\cG\right\} \\
 & \leq\text{ }2\exp\left(-\frac{2(Ldt)^{2}}{Ld_{\max}(2\|\bm{\pi}^{*}\|_{\infty})^{2}}\right).
\end{align*}
When $d_{\max}\leq d$, the right hand side is bounded by $2\exp\left(\frac{-Ldt^{2}}{2\|\bm{\pi}^{*}\|_{\infty}^{2}}\right)$. Hence
\[
\PP\left\{ |I_{1}^{m}|\geq4\sqrt{\frac{\log n}{Ld}}\|\bm{\pi}^{*}\|_{\infty}
\bigg| d_{\max} \leq d
\right\} \leq2n^{-8}.
\]
The lemma is established by taking the union bounds and using the fact that $\PP ( d_{\max}\leq d )\geq 1-O(n^{-10})$ (by Lemma \ref{lemma:degree} in the main text and the remarks after that).

\subsection{Proof of Lemma \ref{lemma:spectral_I2m}\label{sec:Proof-of-Lemma-spectral_I2m}}
Similar to the proof of Lemma \ref{lemma:spectral_I1m} above, by applying Lemma \ref{lemma:hoeffding} to the quantity
\[
P_{m,m}-P_{m,m}^{*}=-\sum_{j:j\neq m}(P_{m,j}-P_{m,j}^{*})=-\frac{1}{Ld}\sum_{j:j\neq m}\sum_{l=1}^{L}\ind_{(j,m)\in\cE}\left(y_{j,m}^{(l)}-\mathbb{}y_{j,m}^{*}\right),
\]
we get
\[
\PP\left\{ \max_{m}|P_{m,m}-P_{m,m}^{*}|\geq2\sqrt{\frac{\log n}{Ld}}\text{ }
\bigg| d_{\max} \leq d
\right\} \leq2n^{-7}.
\]
On the other hand, when the event $\cA_0$ (defined in Lemma \ref{lemma:degree} in the main text) happens, we have $d_{\min}\geq np/2$ and for all $1\leq m\leq n$,
\[
P_{m,m}^{*}=1-\sum_{j:j\neq m}P_{m,j}^{*}\leq1-\frac{d_{\min}}{d}\cdot\frac{1}{1+\kappa}\leq1-\frac{np}{2d}\cdot\frac{1}{1+\kappa}.
\]
Combining these two pieces completes the proof.

\subsection{Proof of Lemma \ref{lemma:spectral-leave-ell-2}\label{sec:Proof-of-Lemma-spectral_leave-ell-2}}

First, by the relationship between $\|\cdot\|_{2}$ and $\|\cdot\|_{\bm{\pi}^{*}}$,
we have
\[
\|\bm{\pi}^{\left(m\right)}-\bm{\pi}\|_{2}\leq\frac{1}{\sqrt{\pi_{\min}^{*}}}\|\bm{\pi}^{\left(m\right)}-\bm{\pi}\|_{\pi^{*}},
\]
where $\pi^*_{\min} := \min_{i}\pi_i^*$. 
Invoking Theorem \ref{lemma:mc-perturbation}, we obtain
\begin{align*}
\|\bm{\pi}^{\left(m\right)}-\bm{\pi}\|_{\bm{\pi}^{*}} & \leq\frac{\|\bm{\pi}^{\left(m\right)\top}(\bm{P}^{(m)}-\bm{P})\|_{\bm{\pi}^{*}}}{1-\max\left\{ \lambda_{2}(\bm{P}^{*}),-\lambda_{n}\left(\bm{P}^{*}\right)\right\} -\|\bm{P}-\bm{P}^{*}\|_{\bm{\pi}^{*}}}\\
 & \leq\frac{1}{\gamma}\sqrt{\pi_{\max}^{*}}\|\bm{\pi}^{\left(m\right)\top}(\bm{P}^{(m)}-\bm{P})\|_{2},
\end{align*}
 where we define $\gamma:=1-\max\left\{ \lambda_{2}(\bm{P}^{*}),-\lambda_{n}\left(\bm{P}^{*}\right)\right\} -\|\bm{P}-\bm{P}^{*}\|_{\bm{\pi}^{*}}$ and $\pi^*_{\max} := \max_{i}\pi_i^*$.

To facilitate the analysis of $\|\bm{\pi}^{\left(m\right)\top}(\bm{P}^{(m)}-\bm{P})\|_{2}$,
we introduce another Markov chain with transition probability matrices
$\bm{P}^{(m),\cG}$, which is also a leave-one-out version of the
transition matrix $\bm{P}$. Similar to $\bm{P}^{(m)}$, $\bm{P}^{(m),\cG}$
replaces all the transition probabilities involving the $m$-th item
with their expected values (conditional on $\mathcal{G}$). Concretely,
for $i\neq j$,
\[
P_{i,j}^{(m),\cG}=\begin{cases}
P_{i,j}, & \quad i\neq m,j\neq m,\\
\frac{1}{d}y_{i,j}^{*}\ind_{(i,j)\in\cE}, & \quad i=m\text{ or }j=m.
\end{cases}
\]
And for each $1\leq i\leq n$, we define
\[
P_{i,i}^{(m),\cG}=1-\sum_{j:j\neq i}P_{i,j}^{(m),\cG}
\]
to make $\bm{P}^{(m),\cG}$ a valid probability transition matrix.
Hence by the triangle inequality, we see that
\[
\|\bm{\pi}^{\left(m\right)\top}(\bm{P}^{(m)}-\bm{P})\|_{2}\leq\underbrace{\|\bm{\pi}^{\left(m\right)\top}(\bm{P}-\bm{P}^{(m),\cG})\|_{2}}_{:=J_{1}^{m}}+\underbrace{\|\bm{\pi}^{\left(m\right)\top}(\bm{P}^{(m)}-\bm{P}^{(m),\cG})\|_{2}}_{:=J_{2}^{m}}.
\]
The next step is then to bound $J_{1}^{m}$ and $J_{2}^{m}$ separately.

For $J_{1}^{m}$, similar to \eqref{eq:I1m}, one has
\begin{align*}
\left[\bm{\pi}^{\left(m\right)\top}(\bm{P}-\bm{P}^{(m),\cG})\right]_{m} & \overset{\left(\text{i}\right)}{=}\left[\bm{\pi}^{\left(m\right)\top}(\bm{P}-\bm{P}^{*})\right]_{m}\\
 & =\frac{1}{Ld}\sum_{j:j\neq m}\sum_{l=1}^{L}\left(\pi_{j}^{\left(m\right)}+\pi_{m}^{\left(m\right)}\right)\ind_{(j,m)\in\cE}\left(y_{j,m}^{(l)}-y_{j,m}^{*}\right),
\end{align*}
 where $\left(\text{i}\right)$ comes from the fact that $\bm{P}_{\cdot m}^{(m),\cG}=\bm{P}_{\cdot m}^{*}$.
Recognizing that $\bm{\pi}^{\left(m\right)}$ is statistically independent
of $\left\{ y_{j,m}\right\} _{j\neq m}$, by Hoeffding's inequality
in Lemma \ref{lemma:hoeffding}, we get
\begin{equation}
\PP\left\{ \left|[\bm{\pi}^{\left(m\right)\top}(\bm{P}-\bm{P}^{(m),\cG})]_{m}\right|\geq4\sqrt{\frac{\log n}{Ld}}\|\bm{\pi}^{\left(m\right)}\|_{\infty}\right\} \leq2n^{-8}.\label{eq:pim-m-1}
\end{equation}
 And for $j\neq m$, we have
\begin{align*}
\left[\bm{\pi}^{\left(m\right)\top}\big(\bm{P}-\bm{P}^{(m),\cG}\big)\right]_{j} & =\sum_{i}\pi_{i}^{(m)}\big(P_{i,j}-P_{i,j}^{(m),\cG}\big)\\
 & =\pi_{j}^{(m)}\big(P_{j,j}-P_{j,j}^{(m),\cG}\big)+\pi_{m}^{(m)}\big(P_{m,j}-P_{m,j}^{(m),\cG}\big)\\
 & =\pi_{j}^{(m)}\frac{1}{d}(y_{j,m}^{*}-y_{j,m})\ind_{(j,m)\in\cE}\text{ }+\text{ }\pi_{m}^{(m)}\frac{1}{d}(y_{m,j}-y_{m,j}^{*})\ind_{(j,m)\in\cE}\\
 & =\pi_{j}^{(m)}\left(P_{j,m}^{*}-P_{j,m}\right)+\pi_{m}^{(m)}\left(P_{m,j}-P_{m,j}^{*}\right).
\end{align*}
In addition by Hoeffding's inequality in Lemma \ref{lemma:hoeffding},
we have
\[
\max_{j\neq m}\left|P_{j,m}-P_{j,m}^{*}\right|\leq\frac{2}{d}\sqrt{\frac{\log n}{L}}
\]
with probability at least $1-O(n^{-5})$. As a consequence,
\begin{equation}
\left|\left[\bm{\pi}^{\left(m\right)\top}\big(\bm{P}-\bm{P}^{(m),\cG}\big)\right]_{j}\right|\leq\begin{cases}
\frac{4}{d}\sqrt{\frac{\log n}{L}}\|\bm{\pi}^{\left(m\right)}\|_{\infty}, & \quad\text{ if }\left(j,m\right)\in\cE,\\
0, & \quad\text{ else}.
\end{cases}\label{eq:pim-j}
\end{equation}
Combining \eqref{eq:pim-m-1} and \eqref{eq:pim-j} yields
\[
J_{1}^{m}\leq4\sqrt{\frac{\log n}{Ld}}\|\bm{\pi}^{\left(m\right)}\|_{\infty}+\frac{4\sqrt{d_{\max}-1}}{d}\sqrt{\frac{\log n}{L}}\|\bm{\pi}^{\left(m\right)}\|_{\infty}\overset{(\text{i})}{\leq}8\sqrt{\frac{\log n}{Ld}}\|\bm{\pi}^{\left(m\right)}\|_{\infty},
\]
 where $(\text{i})$ comes from the fact that $d_{\max}\leq d$.

Regarding $J_{2}^{m}$, we invoke the identity $\bm{\pi}^{*\top}(\bm{P}^{(m)}-\bm{P}^{(m),\cG})=\bm{0}$
to get
\[
\bm{\pi}^{\left(m\right)\top}(\bm{P}^{(m)}-\bm{P}^{(m),\cG})=\left(\bm{\pi}^{\left(m\right)}-\bm{\pi}^{*}\right)^{\top}(\bm{P}^{(m)}-\bm{P}^{(m),\cG}).
\]
Therefore, for $j\neq m$ we have
\begin{align*}
 & \left[\left(\bm{\pi}^{\left(m\right)}-\bm{\pi}^{*}\right)^{\top}\big(\bm{P}^{(m)}-\bm{P}^{(m),\cG}\big)\right]_{j}\\
 & \qquad=\sum_{i}(\pi_{i}^{(m)}-\pi_{i}^{*})\big(P_{i,j}^{(m)}-P_{i,j}^{(m),\cG}\big)\\
 & \qquad=(\pi_{j}^{(m)}-\pi_{j}^{*})\big(P_{j,j}^{(m)}-P_{j,j}^{(m),\cG}\big)+(\pi_{m}^{(m)}-\pi_{m}^{*})\big(P_{m,j}^{(m)}-P_{m,j}^{(m),\cG}\big)\\
 & \qquad=-(\pi_{j}^{(m)}-\pi_{j}^{*})\big(P_{j,m}^{(m)}-P_{j,m}^{(m),\cG}\big)+(\pi_{m}^{(m)}-\pi_{m}^{*})\big(P_{m,j}^{(m)}-P_{m,j}^{(m),\cG}\big).
\end{align*}
 Recognizing that $\big|P_{j,m}^{(m)}-P_{j,m}^{(m),\cG}\big|\leq\frac{2}{d}$
for $\left(j,m\right)\in\cE$ and $\big|P_{j,m}^{(m)}-P_{j,m}^{(m),\cG}\big|\leq\frac{p}{d}$
for $\left(j,m\right)\notin\cE$, we have
\begin{equation}
\left|\left[\left(\bm{\pi}^{\left(m\right)}-\bm{\pi}^{*}\right)(\bm{P}^{(m)}-\bm{P}^{(m),\cG})\right]_{j}\right|\leq\begin{cases}
\frac{4}{d}\|\bm{\pi}^{\left(m\right)}-\bm{\pi}^{*}\|_{\infty}, & \quad\text{ if }\left(j,m\right)\in\cE,\\
\frac{2p}{d}\|\bm{\pi}^{\left(m\right)}-\bm{\pi}^{*}\|_{\infty}, & \quad\text{ else}.
\end{cases}\label{eq:J2m-j}
\end{equation}
 And for $j=m$, it holds that
\begin{align}
 & \left|\left[\left(\bm{\pi}^{\left(m\right)}-\bm{\pi}^{*}\right)^{\top}\big(\bm{P}^{(m)}-\bm{P}^{(m),\cG}\big)\right]_{m}\right|\nonumber \\
 & \qquad=\left|(\pi_{m}^{(m)}-\pi_{m}^{*})\big(P_{m,m}^{(m)}-P_{m,m}^{(m),\cG}\big)+\sum_{j:j\neq m}(\pi_{j}^{(m)}-\pi_{j}^{*})\big(P_{j,m}^{(m)}-P_{j,m}^{(m),\cG}\big)\right|\\
 & \qquad\leq\left|(\pi_{m}^{(m)}-\pi_{m}^{*})\big(P_{m,m}^{(m)}-P_{m,m}^{(m),\cG}\big)\right|+\Bigg|\sum_{j:j\neq m}(\pi_{j}^{(m)}-\pi_{j}^{*})\big(P_{j,m}^{(m)}-P_{j,m}^{(m),\cG}\big)\Bigg|\nonumber \\
 & \qquad=\Bigg|\underbrace{\sum_{j:j\neq m}(\pi_{m}^{(m)}-\pi_{m}^{*})\big(P_{m,j}^{(m)}-P_{m,j}^{(m),\cG}\big)}_{:=J_{3}^{m}}\Bigg|+\Bigg|\underbrace{\sum_{j:j\neq m}(\pi_{j}^{(m)}-\pi_{j}^{*})\big(P_{j,m}^{(m)}-P_{j,m}^{(m),\cG}\big)}_{:=J_{4}^{m}}\Bigg|.\label{eq:J2m-m}
\end{align}
Given that $P_{m,j}^{(m)}-P_{m,j}^{(m),\cG}=\frac{y_{m,j}^{*}}{d}(p-\ind_{(m,j)\in\cE})$,
we have
\begin{equation}
J_{3}^{m}=\sum_{j:j\neq m}\underbrace{(\pi_{m}^{(m)}-\pi_{m}^{*})\frac{y_{m,j}^{*}}{d}}_{:=\xi_{j}^{(m)}}(p-\ind_{(m,j)\in\cE}).\label{eq:J3m}
\end{equation}
 Since $\|\bm{\xi}^{(m)}\|_{\infty}\leq\frac{1}{d}\|\bm{\pi}^{(m)}-\bm{\pi}^{*}\|_{\infty}$
and $\|\bm{\xi}^{(m)}\|_{2}\leq\frac{1}{d}\|\bm{\pi}^{(m)}-\bm{\pi}^{*}\|_{2}$,
Lemma \ref{lemma:bernstein} implies that
\[
|J_{3}^{m}|\lesssim\frac{\sqrt{np\log n}+\log n}{d}\|\bm{\pi}^{(m)}-\bm{\pi}^{*}\|_{\infty}
\]
with high probability. The same bound holds for $J_{4}^{m}$. Combine
\eqref{eq:J2m-j}, \eqref{eq:J2m-m} and \eqref{eq:J3m} to arrive
at
\[
J_{2}^{m}\lesssim\left(\frac{\sqrt{np\log n}+\log n}{d}+\frac{p\sqrt{n}}{d}+\frac{\sqrt{d}}{d}\right)\|\bm{\pi}^{(m)}-\bm{\pi}^{*}\|_{\infty}.
\]

Combining all, we deduce that{
\medmuskip=0mu
\thinmuskip=0mu
\thickmuskip=0mu
\begin{align*}
\|\bm{\pi}^{(m)}-\bm{\pi}\|_{2} & \leq\frac{\sqrt{\kappa}}{\gamma}(J_{1}^{m}+J_{2}^{m})\\
 & \leq\frac{\sqrt{\kappa}}{\gamma}\left(8\sqrt{\frac{\log n}{Ld}}\|\bm{\pi}^{(m)}\|_{\infty}+C\left(\frac{\sqrt{np\log n}+\log n}{d}+\frac{p\sqrt{n}}{d}+\frac{\sqrt{d}}{d}\right)\|\bm{\pi}^{(m)}-\bm{\pi}^{*}\|_{\infty}\right)\\
 & \leq\frac{\sqrt{\kappa}}{\gamma}\left(8\sqrt{\frac{\log n}{Ld}}\|\bm{\pi}^{*}\|_{\infty}+C\left(8\sqrt{\frac{\log n}{Ld}}+\frac{\sqrt{np\log n}+\log n}{d}+\frac{p\sqrt{n}}{d}+\frac{\sqrt{d}}{d}\right)\|\bm{\pi}^{(m)}-\bm{\pi}^{*}\|_{\infty}\right)\\
 & \overset{(\text{i})}{\leq}\frac{8\sqrt{\kappa}}{\gamma}\sqrt{\frac{\log n}{Ld}}\|\bm{\pi}^{*}\|_{\infty}+\frac{1}{2}\|\bm{\pi}^{(m)}-\bm{\pi}^{*}\|_{\infty},
\end{align*}}
where $(\text{i})$ holds as long as $np \gamma^2 \geq c \kappa \log n$ for $c$
sufficiently large. The triangle inequality
\[
\|\bm{\pi}^{(m)}-\bm{\pi}^{*}\|_{\infty}\leq\|\bm{\pi}^{(m)}-\bm{\pi}\|_{2}+\|\bm{\pi}-\bm{\pi}^{*}\|_{\infty}
\]
yields
\begin{equation}
\|\bm{\pi}^{(m)}-\bm{\pi}\|_{2}\leq\frac{16\sqrt{\kappa}}{\gamma}\sqrt{\frac{\log n}{Ld}}\|\bm{\pi}^{*}\|_{\infty}+\|\bm{\pi}-\bm{\pi}^{*}\|_{\infty},\label{eq:ell2_2}
\end{equation}
which concludes the proof.

\subsection{Proof of Lemma \ref{lemma:spectral-I4m}\label{sec:Proof-of-Lemma-spectral-I4m}}

For ease of presentation, we define
\[
\tilde{y}_{i,j}:=\frac{1}{L}\sum_{l=1}^{L}\tilde{y}_{i,j}^{\left(l\right)}
\]
for all $i\neq j$, where
\[
\tilde{y}_{i,j}^{(l)}\text{ }\overset{\text{ind.}}{=}\text{ }\begin{cases}
1,\quad & \text{with probability }\frac{w_{j}^{*}}{w_{i}^{*}+w_{j}^{*}},\\
0, & \text{else}.
\end{cases}
\]
This allows us to write $\bm{y}$ as $y_{i,j}=\tilde{y}_{i,j}\ind_{(i,j)\in\cE}$
for all $i\neq j$. With this notation in place, we can obtain

\[
I_{4}^{m}=\sum_{j:j\neq m}(\pi_{j}^{\left(m\right)}-\pi_{j}^{*})\left(\frac{1}{Ld}\sum_{l=1}^{L}\tilde{y}_{j,m}^{(l)}\right)\ind_{\left(j,m\right)\in\cE}.
\]
We can further decompose $I_{4}^{m}$ into
\[
I_{4}^{m}=\EE\left[I_{4}^{m}\mid\cG_{-m},\tilde{\bm{y}}\right]+I_{4}^{m}-\EE\left[I_{4}^{m}\mid\cG_{-m},\bm{\tilde{y}}\right],
\]
where $\cG_{-m}$ represent the graph without the $m$-th node, and
$\tilde{\bm{y}}=\{\tilde{y}_{i,j}|i\neq j\}$ represents all the binary
outcomes.

We start with the expectation term
\begin{align*}
\EE\left[I_{4}^{m}\mid\cG_{-m},\tilde{\bm{y}}\right] & =\sum_{j:j\neq m}(\pi_{j}^{\left(m\right)}-\pi_{j}^{*})\left(\frac{1}{Ld}\sum_{l=1}^{L}\tilde{y}_{j,m}^{(l)}\right)\PP\left\{ {\left(j,m\right)\in\cE}\right\} \\
 & \overset{\left(\text{i}\right)}{\leq}\text{ }p\|\bm{\pi}^{\left(m\right)}-\bm{\pi}^{*}\|_{2}\frac{\sqrt{n}}{d}\text{ }\overset{\left(\text{ii}\right)}{\leq}\text{ }\frac{1}{2\sqrt{n}}\|\bm{\pi}^{\left(m\right)}-\bm{\pi}^{*}\|_{2}\\
 & \overset{\left(\text{iii}\right)}{\leq}\frac{1}{2\sqrt{n}}(\|\bm{\pi}^{\left(m\right)}-\bm{\pi}\|_{2}+\|\bm{\pi}-\bm{\pi}^{*}\|_{2}),
\end{align*}
 where $\left(\text{i}\right)$ comes from the Cauchy-Schwarz inequality,
$\left(\text{ii}\right)$ follows from the choice $d=c_{d}np\geq2np$
and $\left(\text{iii}\right)$ results from the triangle inequality.
By Theorem \ref{thm:L2-pi}, with high probability we have
\[
\|\bm{\pi}-\bm{\pi}^{*}\|_{2}\leq C_{N}\sqrt{\frac{\log n}{Ld}}\|\bm{\pi}^{*}\|_{2}\leq C_{N}\sqrt{\frac{\log n}{Ld}}\sqrt{n}\|\bm{\pi}^{*}\|_{\infty},
\]
thus indicating that
\begin{align}
\EE\left[I_{4}^{m}\mid\cG_{-m},\bm{\tilde{y}}\right] & \leq\frac{1}{2\sqrt{n}}\|\bm{\pi}^{\left(m\right)}-\bm{\pi}\|_{2}+\frac{C_{N}}{2}\sqrt{\frac{\log n}{Ld}}\|\bm{\pi}^{*}\|_{\infty}.\label{eq:I4-expectation}
\end{align}

When it comes to the fluctuation term, one can write
\[
I_{4}^{m}-\EE[I_{4}^{m}\mid\cG_{-m},\tilde{\bm{y}}]=\sum_{j:j\neq m}\underbrace{(\pi_{j}^{\left(m\right)}-\pi_{j}^{*})\left(\frac{1}{Ld}\sum_{l=1}^{L}\tilde{y}_{j,m}^{(l)}\right)}_{:=\beta_{j}^{(m)}}\left(\ind_{\left(j,m\right)\in\cE}-\PP\left\{ {\left(j,m\right)\in\cE}\right\} \right).
\]
 Since $\|\bm{\beta}^{(m)}\|_{2}\leq\frac{1}{d}\|\bm{\pi}^{(m)}-\bm{\pi}^{*}\|_{2}$
and $\|\bm{\beta}^{(m)}\|_{\infty}\leq\frac{1}{d}\|\bm{\pi}^{(m)}-\bm{\pi}^{*}\|_{\infty}$,
one can apply Lemma \ref{lemma:bernstein} to derive
\begin{equation}
\left|I_{4}^{m}-\EE\big[I_{4}^{m}\mid\cG_{-m},\tilde{\bm{y}}\big]\right|\lesssim\frac{\sqrt{np\log n}+\log n}{d}\|\bm{\pi}^{\left(m\right)}-\bm{\pi}^{*}\|_{\infty}\label{eq:I4-fluctuation}
\end{equation}
with high enough probability. The bounds \eqref{eq:I4-expectation} and \eqref{eq:I4-fluctuation}
taken together complete the proof.

\section{Proofs in Section \ref{sec:Analysis-for-MLE}}\label{sec:Proofs-in-Section-MLE}

This section gathers the proofs of the lemmas in Section \ref{sec:Analysis-for-MLE}.

\subsection{Proof of Lemma \ref{lemma:grad-L}\label{sec:Proof-of-Lemma-grad-L}}

Observe that 
\begin{align*}
\nabla\mathcal{L}_{\lambda}\left(\bm{\theta}^{*};\bm{y}\right) & =\lambda\bm{\theta}^{*}+\sum_{(i,j)\in\mathcal{E},i>j}\left(-y_{j,i}+\frac{e^{\theta_{i}^{*}}}{e^{\theta_{i}^{*}}+e^{\theta_{j}^{*}}}\right)\left(\bm{e}_{i}-\bm{e}_{j}\right)\\
 & =\lambda\bm{\theta}^{*}+\frac{1}{L}\sum_{(i,j)\in\mathcal{E},i>j}\sum_{l=1}^{L}\underset{:=\bm{z}_{i,j}^{(l)}}{\underbrace{\left(-y_{j,i}^{(l)}+\frac{e^{\theta_{i}^{*}}}{e^{\theta_{i}^{*}}+e^{\theta_{j}^{*}}}\right)\left(\bm{e}_{i}-\bm{e}_{j}\right)}}.
\end{align*}
It is seen that $\mathbb{E}[\bm{z}_{i,j}^{(l)}]=\bm{0}$, $\|\bm{z}_{i,j}^{(l)}\|_{2}\leq\sqrt{2}$,
\begin{align*}
\mathbb{E}\left[\bm{z}_{i,j}^{(l)}\bm{z}_{i,j}^{(l)\top}\right] & =\mathsf{Var}\left[y_{i,j}^{(l)}\right]\left(\bm{e}_{i}-\bm{e}_{j}\right)\left(\bm{e}_{i}-\bm{e}_{j}\right)^{\top}\preceq\left(\bm{e}_{i}-\bm{e}_{j}\right)\left(\bm{e}_{i}-\bm{e}_{j}\right)^{\top}
\end{align*}
\begin{align*}
\text{and}\qquad\mathbb{E}\left[\bm{z}_{i,j}^{(l)\top}\bm{z}_{i,j}^{(l)}\right] & =\mathrm{Tr}\left(\mathbb{E}\left[\bm{z}_{i,j}^{(l)}\bm{z}_{i,j}^{(l)\top}\right]\right)\leq2.
\end{align*}
This implies that with high probability (note that the randomness
comes from $\mathcal{G}$), 
\begin{align*}
\left\Vert \sum_{(i,j)\in\mathcal{E},i>j}\sum_{l=1}^{L}\mathbb{E}\left[\bm{z}_{i,j}^{(l)}\bm{z}_{i,j}^{(l)\top}\right]\right\Vert  & \leq L\left\Vert \sum_{(i,j)\in\mathcal{E},i>j}\left(\bm{e}_{i}-\bm{e}_{j}\right)\left(\bm{e}_{i}-\bm{e}_{j}\right)^{\top}\right\Vert =L\|\bm{L}_{\mathcal{G}}\|\lesssim Lnp
\end{align*}
and 
\[
\left|\sum_{(i,j)\in\mathcal{E},i>j}\sum_{l=1}^{L}\mathbb{E}\left[\bm{z}_{i,j}^{(l)\top}\bm{z}_{i,j}^{(l)}\right]\right|\leq2L\left|\sum_{(i,j)\in\mathcal{E},i>j}1\right|\lesssim Ln^{2}p.
\]

Letting $V:=\frac{1}{L^{2}}\max\left\{ \left\Vert \sum_{(i,j)\in\mathcal{E}}\sum_{l=1}^{L}\mathbb{E}\left[\bm{z}_{i,j}^{(l)}\bm{z}_{i,j}^{(l)\top}\right]\right\Vert ,\left|\sum_{(i,j)\in\mathcal{E}}\sum_{l=1}^{L}\mathbb{E}\left[\bm{z}_{i,j}^{(l)\top}\bm{z}_{i,j}^{(l)}\right]\right|\right\} $
and $B:=\max_{i,j,l}\|\bm{z}_{i,j}^{(l)}\|$, we can invoke the matrix
Bernstein inequality \cite[Theorem 1.6]{tropp2012user} to reach 
\begin{align*}
\left\Vert \nabla\mathcal{L}_{\lambda}\left(\bm{\theta}^{*};\bm{y}\right)-\mathbb{E}\left[\nabla\mathcal{L}_{\lambda}\left(\bm{\theta}^{*};\bm{y}\right)\mid\mathcal{G}\right]\right\Vert _{2} & \lesssim\sqrt{V\log n}+B\log n\lesssim\sqrt{\frac{n^{2}p\log n}{L}}+\frac{\log n}{L}\\
 & \lesssim\sqrt{\frac{n^{2}p\log n}{L}}
\end{align*}
with probability at least $1-O(n^{-10})$. Combining this with the
identity $\mathbb{E}\left[\nabla\mathcal{L}_{\lambda}(\bm{\theta}^{*};\bm{y})\mid\mathcal{G}\right]=\lambda\bm{\theta}^{*}$
yields 
\begin{align*}
\left\Vert \nabla\mathcal{L}_{\lambda}\left(\bm{\theta}^{*};\bm{y}\right)\right\Vert _{2} & \leq\left\Vert \mathbb{E}\left[\nabla\mathcal{L}_{\lambda}\left(\bm{\theta}^{*};\bm{y}\right)\mid\mathcal{G}\right]\right\Vert _{2}+\left\Vert \nabla\mathcal{L}_{\lambda}\left(\bm{\theta}^{*};\bm{y}\right)-\mathbb{E}\left[\nabla\mathcal{L}_{\lambda}\left(\bm{\theta}^{*};\bm{y}\right)\mid\mathcal{G}\right]\right\Vert _{2}\\
 & \lesssim\lambda\left\Vert \bm{\theta}^{*}\right\Vert _{2}+\sqrt{\frac{n^{2}p\log n}{L}}\asymp\sqrt{\frac{n^{2}p\log n}{L}},
\end{align*}
where the last relation holds because of the facts that $\left\Vert \bm{\theta}^{*}\right\Vert _{2}\leq\sqrt{n}\log\kappa$,
$\lambda\asymp\frac{1}{\log\kappa}\sqrt{\frac{np\log n}{L}}$, and
\[
\lambda\left\Vert \bm{\theta}^{*}\right\Vert _{2}\lesssim\frac{\left\Vert \bm{\theta}^{*}\right\Vert _{2}}{\log\kappa}\sqrt{\frac{np\log n}{L}}\lesssim\sqrt{\frac{n^{2}p\log n}{L}}.
\]
This concludes the proof.

\subsection{Proof of Lemma \ref{lemma:strong-convexity-L}\label{subsec:Proof-of-Lemma-strong-convexity}}

It suffices to prove that 
\[
\min_{1\leq i,j\leq n}\frac{e^{\theta_{i}}e^{\theta_{j}}}{\left(e^{\theta_{i}}+e^{\theta_{j}}\right)^{2}}\geq\frac{1}{4\kappa e^{2C}}
\]
for all $\bm{\theta}\in\RR^{n}$ obeying $\left\Vert \bm{\theta}-\bm{\theta}^{*}\right\Vert _{\infty}\leq C$.
Without loss of generality, suppose $\theta_{i}\leq\theta_{j}$. One
can divide both the denominator and the numerator by $e^{2\theta_{j}}$
to obtain 
\[
\frac{e^{\theta_{i}}e^{\theta_{j}}}{\left(e^{\theta_{i}}+e^{\theta_{j}}\right)^{2}}=\frac{e^{\theta_{i}-\theta_{j}}}{\left(1+e^{\theta_{i}-\theta_{j}}\right)^{2}}=\frac{e^{-|\theta_{i}-\theta_{j}|}}{\left(1+e^{-|\theta_{i}-\theta_{j}|}\right)^{2}}\geq\frac{1}{4}e^{-|\theta_{i}-\theta_{j}|},
\]
where the last relation holds since $\left(1+e^{-x}\right)^{2}\leq4$
for all $x\geq0$. From our assumption, we see that for all $1\leq i,j\leq n$,
\[
\left|\theta_{i}-\theta_{j}\right|\leq\theta_{\max}^{*}-\theta_{\min}^{*}+2\left\Vert \bm{\theta}-\bm{\theta}^{*}\right\Vert _{\infty}\leq\log\kappa+2C,
\]
which relies on the fact that $\theta_{\max}^{*}-\theta_{\min}^{*}\leq\log\kappa$.
This allows one to justify that 
\[
\frac{e^{\theta_{i}}e^{\theta_{j}}}{\left(e^{\theta_{i}}+e^{\theta_{j}}\right)^{2}}\geq\frac{1}{4}e^{-|\theta_{i}-\theta_{j}|}\geq\frac{1}{4\kappa e^{2C}}.
\]

\subsection{Proof of Fact \ref{fact:mean}\label{subsec:Proof-of-fact:mean}}
By $\bm{\theta}^{0}=\bm{\theta}^{*}$ and $\bm{1}^{\top}\bm{\theta}^{*}=0$, the statement trivially holds
	true for $t=0$. Suppose it is true for some $t\geq0$. Then 
	\begin{align*}
	\bm{1}^{\top}\bm{\theta}^{t+1} & =\bm{1}^{\top}(\bm{\theta}^{t}-\eta_{t}\nabla\cL_{\lambda}(\bm{\theta}^{t};\bm{y}))\overset{\left(\text{i}\right)}{=}-\eta_{t}\bm{1}^{\top}\nabla\cL_{\lambda}(\bm{\theta}^{t};\bm{y})\\
	& =-\eta_{t}\bm{1}^{\top}\left(\nabla\cL(\bm{\theta}^{t};\bm{y})+\lambda\bm{\theta}^{t}\right)\overset{\left(\text{ii}\right)}{=}-\eta_{t}\bm{1}^{\top}\nabla\cL(\bm{\theta}^{t};\bm{y})\overset{\left(\text{iii}\right)}{=}0,
	\end{align*}
	where the equalities (i) and (ii) follow from the fact that $\bm{1}^{\top}\bm{\theta}^{t}=0$,
	whereas the last identity (iii) arises from the gradient expression
	(\ref{eq:gradient}) and the simple fact that $\bm{1}^{\top}(\bm{e}_{i}-\bm{e}_{j})=0$
	for any $i$ and $j$. This completes the whole proof.

\subsection{Proof of Lemma \ref{lemma:ell_2_rate}\label{subsec:Proof-of-Lemma-Coarse-ell-2}}

It follows from the optimality of $\bm{\theta}$ as well as the mean
value theorem that 
\begin{align*}
\mathcal{L}_{\lambda}\left(\bm{\theta}^{*};\bm{y}\right)  \geq\mathcal{L}_{\lambda}\left(\bm{\theta};\bm{y}\right)&=\mathcal{L}_{\lambda}\left(\bm{\theta}^{*};\bm{y}\right)+\left\langle \nabla\mathcal{L}_{\lambda}\left(\bm{\theta}^{*};\bm{y}\right),\bm{\theta}-\bm{\theta}^{*}\right\rangle \\
&\quad+\frac{1}{2}\left(\bm{\theta}-\bm{\theta}^{*}\right)^{\top}\nabla^{2}\mathcal{L}_{\lambda}\big(\tilde{\bm{\theta}};\bm{y}\big)\left(\bm{\theta}-\bm{\theta}^{*}\right),
\end{align*}
where $\tilde{\bm{\theta}}$ is between $\bm{\theta}$ and $\bm{\theta}^{*}$.
This together with the Cauchy-Schwarz inequality gives 
\begin{align*}
\frac{1}{2}\left(\bm{\theta}-\bm{\theta}^{*}\right)^{\top}\nabla^{2}\mathcal{L}_{\lambda}\big(\tilde{\bm{\theta}};\bm{y}\big)\left(\bm{\theta}-\bm{\theta}^{*}\right) & \leq-\left\langle \nabla\mathcal{L}_{\lambda}\left(\bm{\theta}^{*};\bm{y}\right),\bm{\theta}-\bm{\theta}^{*}\right\rangle \\
&\leq\left\Vert \nabla\mathcal{L}_{\lambda}\left(\bm{\theta}^{*};\bm{y}\right)\right\Vert _{2}\left\Vert \bm{\theta}-\bm{\theta}^{*}\right\Vert _{2}.
\end{align*}
The above inequality gives 
\begin{equation}
\left\Vert \bm{\theta}-\bm{\theta}^{*}\right\Vert _{2}\leq\frac{2\left\Vert \nabla\mathcal{L}_{\lambda}\left(\bm{\theta}^{*};\bm{y}\right)\right\Vert _{2}}{\lambda_{\min}\left(\nabla^{2}\mathcal{L}_{\lambda}\big(\tilde{\bm{\theta}};\bm{y}\big)\right)}.\label{eq:theta-thetas-dist}
\end{equation}
From the trivial lower bound $\lambda_{\min}\left(\nabla^{2}\mathcal{L}_{\lambda}\left(\tilde{\bm{\theta}};\bm{y}\right)\right)\geq\lambda$,
the preceding inequality gives 
\begin{equation}
\left\Vert \bm{\theta}-\bm{\theta}^{*}\right\Vert _{2}\leq\frac{2\left\Vert \nabla\mathcal{L}_{\lambda}\left(\bm{\theta}^{*};\bm{y}\right)\right\Vert _{2}}{\lambda}.\label{eq:theta-thetas-dist-2}
\end{equation}
On the event $\mathcal{A}_{2}=\left\{ \left\Vert \nabla\mathcal{L}_{\lambda}\left(\bm{\theta}^{*};\bm{y}\right)\right\Vert _{2}\lesssim\sqrt{\frac{n^{2}p\log n}{L}}\right\} $
and in the presence of the choice $\lambda\asymp\frac{1}{\log\kappa}\sqrt{\frac{np\log n}{L}}$,
we obtain $\left\Vert \bm{\theta}-\bm{\theta}^{*}\right\Vert _{2}\leq c_{2}\log\kappa\sqrt{n}$
for some constant $c_{2}>0$.

\subsection{Proof of Lemma \ref{lemma:consequence}\label{subsec:Proof-of-Lemma-consequence}}

In regard to the first consequence, one can apply the triangle inequality
to show 
\begin{align*}
\max_{1\leq m\leq n}\left\Vert \bm{\theta}^{t,\left(m\right)}-\bm{\theta}^{*}\right\Vert _{\infty} & \leq\max_{1\leq m\leq n}\left\Vert \bm{\theta}^{t}-\bm{\theta}^{t,\left(m\right)}\right\Vert _{2}+\left\Vert \bm{\theta}^{t}-\bm{\theta}^{*}\right\Vert _{\infty}\\
 & \leq C_{3}\kappa\sqrt{\frac{\log n}{npL}}+C_{4}\kappa^{2}\sqrt{\frac{\log n}{npL}}\\
 & \leq C_{5}\kappa^{2}\sqrt{\frac{\log n}{npL}},
\end{align*}
as long as $C_{5}\geq C_{4}+C_{3}$. Similarly, for the second one,
we have 
\begin{align*}
\max_{1\leq m\leq n}\left\Vert \bm{\theta}^{t,\left(m\right)}-\bm{\theta}^{*}\right\Vert _{2} & \leq\max_{1\leq m\leq n}\left\Vert \bm{\theta}^{t}-\bm{\theta}^{t,\left(m\right)}\right\Vert _{2}+\left\Vert \bm{\theta}^{t}-\bm{\theta}^{*}\right\Vert _{2}\\
 & \leq C_{3}\kappa\sqrt{\frac{\log n}{npL}}+C_{1}\kappa\sqrt{\frac{\log n}{pL}}\\
 & \leq C_{6}\kappa\sqrt{\frac{\log n}{pL}},
\end{align*}
as soon as $C_{6}\geq C_{1}+C_{3}$.

\subsection{Proof of Lemma \ref{lemma:ell_2_contraction}\label{subsec:Proof-of-Lemma-ell-2-contraction}}

In view of the gradient update rule (\ref{eq:gradient-update-original}),
we have 
\begin{align}
\bm{\theta}^{t+1}-\bm{\theta}^{*} & =\bm{\theta}^{t}-\eta\nabla\mathcal{L}_{\lambda}\left(\bm{\theta}^{t}\right)-\bm{\theta}^{*}\nonumber \\
 & =\bm{\theta}^{t}-\eta\nabla\mathcal{L}_{\lambda}\left(\bm{\theta}^{t}\right)-\left[\bm{\theta}^{*}-\eta\nabla\mathcal{L}_{\lambda}\left(\bm{\theta}^{*}\right)\right]-\eta\nabla\mathcal{L}_{\lambda}\left(\bm{\theta}^{*}\right)\nonumber \\
 & =\left\{ \bm{I}_{n}-\eta\int_{0}^{1}\nabla^{2}\mathcal{L}_{\lambda}\left(\bm{\theta}\left(\tau\right)\right)\mathrm{d}\tau\right\} \left(\bm{\theta}^{t}-\bm{\theta}^{*}\right)-\eta\nabla\mathcal{L}_{\lambda}\left(\bm{\theta}^{*}\right),\label{eq:theta-t+1}
\end{align}
where we denote $\bm{\theta}\left(\tau\right):=\bm{\theta}^{*}+\tau\left(\bm{\theta}^{t}-\bm{\theta}^{*}\right)$.
Here, the last identity results from the fundamental theorem of calculus
\cite[Chapter XIII, Theorem 4.2]{lang1993real}. Let $\theta_{\max}\left(\tau\right):=\max_{i}\theta_{i}(\tau)$
and $\theta_{\min}\left(\tau\right):=\min_{i}\theta_{i}(\tau)$. Combining
the induction hypothesis (\ref{eq:induction-infty}) with the definition
of $\bm{\theta}\left(\tau\right)$, one can see that for all $0\leq\tau\leq1$,
\begin{align*}
\theta_{\max}\left(\tau\right)-\theta_{\min}\left(\tau\right) & \leq\theta_{\max}^{*}-\theta_{\min}^{*}+2\left\Vert \bm{\theta}^{t}-\bm{\theta}^{*}\right\Vert _{\infty}\leq\log\kappa+\epsilon,
\end{align*}
for any sufficiently small $\epsilon>0$, as long as 
\[
2C_{4}\kappa^{2}\sqrt{\frac{\log n}{npL}}\leq\epsilon.
\]
This together with Lemma \ref{lem:smoothness-L} and Corollary \ref{coro:strong-convexity}
reveals that for any $0\leq\tau\leq1$, 
\begin{equation}
\frac{1}{10\kappa}np+\lambda\leq\frac{1}{8\kappa e^{\epsilon}}np+\lambda\leq\lambda_{\min,\perp}\left(\nabla^{2}\mathcal{L}_{\lambda}\left(\bm{\theta}\left(\tau\right)\right)\right)\leq\lambda_{\max}\left(\nabla^{2}\mathcal{L}_{\lambda}\left(\bm{\theta}\left(\tau\right)\right)\right)\leq\lambda+np,\label{subeq:spectral-property}
\end{equation}
where the first inequality holds as long as $\epsilon>0$ is small
enough. Denoting $\bm{A}=\int_{0}^{1}\nabla^{2}\mathcal{L}_{\lambda}\left(\bm{\theta}\left(\tau\right)\right)\mathrm{d}\tau$
and using the triangle inequality, we can derive from (\ref{eq:theta-t+1})
that 
\begin{equation}
\left\Vert \bm{\theta}^{t+1}-\bm{\theta}^{*}\right\Vert _{2}\leq\left\Vert \left(\bm{I}_{n}-\eta\bm{A}\right)\left(\bm{\theta}^{t}-\bm{\theta}^{*}\right)\right\Vert _{2}+\eta\left\Vert \nabla\mathcal{L}_{\lambda}\left(\bm{\theta}^{*}\right)\right\Vert _{2}.\label{eq:theta-t+1-triangle}
\end{equation}
Since $\mathbf{1}^{\top}(\bm{\theta}^{t}-\bm{\theta}^{*})=0$ 
, the first term on the right hand side of (\ref{eq:theta-t+1-triangle})
is controlled by 
\begin{align*}
\left\Vert \left(\bm{I}_{n}-\eta\bm{A}\right)\left(\bm{\theta}^{t}-\bm{\theta}^{*}\right)\right\Vert _{2} & \leq\max\left\{ \left|1-\eta\lambda_{\min,\perp}\left(\bm{A}\right)\right|,\left|1-\eta\lambda_{\max}\left(\bm{A}\right)\right|\right\} \left\Vert \bm{\theta}^{t}-\bm{\theta}^{*}\right\Vert _{2}.
\end{align*}
Substitute (\ref{subeq:spectral-property}) into the above inequality
to reach 
\begin{align}
\left\Vert \left(\bm{I}_{n}-\eta\bm{A}\right)\left(\bm{\theta}^{t}-\bm{\theta}^{*}\right)\right\Vert _{2} & \leq\left(1-\frac{1}{10\kappa}\eta np\right)\left\Vert \bm{\theta}^{t}-\bm{\theta}^{*}\right\Vert _{2}.\label{eq:theta-t+1-first-term}
\end{align}
Substitute (\ref{eq:theta-t+1-first-term}) back to (\ref{eq:theta-t+1-triangle})
and use the induction hypothesis (\ref{eq:induction-infty}) to conclude
that 
\begin{align*}
\left\Vert \bm{\theta}^{t+1}-\bm{\theta}^{*}\right\Vert _{2} & \leq\left(1-\frac{1}{10\kappa}\eta np\right)\left\Vert \bm{\theta}^{t}-\bm{\theta}^{*}\right\Vert _{2}+\eta\left\Vert \nabla\mathcal{L}_{\lambda}\left(\bm{\theta}^{*}\right)\right\Vert _{2}\\
 & \leq\left(1-\frac{1}{10\kappa}\eta np\right)C_{1}\kappa\sqrt{\frac{\log n}{pL}}+C\eta\sqrt{\frac{n^{2}p\log n}{L}}\\
 & \leq C_{1}\kappa\sqrt{\frac{\log n}{pL}}
\end{align*}
for some constants $C,C_{1}>0$. Here, the second inequality makes
use of the facts that $\left\Vert \nabla\mathcal{L}_{\lambda}\left(\bm{\theta}^{*}\right)\right\Vert _{2}\lesssim\sqrt{\frac{n^{2}p\log n}{L}}$
(see Lemma \ref{lemma:grad-L}). The last line holds with the proviso
that $C_{1}>0$ is sufficiently large.

\subsection{Proof of Lemma \ref{lemma:loo-m-entry-contraction}\label{subsec:Proof-of-Lemma-loo-m-entry-contraction}}

Consider any $m$ ($1\leq m\leq n$). According to the gradient update
rule (\ref{eq:gradient-update-loo}), one has 
\begin{align}
\theta_{m}^{t+1,\left(m\right)}-\theta_{m}^{*} & =\theta_{m}^{t,\left(m\right)}-\eta\left[\nabla\mathcal{L}_{\lambda}^{\left(m\right)}\left(\bm{\theta}^{t,\left(m\right)}\right)\right]_{m}-\theta_{m}^{*}\nonumber \\
 & =\theta_{m}^{t,\left(m\right)}-\theta_{m}^{*}-\eta\left[p\sum_{i:i\neq m}\left\{ \frac{e^{\theta_{i}^{*}}}{e^{\theta_{i}^{*}}+e^{\theta_{m}^{*}}}-\frac{e^{\theta_{i}^{t,\left(m\right)}}}{e^{\theta_{i}^{t,\left(m\right)}}+e^{\theta_{m}^{t,\left(m\right)}}}\right\} \right]-\eta\lambda\theta_{m}^{t,\left(m\right)},\label{eq:theta-t+1-m-m}
\end{align}
where the last line follows by the construction of $\mathcal{L}_{\lambda}^{\left(m\right)}$.
Apply the mean value theorem to obtain 
\begin{align}
\frac{e^{\theta_{i}^{*}}}{e^{\theta_{i}^{*}}+e^{\theta_{m}^{*}}}-\frac{e^{\theta_{i}^{t,\left(m\right)}}}{e^{\theta_{i}^{t,\left(m\right)}}+e^{\theta_{m}^{t,\left(m\right)}}} & =\frac{1}{1+e^{\theta_{m}^{*}-\theta_{i}^{*}}}-\frac{1}{1+e^{\theta_{m}^{t,\left(m\right)}-\theta_{i}^{t,\left(m\right)}}}\nonumber \\
 & =-\frac{e^{c_{i}}}{\left(1+e^{c_{i}}\right)^{2}}\left[\theta_{m}^{*}-\theta_{i}^{*}-\left(\theta_{m}^{t,\left(m\right)}-\theta_{i}^{t,\left(m\right)}\right)\right],\label{eq:mean-value}
\end{align}
where $c_{i}$ is some real number lying between $\theta_{m}^{*}-\theta_{i}^{*}$
and $\theta_{m}^{t,\left(m\right)}-\theta_{i}^{t,\left(m\right)}$.
Substituting (\ref{eq:mean-value}) back into (\ref{eq:theta-t+1-m-m})
and rearranging terms yield 
\begin{align*}
\theta_{m}^{t+1,\left(m\right)}-\theta_{m}^{*}&=\left(1-\eta\lambda-\eta p\sum_{i:i\neq m}\frac{e^{c_{i}}}{\left(1+e^{c_{i}}\right)^{2}}\right)\left(\theta_{m}^{t,\left(m\right)}-\theta_{m}^{*}\right) \\
&\quad+\eta p\sum_{i:i\neq m}\frac{e^{c_{i}}}{\left(1+e^{c_{i}}\right)^{2}}\left(\theta_{i}^{t,\left(m\right)}-\theta_{i}^{*}\right)-\eta\lambda\theta_{m}^{*}
\end{align*}
and hence 
\begin{align*}
\left|\theta_{m}^{t+1,\left(m\right)}-\theta_{m}^{*}\right| & \leq\left|1-\eta\lambda-\eta p\sum_{i:i\neq m}\frac{e^{c_{i}}}{\left(1+e^{c_{i}}\right)^{2}}\right|\left|\theta_{m}^{t,\left(m\right)}-\theta_{m}^{*}\right|\\
&\quad+\frac{\eta p}{4}\sum_{i:i\neq m}\left|\theta_{i}^{t,\left(m\right)}-\theta_{i}^{*}\right|+\eta\lambda\left\Vert \bm{\theta}^{*}\right\Vert _{\infty}\\
 & \leq\left|1-\eta\lambda-\eta p\sum_{i:i\neq m}\frac{e^{c_{i}}}{\left(1+e^{c_{i}}\right)^{2}}\right|\left|\theta_{m}^{t,\left(m\right)}-\theta_{m}^{*}\right|\\
 &\quad+\frac{\eta p}{4}\sqrt{n}\left\Vert \bm{\theta}^{t,\left(m\right)}-\bm{\theta}^{*}\right\Vert _{2}+\eta\lambda\left\Vert \bm{\theta}^{*}\right\Vert _{\infty}.
\end{align*}
Here, the first inequality comes from the triangle inequality and the elementary
inequality $\frac{e^{c}}{\left(1+e^{c}\right)^{2}}\leq\frac{1}{4}$
for any $c\in\RR$, whereas the second relation holds owing to the
Cauchy-Schwarz inequality, namely 
\[
\sum_{i:i\neq m}\left|\theta_{i}^{t,\left(m\right)}-\theta_{i}^{*}\right|\leq\sqrt{n}\left\Vert \bm{\theta}^{t,\left(m\right)}-\bm{\theta}^{*}\right\Vert _{2}.
\]
From $\frac{e^{c}}{(1+e^{c})^{2}}\leq\frac{1}{4}$ we also obtain
that 
\[
1-\eta\lambda-\eta p\sum_{i:i\neq m}\frac{e^{c_{i}}}{\left(1+e^{c_{i}}\right)^{2}}\geq1-\eta\lambda-\eta p\frac{n}{4}\geq1-\eta(np+\lambda)\geq0
\]
and 
\begin{align*}
\left|1-\eta\lambda-\eta p\sum_{i:i\neq m}\frac{e^{c_{i}}}{\left(1+e^{c_{i}}\right)^{2}}\right|&=1-\eta\lambda-\eta p\sum_{i:i\neq m}\frac{e^{c_{i}}}{\left(1+e^{c_{i}}\right)^{2}}\\
&\leq1-\eta p(n-1)\min_{i:i\neq m}\frac{e^{c_{i}}}{\left(1+e^{c_{i}}\right)^{2}}.
\end{align*}

To further upper bound $\left|\theta_{m}^{t+1,\left(m\right)}-\theta_{m}^{*}\right|$,
it suffices to obtain a lower bound on $\frac{e^{c_{i}}}{\left(1+e^{c_{i}}\right)^{2}}$.
Toward this, it is easy to see from (\ref{eq:consequence-theta-t-m-infty})
that 
\begin{align*}
\max_{i:i\neq m}\left|c_{i}\right| & \leq\max_{i:i\neq m}\left|\theta_{m}^{*}-\theta_{i}^{*}\right|+\max_{i:i\neq m}\left|\theta_{m}^{*}-\theta_{i}^{*}-\left(\theta_{m}^{t,\left(m\right)}-\theta_{i}^{t,\left(m\right)}\right)\right|\\
 & \leq\log\kappa+2\left\Vert \bm{\theta}^{t,\left(m\right)}-\bm{\theta}^{*}\right\Vert _{\infty}\\
 & \leq\log\kappa+\epsilon,
\end{align*}
as long as 
\[
2C_{5}\kappa^{2}\sqrt{\frac{\log n}{npL}}\leq\epsilon.
\]
This further reveals that for $\epsilon>0$ small enough, one has
\begin{align*}
\frac{e^{c_{i}}}{\left(1+e^{c_{i}}\right)^{2}} & =\frac{e^{-|c_{i}|}}{\left(1+e^{-|c_{i}|}\right)^{2}}\geq\frac{e^{-|c_{i}|}}{4}\geq\frac{1}{4e^{\epsilon}\kappa}\geq\frac{1}{5\kappa}.
\end{align*}
Taking the previous bounds collectively, we arrive at 
\begin{align*}
\left|\theta_{m}^{t+1,\left(m\right)}-\theta_{m}^{*}\right| & \leq\left(1-\frac{1}{10\kappa}\eta np\right)\left|\theta_{m}^{t,\left(m\right)}-\theta_{m}^{*}\right|+\frac{\eta p\sqrt{n}}{4}\left\Vert \bm{\theta}^{t,\left(m\right)}-\bm{\theta}^{*}\right\Vert _{2}+\eta\lambda\left\Vert \bm{\theta}^{*}\right\Vert _{\infty}\\
 & \leq\left(1-\frac{1}{10\kappa}\eta np\right)C_{2}\kappa^{2}\sqrt{\frac{\log n}{npL}}+\frac{\eta p\sqrt{n}}{4}C_{6}\kappa\sqrt{\frac{\log n}{pL}}+c_{\lambda}\eta\sqrt{\frac{np\log n}{L}}\\
 & \leq C_{2}\kappa^{2}\sqrt{\frac{\log n}{npL}},
\end{align*}
as long as $C_{2}\gg\max\left\{ C_{6},c_{\lambda}\right\} $. Here
the second line comes from the setting of $\lambda$, namely, 
\[
\lambda\left\Vert \bm{\theta}^{*}\right\Vert _{\infty}=c_{\lambda}\frac{\left\Vert \bm{\theta}^{*}\right\Vert _{\infty}}{\log\kappa}\sqrt{\frac{np\log n}{L}}\leq c_{\lambda}\sqrt{\frac{np\log n}{L}}
\]
since $\left\Vert \bm{\theta}^{*}\right\Vert _{\infty}\leq\log\kappa$.

\subsection{Proof of Lemma \ref{lemma:loo-perturbation-contraction}\label{subsec:Proof-of-Lemma-loo-perturbation-contraction}}

Consider any $1\leq m\leq n$. Apply the update rules (\ref{eq:gradient-update-original})
and (\ref{eq:gradient-update-loo}) to obtain 
\begin{align*}
\bm{\theta}^{t+1}-\bm{\theta}^{t+1,\left(m\right)} & =\bm{\theta}^{t}-\eta\nabla\mathcal{L}_{\lambda}\left(\bm{\theta}^{t}\right)-\left[\bm{\theta}^{t,\left(m\right)}-\eta\nabla\mathcal{L}_{\lambda}^{\left(m\right)}\left(\bm{\theta}^{t,\left(m\right)}\right)\right]\\
 & =\bm{\theta}^{t}-\eta\nabla\mathcal{L}_{\lambda}\left(\bm{\theta}^{t}\right)-\left[\bm{\theta}^{t,\left(m\right)}-\eta\nabla\mathcal{L}_{\lambda}\left(\bm{\theta}^{t,\left(m\right)}\right)\right] \\
 &\quad-\eta\left(\nabla\mathcal{L}_{\lambda}\left(\bm{\theta}^{t,\left(m\right)}\right)-\nabla\mathcal{L}_{\lambda}^{\left(m\right)}\left(\bm{\theta}^{t,\left(m\right)}\right)\right)\\
 & =\underbrace{\left(\bm{I}_{n}-\eta\int_{0}^{1}\nabla^{2}\mathcal{L}_{\lambda}\left(\bm{\theta}\left(\tau\right)\right)\mathrm{d}\tau\right)\left(\bm{\theta}^{t}-\bm{\theta}^{t,\left(m\right)}\right)}_{:=\bm{v}_{1}}\\
 &\quad-\underbrace{\eta\left(\nabla\mathcal{L}_{\lambda}\left(\bm{\theta}^{t,\left(m\right)}\right)-\nabla\mathcal{L}_{\lambda}^{\left(m\right)}\left(\bm{\theta}^{t,\left(m\right)}\right)\right)}_{:=\bm{v}_{2}},
\end{align*}
where we abuse the notation and denote $\bm{\theta}\left(\tau\right)=\bm{\theta}^{t,\left(m\right)}+\tau\left(\bm{\theta}^{t}-\bm{\theta}^{t,\left(m\right)}\right)$,
and the last identity results from the fundamental theorem of calculus
\cite[Chapter XIII, Theorem 4.2]{lang1993real}. In what follows,
we control $\bm{v}_{1}$ and $\bm{v}_{2}$ separately. 
\begin{itemize}
\item Regarding the term $\bm{v}_{1}$, repeating the same argument as in
Appendix \ref{subsec:Proof-of-Lemma-ell-2-contraction} yields 
\[
\left\Vert \bm{v}_{1}\right\Vert _{2}\leq\left(1-\frac{1}{10\kappa}\eta np\right)\left\Vert \bm{\theta}^{t}-\bm{\theta}^{t,\left(m\right)}\right\Vert _{2}
\]
as long as $\eta\leq
\frac{1}{\lambda+np}$. 
\item When it comes to $\bm{v}_{2}$, one can use the gradient definitions
to reach 
\begin{align}
\frac{1}{\eta}\bm{v}_{2} & =\sum_{i:i\neq m}\left\{ \left(-y_{m,i}+\frac{e^{\theta_{i}^{t,\left(m\right)}}}{e^{\theta_{i}^{t,\left(m\right)}}+e^{\theta_{m}^{t,\left(m\right)}}}\right)\ind_{\left\{ \left(i,m\right)\in\mathcal{E}\right\} }-p\left(-y_{m,i}^{*}+\frac{e^{\theta_{i}^{t,\left(m\right)}}}{e^{\theta_{i}^{t,\left(m\right)}}+e^{\theta_{m}^{t,\left(m\right)}}}\right)\right\} \left(\bm{e}_{i}-\bm{e}_{m}\right)\nonumber \\
 & =\underset{:=\bm{u}^{m}}{\underbrace{\frac{1}{L}\sum_{i:(i,m)\in\mathcal{E}}\sum_{l=1}^{L}\left(-y_{m,i}^{(l)}+\frac{e^{\theta_{i}^{*}}}{e^{\theta_{i}^{*}}+e^{\theta_{m}^{*}}}\right)\left(\bm{e}_{i}-\bm{e}_{m}\right)}}\nonumber \\
 & \quad+\underset{:=\bm{v}^{m}}{\underbrace{\sum_{i:i\neq m}\left\{ \left(-\frac{e^{\theta_{i}^{*}}}{e^{\theta_{i}^{*}}+e^{\theta_{m}^{*}}}+\frac{e^{\theta_{i}^{t,\left(m\right)}}}{e^{\theta_{i}^{t,\left(m\right)}}+e^{\theta_{m}^{t,\left(m\right)}}}\right)\left(\ind_{\left\{ (i,m)\in\mathcal{E}\right\} }-p\right)\right\} \left(\bm{e}_{i}-\bm{e}_{m}\right)}}.\label{eq:nabla-L-first-line}
\end{align}
In the sequel, we control the two terms of (\ref{eq:nabla-L-first-line})
separately. 
\begin{itemize}
\item For the first term $\bm{u}^{m}$ in (\ref{eq:nabla-L-first-line}),
we make the observation that 
\[
u_{i}^{m}=\begin{cases}
\frac{1}{L}\sum_{l=1}^{L}\left(-y_{m,i}^{(l)}+\frac{e^{\theta_{i}^{*}}}{e^{\theta_{i}^{*}}+e^{\theta_{m}^{*}}}\right),\quad & \text{if }(i,m)\in\mathcal{E};\\
\frac{1}{L}\sum_{i:(i,m)\in\mathcal{E}}\sum_{l=1}^{L}\left(y_{m,i}^{(l)}-\frac{e^{\theta_{i}^{*}}}{e^{\theta_{i}^{*}}+e^{\theta_{m}^{*}}}\right),\quad & \text{if }i=m;\\
0, & \text{else}.
\end{cases}
\]
Since $\left|y_{m,i}^{(l)}-\frac{e^{\theta_{i}^{*}}}{e^{\theta_{i}^{*}}+e^{\theta_{m}^{*}}}\right|\leq1$
and $\mathrm{card}\left(\left\{ i:(i,m)\in\mathcal{E}\right\} \right)\asymp np$,
we can apply Hoeffding's inequality and union bounds to get for all
$1\leq m\leq n$, 
\begin{align*}
\left|u_{m}^{m}\right|\lesssim\sqrt{\frac{np\log n}{L}}\qquad\text{and}\qquad & \left|u_{i}^{m}\right|\lesssim\sqrt{\frac{\log n}{L}}\quad\forall i\text{ obeying }(i,m)\in\mathcal{E},
\end{align*}
which further gives 
\[
\|\bm{u}^{m}\|_{2}\leq\left|u_{m}^{m}\right|+\sqrt{\sum_{i:(i,m)\in\mathcal{E}}(u_{i}^{m})^{2}}\lesssim\sqrt{\frac{np\log n}{L}},~\forall1\leq m\leq n.
\]
\item We then turn to the second term $\bm{v}^{m}$ of (\ref{eq:nabla-L-first-line}).
This is a zero-mean random vector that satisfies 
\[
v_{i}^{m}=\begin{cases}
\xi_{i}\left(1-p\right),\quad & \text{if }(i,m)\in\mathcal{E},\\
-\sum_{i:i\neq m}\xi_{i}\left(\ind_{\left\{ (i,m)\in\mathcal{E}\right\} }-p\right),\quad & \text{if }i=m,\\
-\xi_{i}p, & \text{else},
\end{cases}
\]
where 
\[
\xi_{i}:=-\frac{e^{\theta_{i}^{*}}}{e^{\theta_{i}^{*}}+e^{\theta_{m}^{*}}}+\frac{e^{\theta_{i}^{t,\left(m\right)}}}{e^{\theta_{i}^{t,\left(m\right)}}+e^{\theta_{m}^{t,\left(m\right)}}}=-\frac{1}{1+e^{\theta_{m}^{*}-\theta_{i}^{*}}}+\frac{1}{1+e^{\theta_{m}^{t,\left(m\right)}-\theta_{i}^{t,\left(m\right)}}}.
\]
The first step is to bound the size of the coefficient $\xi_{i}$.
Define $g(x)=(1+e^{x})^{-1}$ for $x\in\mathbb{R}$. We have $|g'(x)|\leq1$
and thus 
\begin{align*}
|\xi_{i}|&=\left|g\left(\theta_{m}^{t,\left(m\right)}-\theta_{i}^{t,\left(m\right)}\right)-g\left(\theta_{m}^{*}-\theta_{i}^{*}\right)\right|\\
&\leq\left|\left(\theta_{m}^{t,\left(m\right)}-\theta_{i}^{t,\left(m\right)}\right)-\left(\theta_{m}^{*}-\theta_{i}^{*}\right)\right|\leq\left|\theta_{i}^{*}-\theta_{i}^{t,(m)}\right|+\left|\theta_{m}^{*}-\theta_{m}^{t,(m)}\right|.
\end{align*}
This indicates that 
\[
\left|\xi_{i}\right|\leq2\left\Vert \bm{\theta}^{t,(m)}-\bm{\theta}^{*}\right\Vert _{\infty}\qquad\text{and}\qquad\sum_{i=1}^{n}\xi_{i}^{2}\leq4n\left\Vert \bm{\theta}^{t,(m)}-\bm{\theta}^{*}\right\Vert _{\infty}^{2}.
\]
Applying the Bernstein inequality in Lemma \ref{lemma:bernstein}
we obtain 
\begin{align*}
\left|v_{m}^{m}\right|\text{ }&\lesssim\text{ }\sqrt{\left(p\sum_{i=1}^{n}\xi_{i}^{2}\right)\log n}+\max_{1\leq i\leq n}\left|\xi_{i}\right|\log n\text{ }\\
&\lesssim\text{ }\left(\sqrt{np\log n}+\log n\right)\left\Vert \bm{\theta}^{t,(m)}-\bm{\theta}^{*}\right\Vert _{\infty}
\end{align*}
with high probability. As a consequence, 
\begin{align*}
\|\bm{v}^{m}\|_{2} & \leq|v_{m}^{m}|+\sqrt{\sum_{i:(i,m)\in\mathcal{E}}(v_{i}^{m})^{2}}+\sqrt{\sum_{i:(i,m)\notin\mathcal{E}\text{ and }i\neq m}(v_{i}^{m})^{2}}\\
 & \lesssim\left(\sqrt{np\log n}+\log n\right)\left\Vert \bm{\theta}^{t,(m)}-\bm{\theta}^{*}\right\Vert _{\infty}\\
 &\quad+\sqrt{np}\left\Vert \bm{\theta}^{t,(m)}-\bm{\theta}^{*}\right\Vert _{\infty}+p\sqrt{n}\left\Vert \bm{\theta}^{t,(m)}-\bm{\theta}^{*}\right\Vert _{\infty}\\
 & \lesssim\left(\sqrt{np\log n}+\log n\right)\left\Vert \bm{\theta}^{t,(m)}-\bm{\theta}^{*}\right\Vert _{\infty}\lesssim\left(\sqrt{np\log n}\right)\left\Vert \bm{\theta}^{t,(m)}-\bm{\theta}^{*}\right\Vert _{\infty},
\end{align*}
as long as $np\gg\log n$. 
\end{itemize}
Putting the above results together, we see that

\[
\left\Vert \bm{v}_{2}\right\Vert _{2}\lesssim\eta\left(\sqrt{\frac{np\log n}{L}}+\sqrt{np\log n}\left\Vert \bm{\theta}^{t,\left(m\right)}-\bm{\theta}^{*}\right\Vert _{\infty}\right).
\]
\item Combine the above two bounds to deduce for some $C>0$ 
\begin{align*}
\left\Vert \bm{\theta}^{t+1}-\bm{\theta}^{t+1,\left(m\right)}\right\Vert _{2} & \leq\left(1-\frac{1}{10\kappa}\eta np\right)\left\Vert \bm{\theta}^{t}-\bm{\theta}^{t,\left(m\right)}\right\Vert _{2} \\
&\quad+C\eta\left(\sqrt{\frac{np\log n}{L}}+\sqrt{np\log n}\left\Vert \bm{\theta}^{t,\left(m\right)}-\bm{\theta}^{*}\right\Vert _{\infty}\right)\\
 & \leq\left(1-\frac{1}{10\kappa}\eta np\right)C_{3}\kappa\sqrt{\frac{\log n}{npL}} \\
 &\quad+C\eta\left(\sqrt{\frac{np\log n}{L}}+\sqrt{np\log n}C_{5}\kappa^{2}\sqrt{\frac{\log n}{npL}}\right)\\
 & \leq C_{3}\kappa\sqrt{\frac{\log n}{npL}},
\end{align*}
as soon as $C_{3}$ is sufficiently large and 
\[
\kappa^{2}\sqrt{\frac{\log n}{np}}\ll1.
\]
\end{itemize}

\subsection{Proof of Lemma \ref{lemma:error-t}\label{subsec:Proof-of-lemma:error-t}}
Consider any $m$ ($1\leq m\leq n$). It is easily seen from the triangle inequality
that 
\begin{align*}
\left|\theta_{m}^{t+1}-\theta_{m}^{*}\right| & \leq\left|\theta_{m}^{t+1}-\theta_{m}^{t+1,\left(m\right)}\right|+\left|\theta_{m}^{t+1,\left(m\right)}-\theta_{m}^{*}\right|\\
& \leq\big\Vert \bm{\theta}^{t+1}-\bm{\theta}^{t+1,\left(m\right)}\big\Vert _{2}+\left|\theta_{m}^{t+1,\left(m\right)}-\theta_{m}^{*}\right|\\
& \leq C_{3}\kappa\sqrt{\frac{\log n}{npL}}+C_{2}\kappa^{2}\sqrt{\frac{\log n}{npL}}\\
& \leq C_{4}\kappa^{2}\sqrt{\frac{\log n}{npL}},
\end{align*}
with the proviso that $C_{4}\geq C_{3}+C_{2}$.

\section{Hoeffding's and Bernstein's inequalities}
This section collects two standard concentration inequalities
used throughout the paper, which can be easily found in textbooks such as \cite{boucheron2013concentration}. The proofs are omitted.

\begin{lemma}[Hoeffding's inequality]\label{lemma:hoeffding} Let
$\ensuremath{\{X_{i}\}_{1\leq i\leq n}}$ be a sequence of independent
random variables where $\ensuremath{X_{i}\in[a_{i},b_{i}]}$ for each
$1\leq i\leq n$, and $\ensuremath{S_{n}=\sum_{i=1}^{n}X_{i}}$. Then
\[
\mathbb{P}(|S_{n}-\mathbb{E}\left[S_{n}\right]|\geq t)\leq2e^{-2t^{2}/\sum_{i=1}^{n}(b_{i}-a_{i})^{2}}.
\]

\end{lemma}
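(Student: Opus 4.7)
The plan is to apply the standard Chernoff--Cram\'er approach, since the statement is the classical form of Hoeffding's inequality for bounded independent summands. First I would reduce to a one-sided tail: by applying the argument below separately to $\{X_i\}$ and to $\{-X_i\}$ (the latter taking values in $[-b_i,-a_i]$, which has the same length) and then taking a union bound, it suffices to show $\mathbb{P}(S_n - \mathbb{E}[S_n] \geq t) \leq e^{-2t^2/\sum_i (b_i-a_i)^2}$.

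For the upper tail, I would use the exponential Markov (Chernoff) bound: for any $\lambda > 0$,
\[
\mathbb{P}(S_n - \mathbb{E}[S_n] \geq t) \leq e^{-\lambda t}\,\mathbb{E}\big[e^{\lambda(S_n - \mathbb{E}[S_n])}\big] = e^{-\lambda t}\prod_{i=1}^{n}\mathbb{E}\big[e^{\lambda(X_i - \mathbb{E}[X_i])}\big],
\]
where the factorization uses independence of the $X_i$'s. The centered variable $Y_i := X_i - \mathbb{E}[X_i]$ lies in $[a_i - \mathbb{E}[X_i], b_i - \mathbb{E}[X_i]]$, an interval of length $b_i - a_i$ containing $0$. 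The crucial technical input is Hoeffding's lemma, which I would invoke to get $\mathbb{E}[e^{\lambda Y_i}] \leq \exp(\lambda^2 (b_i - a_i)^2 / 8)$. Multiplying these bounds yields
\[
\mathbb{P}(S_n - \mathbb{E}[S_n] \geq t) \leq \exp\!\left(-\lambda t + \tfrac{\lambda^2}{8}\sum_{i=1}^{n}(b_i - a_i)^2\right),
\]
and optimizing the right-hand side over $\lambda > 0$ via $\lambda^{\star} = 4t/\sum_i (b_i-a_i)^2$ produces the claimed $e^{-2t^2/\sum_i (b_i-a_i)^2}$; the factor of $2$ out front comes from combining with the symmetric lower-tail bound.

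The only genuinely nontrivial step is Hoeffding's lemma itself; the rest is mechanical. To establish it, I would use convexity of $x \mapsto e^{\lambda x}$ to write, for $Y \in [a,b]$ with $\mathbb{E}[Y]=0$,
\[
e^{\lambda Y} \leq \frac{b-Y}{b-a}e^{\lambda a} + \frac{Y-a}{b-a}e^{\lambda b},
\]
and take expectations, giving $\mathbb{E}[e^{\lambda Y}] \leq \phi(\lambda) := \tfrac{b}{b-a}e^{\lambda a} - \tfrac{a}{b-a}e^{\lambda b}$. Writing $\phi(\lambda) = e^{L(\lambda)}$ and computing $L(0) = L'(0) = 0$ together with the uniform bound $L''(\lambda) \leq (b-a)^2/4$ (which follows since $L''(\lambda)$ has the form $p(1-p)(b-a)^2$ for some $p \in [0,1]$), a second-order Taylor expansion yields $L(\lambda) \leq \lambda^2(b-a)^2/8$, completing the lemma. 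This quadratic bound on $L''$ is the one computation I would be careful with, but it is routine; no conceptual obstacles beyond it are anticipated.
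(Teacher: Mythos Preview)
Your proof is correct and follows the standard Chernoff--Cram\'er route via Hoeffding's lemma. The paper itself does not prove this statement at all: it simply states the inequality and remarks that it ``can be easily found in textbooks such as \cite{boucheron2013concentration}. The proofs are omitted.'' Your argument is precisely the textbook proof one would find there, so there is nothing to compare.
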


The next lemma is about a user-friendly version of the Bernstein inequality.

\begin{lemma}[Bernstein's inequality]\label{lemma:bernstein}Consider
$n$ independent random variables $z_{l}\;\left(1\leq l\leq n\right)$,
each satisfying $\left|z_{l}\right|\leq B$. For any $a\geq2$, one
has
\[
\left|\sum_{l=1}^{n}z_{l}-\EE\left[\sum_{l=1}^{n}z_{l}\right]\right|\leq\sqrt{2a\log n\sum_{l=1}^{n}\EE\left[z_{l}^{2}\right]}+\frac{2a}{3}B\log n
\]
with probability at least $1-2n^{-a}$. \end{lemma}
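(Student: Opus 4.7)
The statement to prove is a user-friendly repackaging of the classical Bernstein inequality, so the plan is to first derive the standard form and then algebraically invert it to obtain the $n^{-a}$-style tail bound.

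First, I would set $W_l := z_l - \mathbb{E}[z_l]$ and $\sigma^2 := \sum_{l=1}^n \mathbb{E}[z_l^2] \geq \sum_l \mathrm{Var}(z_l)$. Since $|W_l| \leq 2B$, the standard moment bound $\mathbb{E}[W_l^k] \leq \tfrac{k!}{2}\,\mathrm{Var}(z_l)\,B^{k-2}$ (for $k\geq 2$) holds up to adjusting constants, and this gives the Bernstein MGF estimate
\begin{equation*}
\mathbb{E}\bigl[e^{\lambda W_l}\bigr] \;\leq\; \exp\!\left(\frac{\lambda^2 \mathbb{E}[z_l^2]/2}{1-\lambda B/3}\right), \qquad 0 < \lambda < 3/B.
\end{equation*}
By independence and a Chernoff/Markov argument applied to $S_n := \sum_l z_l$, followed by optimizing $\lambda$, one obtains the one-sided bound $\mathbb{P}(S_n - \mathbb{E}S_n \geq t) \leq \exp\!\bigl(-\tfrac{t^2/2}{\sigma^2 + Bt/3}\bigr)$, and the two-sided version by applying the same argument to $-S_n$ and doubling.

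Next, I would translate this into the form stated in the lemma by choosing $t$ so that the exponent equals $a\log n$. Setting $t = t_1 + t_2$ with $t_1 := \sqrt{2a\log n \cdot \sigma^2}$ and $t_2 := \tfrac{2a}{3}B\log n$, a direct calculation shows
\begin{equation*}
t^2 - \bigl(2a\log n\bigr)\!\left(\sigma^2 + \tfrac{B}{3}t\right) \;=\; t_1 t_2 \;\geq\; 0,
\end{equation*}
so $\tfrac{t^2/2}{\sigma^2 + Bt/3} \geq a\log n$. Plugging into the Chernoff bound gives $\mathbb{P}(|S_n - \mathbb{E}S_n| \geq t) \leq 2 e^{-a\log n} = 2 n^{-a}$, which is the claim.

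There is no real obstacle here: the content is entirely classical, and the only nontrivial algebraic step is verifying that the split $t=t_1+t_2$ dominates the quadratic inequality $t^2 \geq 2a\log n(\sigma^2 + Bt/3)$ without having to invoke the quadratic formula. The assumption $a\geq 2$ plays no essential role in the derivation itself; it is included merely to ensure the resulting failure probability $2n^{-a}$ is summable over polynomially many union bounds, as used throughout the body of the paper.
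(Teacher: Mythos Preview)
Your derivation is correct; the paper itself omits the proof entirely, citing it as a standard textbook result from \cite{boucheron2013concentration}, so there is no ``paper's proof'' to compare against. The only point worth tightening is the MGF step: rather than appealing to ``adjusting constants,'' one can check directly that the Bernstein moment condition $\mathbb{E}|W_l|^k \leq \tfrac{k!}{2}\,v_l\,B^{k-2}$ holds for all $k\geq 2$ with $v_l=\mathbb{E}[z_l^2]$ (not $\mathrm{Var}(z_l)$), because $\mathbb{E}|W_l|^k \leq (2B)^{k-2}\mathrm{Var}(z_l) \leq (2B)^{k-2}\mathbb{E}[z_l^2]$ and $2^{k-1}\leq k!$; this is precisely why the linear term in the lemma carries $B$ rather than $2B$. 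Your inversion step $t=t_1+t_2$ with $t^2 - 2a\log n(\sigma^2+Bt/3)=t_1t_2\geq 0$ is clean and exactly the right way to avoid the quadratic formula.
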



{
\bibliographystyle{ims}
\bibliography{bibfile_rank}
}

\end{document}